  \providecommand{\R}{\mathbb{R}} % Reals
  \providecommand{\Prob}[2]{{\rm Pr}_{#1}\left[#2\right] }
  \renewcommand{\epsilon}{\varepsilon}
  \providecommand{\mH}{\mathbf{H}}
  \providecommand{\mP}{\mathbf{P}}
\theoremstyle{plain}
\newtheorem{theorem}{Theorem}[section]
\newtheorem{lemma}[theorem]{Lemma}
\theoremstyle{definition}
\newtheorem{assumption}[theorem]{Assumption}
\theoremstyle{remark}
\newtheorem{remark}[theorem]{Remark}
\DeclarePairedDelimiterX{\inp}[2]{\langle}{\rangle}{#1, #2}
\DeclarePairedDelimiterX{\abs}[1]{\lvert}{\rvert}{#1}
\DeclarePairedDelimiterX{\norm}[1]{\lVert}{\rVert}{#1}
\DeclarePairedDelimiterX{\cbr}[1]{\{}{\}}{#1} % curly bracket
\DeclarePairedDelimiterX{\rbr}[1]{(}{)}{#1} % round bracket
\DeclarePairedDelimiterX{\sbr}[1]{[}{]}{#1} % square bracket
\definecolor{mydarkblue}{rgb}{0,0.08,0.45}
\newcommand{\cA}{{\cal A}}
\newcommand{\cD}{{\cal D}}
\newcommand{\cM}{{\cal M}}
\newcommand{\cO}{{\cal O}}
\newcommand{\cS}{{\cal S}}
\newcommand{\eqdef}{\overset{\text{def}}{=}} 
\providecommand{\trace}[1]{{\rm Trace}\left( #1\right)}
\newcommand{\Exp}[1]{{\rm E}\left[#1\right] }    % expectation
\newcommand{\E}[1]{{\bf E}\left[#1\right] } 
\newcommand{\EE}[2]{{\bf E}_{#1}\left[#2\right] }
\newcommand{\cmark}{\color{green} \ding{51}}%
\newcommand{\xmark}{\color{red} \ding{55}}%
\NewDocumentCommand{\rot}{O{45} O{1em} m}{\makebox[#2][l]{\rotatebox{#1}{#3}}}%
\newcommand{\sgd}{{\sc SGD}\xspace}
\newcommand{\fedavg}{{\sc FedAvg}\xspace}
\newcommand{\fedmin}{{\sc FedAvgMin}\xspace}
\newcommand{\fedmean}{{\sc FedAvgMean}\xspace}
\newcommand{\fedavgrr}{{\sc FedAvgRR}\xspace}
\newcommand{\fedshuffle}{{\sc FedShuffle}\xspace}
\newcommand{\fedshufflemvr}{{\sc FedShuffleMVR}\xspace}
\newcommand{\fedgen}{{\sc FedShuffleGen}\xspace}
\newcommand{\mvr}{{\sc MVR}\xspace}
\newcommand{\fedmvr}{{\sc FedShuffleMVR}\xspace}
\newcommand{\fednova}{{\sc FedNova}\xspace}
\newcommand{\fedlin}{{\sc FedLin}\xspace}
\newcommand{\fednovarr}{{\sc FedNovaRR}\xspace}
\newcommand{\fedrr}{{\sc FedRR}\xspace}
\newcommand{\locrr}{{\sc LocalRR}\xspace}
\newcommand{\gd}{{\sc GD}\xspace}
\newcommand{\hf}{\hat{f}}
\newcommand{\hfs}{\hat{f}^\star}
\newcommand{\hx}{\hat{x}^\star}
\newcommand{\tw}{\tilde{w}}
\newcommand{\hw}{\hat{w}}
\title{\fedshuffle:  Recipes for Better Use of Local Work \\ in Federated Learning}
\author{\name Samuel Horv\'{a}th \email samuel.horvath@mbzuai.ac.ae \\
      \addr MBZUAI\thanks{Majority of work completed during an internship at Meta.}
      \ANDD
      \name Maziar Sanjabi \email maziars@fb.com \\
      \addr Meta AI
      \ANDD
      \name Lin Xiao \email linx@fb.com\\
      \addr Meta AI
      \ANDD
      \name Peter Richt\'{a}rik \email richtarik@gmail.com \\
      \addr KAUST
      \ANDD
      \name Michael Rabbat \email mikerabbat@fb.com \\
      \addr META AI}
\begin{document}

\maketitle

\begin{abstract}
The practice of applying several local updates before aggregation across clients has been empirically shown to be a successful approach to overcoming the communication bottleneck in Federated Learning (FL). 
Such methods are usually implemented by having clients perform one or more epochs of local training per round, while randomly reshuffling their finite dataset in each epoch. Data imbalance, where clients have different numbers of local training samples, is ubiquitous in FL applications, resulting in different clients performing different numbers of local updates in each round.
In this work, we propose a general recipe, \fedshuffle, that better utilizes the local updates in FL, especially in this regime encompassing random reshuffling and heterogeneity.
\fedshuffle is the first local update method with theoretical convergence guarantees that incorporates random reshuffling, data imbalance, and client sampling --- features that are essential in large-scale cross-device FL.
We present a comprehensive theoretical analysis of \fedshuffle and show, both theoretically and empirically, that it does not suffer from the objective function mismatch that is present in FL methods that assume homogeneous updates in heterogeneous FL setups, such as \fedavg~\citep{mcmahan2017communication}. In addition, by combining the ingredients above, \fedshuffle improves upon \fednova~\citep{wang2020tackling}, which was previously proposed to solve this mismatch. Similar to \textsc{Mime}~\citep{karimireddy2020mime}, we show that \fedshuffle with momentum variance reduction~\citep{cutkosky2019momentum} improves upon non-local methods under a Hessian similarity assumption.
\end{abstract}

\section{Introduction}
\emph{Federated learning} (FL) aims to train models in a decentralized manner, preserving the privacy of client data by leveraging edge-device computational capabilities. Clients' data never leaves their devices; instead, the clients coordinate with a server to train a global model. 
Due to such advantages and promises, 
FL is now deployed in a variety of applications~\citep{hard2018federated, apple19wwdc}.

In this paper, we consider the standard FL problem formulation of solving an empirical risk minimization problem over the data available from all devices; i.e., 
\begin{align}
    \label{eq:objective}
    \min_{x \in \R^d} \sbr*{f(x) \eqdef \sum_{i = 1}^n w_i f_i(x)},\; \text{where } \forall i \in \cbr*{1,\ldots,n}: \;
    f_i\eqdef\frac{1}{|\cD_i|}\sum_{j=1}^{|\cD_i|} f_{ij}(x).
\end{align}
Here $f_{ij}$ corresponds to the loss of a model with parameters $x$ evaluated on the $j$-th data point of the $i$-th client. The weight $w_i$ assigned to device $i$'s empirical risk is $w_i = \nicefrac{|\cD_i|}{|\cD|}$ where $|\cD_i|$ is the size of the training dataset at device $i$ and $|\cD| = \sum_{i=1}^n |\cD_i|$. This choice of weights $w_i$ places equal weight on all training data.\footnote{Although we focus on the setting with weights $w_i = \nicefrac{|\cD_i|}{|\cD|}$ so that the overall objective is equivalent to a standard, centralized empirical risk minimization problem using the data from all devices, this is not essential to our analysis, which could accommodate any choice of $w_i$. Of course, using a different choice of $w_i$ will change the solution.}

To solve~\eqref{eq:objective}, optimization methods must contend with several challenges that are unique to FL: 
\textit{heterogeneity} with respect to data and compute capabilities of each device, \textit{data imbalance} across devices, and \textit{limited device availability}. 
Moreover, in cross-device FL, the number of participating devices $n$ can be on the order of millions. At this scale, \emph{client sampling} (using a subset of clients for each update) is a necessity since it is impractical for all devices to participate in every round. Furthermore, each device may only participate once or a few times during the entire training process, so \emph{stateless} methods (those which do not rely on each client maintaining and updating local state throughout training) are of particular interest.

The most widely studied and used methods in this challenging setting have devices perform multiple steps on their data locally, before communicating updates to the server (i.e., local update methods \emph{a la} local \sgd / \fedavg)~\citep{kairouz2019advances}. Most existing analyses of local update methods assume that all participating clients perform the same number of local \emph{steps} in each round, and that clients sample new, independent gradients at every local step. In contrast, most practical implementations, going back to the original description of \fedavg~\citep{mcmahan2017communication}, have devices perform one or more local \emph{epochs} over their finite training dataset, while \emph{randomly reshuffling} the data at each epoch. The number of training samples per device may vary by many orders of magnitude~\citep{kairouz2019advances}. Thus, performing local epochs results in different clients performing differing numbers of local steps per round.

Although it is now well-understood that random reshuffling has a variance-reducing effect in centralized training, provably obtaining this benefit in federated training has been challenging because of the dependence induced by random reshuffling. In the next section, we provide a detailed discussion on random reshuffling as a part of related work. \citet{mishchenko2021proximal}, \citet{yun2021minibatch} and \citet{Nastya} analyze random reshuffling in the context of FL, while assuming that all devices have the same number of training samples and all devices participate in every round (i.e., no client sampling). Previous work of \citet{wang2020tackling} identified that performing different numbers of local steps per device per round in \fedavg leads to the \emph{objective inconsistency} problem --- effectively minimizing a different objective than~\eqref{eq:objective}, where terms are reweighted by the number of samples per device. \citet{wang2020tackling} propose the \fednova method to address this by rescaling updates during aggregation. \fedlin~\citep{mitra2021fedlin} addresses objective inconsistency by scaling local learning rates, while assuming full participation (all devices participate in every round) which is not practical for cross-device FL. Neither \fednova nor \fedlin incorporate reshuffling of device data. 
Improved convergence rates for FL optimization have been achieved by~\citet{karimireddy2020mime} by combining a Hessian similarity assumption with momentum variance reduction~\citep{cutkosky2019momentum}, but without accounting for random reshuffling or data imbalance.

In this paper we aim to provide a unified view of local update methods while accounting for random reshuffling, data imbalance (through non-identical numbers of local steps), and client subsampling. Furthermore, we obtain faster convergence by incorporating momentum variance reduction. Table~\ref{tab:related-work} summarizes the key differences mentioned above.
\begin{table}[]
\centering \footnotesize
\caption{Comparison of characteristics considered in previous work and the methods analyzed in this paper. Notation: 
\textit{HD} = Heterogeneous Data,
\textit{CS} = Client Sampling,
\textit{RR} = Random Reshuffling,
\textit{NL} = Non-identical Local Steps, 
\textit{VR} = Variance Reduction,
\textit{GM} = Global Momentum,
\textit{SS} = Server Step Size. The bottom four methods are proposed and/or analyzed in this paper.} \label{tab:related-work}
% \resizebox{\columnwidth}{!}{
\begin{tabular}{llllllll}
                 & \rot{HD}
                 & \rot{CS} 
                 & \rot{RR} 
                 & \rot{NL} 
                 & \rot{VR} 
                 & \rot{GM} 
                 & \rot{SS} 
                 \\ \midrule
\fednova~\citep{wang2020tackling}          & \cmark             & \cmark          & \xmark             & \cmark                    & \xmark             & \xmark            & \cmark           \\
\fedlin~\citep{mitra2021fedlin}          & \cmark             & \xmark          & \xmark             & \cmark                    & \cmark             & \xmark             & \xmark           \\
\textsc{Mime}~\citep{karimireddy2020mime} & \cmark             & \cmark          & \xmark             & \xmark                    & \cmark             & \cmark             & \xmark           \\
\fedrr~\citep{mishchenko2021proximal} / \locrr~\citep{yun2021minibatch} & \cmark             & \xmark          & \cmark             & \xmark                    & \xmark             & \xmark             & \xmark           \\ \midrule
\textit{(Contributions in this work)} \\
\fedavgrr        & \cmark             & \cmark          & \cmark             & \xmark$^*$                   & \xmark             & \xmark      & \cmark           \\
\fednovarr        & \cmark             & \cmark          & \cmark             & \cmark                    & \xmark             & \xmark   & \cmark           \\
\fedshuffle       & \cmark             & \cmark          & \cmark             & \cmark                    & \xmark             & \xmark     & \cmark           \\
\fedshufflemvr    & \cmark             & \cmark          & \cmark             & \cmark                    & \cmark             & \cmark          & \cmark           \\ \bottomrule
\end{tabular} \\
% }
{\footnotesize $^*$With NL, \fedavgrr optimizes the wrong objective.}
% }
\end{table}
\paragraph{Contributions.}
We make the following contributions in this paper:
\begin{itemize}[noitemsep,nolistsep]
    \item \textbf{New algorithm: \fedshuffle, an improved way to remove objective inconsistency.} In Section~\ref{sec:fedshuffle} we introduce and analyze \fedshuffle to account for random reshuffling, client sampling, and address the objective inconsistency problem. \fedshuffle fixes the objective inconsistency problem by adjusting the local step size for each client and redesigning the aggregation step, enabling a larger theoretical step size than either \fedavg or \fednova, while also benefiting from lower variance from random reshuffling.
    \item \textbf{New algorithm: \fedshufflemvr, beating non-local methods.} In Section~\ref{sec:mvr} we extend the results of \citep{karimireddy2020mime} by accounting for random reshuffling and data imbalance. Under a Hessian similarity assumption, we show that incorporating momentum variance reduction (MVR) with \fedshuffle leads to better convergence rates than the lower bounds for methods that do not use local updates.
    \item \textbf{General framework: \textsc{FedShuffleGen}.} The above results are obtained by first considering a general framework (see Algorithm~\ref{alg:FedGen} in Appendix~\ref{sec:gen_bound}) that accounts for data heterogeneity, different numbers of local updates per device, arbitrary client sampling, different local learning rates, and heterogeneity in the aggregation weights. To the best of our knowledge, this work is the first to tackle the challenge of random reshuffling in FL at this level of generality. Similar to \citet{wang2020tackling}, our analysis reveals how heterogeneity in the number of local updates can lead to objective inconsistency. Within this framework, we obtain the first analysis of \fedavg\footnote{We acknowledge the earlier  work of \citet{Nastya}, which analyzes random reshuffling of local data in the context of federated learning, with a specific focus on the role of client and server stepsizes. Our main results were obtained independently in early Fall 2021, and at that time we also learned about the results of \citet{Nastya} through personal communication, which were obtained somewhat earlier, but were not available online at that time.} and \fednova with random reshuffling (\fedavgrr and \fednovarr respectively in Table~\ref{tab:related-work}). 

    \item \textbf{Theoretical analysis.} 
    To our knowledge, we are the first to analyze a very general setup, where we consider all the standard components that are commonly used in practical implementations of FL. Furthermore, our results are tight compared to the best-known guarantees for components analyzed in isolation. The main challenge of our analysis comes especially from combining biased random reshuffling ({\sc RR}) with other techniques. On top of that, our analysis is simpler when compared to the state-of-the-art analysis of \fedavg~\citep{karimireddy2019scaffold} as it does not require to upper bound local client drift using recursive estimates. Finally, our general variance bound (see Appendix~\ref{sec:gen_bound}) is the first result that allows the incorporation of non-deterministic aggregation rules based on client sampling. To our knowledge, such results are impossible to obtain with any previously known analysis, despite this being standard practice for \fedavg, where the update of each client is scaled by $w_i/(\sum_{j \in S}w_j)$, e.g., the default way to aggregate in Tensorflow Federated\footnote{\url{https://www.tensorflow.org/federated}} and other frameworks.
    \item \textbf{Experiments.} Finally, our theoretical results and insights are corroborated by experiments, both in a controlled setting with synthetic data, and using commonly-used real datasets for benchmarking and comparison with other methods from the literature.
\end{itemize}
\section{Related Work}
\label{sec:related_work}

{\bf Federated optimization and local update methods.} As we mentioned earlier local update methods are at the heart of FL. As a result, many prior works have analyzed various aspects of local update methods, e.g.~\citep{wang2021cooperative, stich2018local,zhou2017convergence,yu2019parallel,li2019convergence,haddadpour2019convergence,haddadpour2019trading,haddadpour2019local,khaled2020tighter,stich2020error,wang2019adaptive,woodworth2020local,koloskova2020unified,khaled2019first,woodworth2018graph,xie2019local,lin2018don}. These analyses are done under the assumption that every client performs the same number of local updates in each round. However this is usually not the case in practice, due to the heterogeneity of the data and system in FL; note that practical FL algorithms run a fixed number of \emph{epochs} (not steps) per device. Moreover, forcing the fast and slow devices to run the same number of iterations would slow-down the training. This problem was also noted by~\citep{wang2020tackling}, where it is shown that having a heterogeneous number of updates, which is inevitable in FL, leads to an inconsistency between the target loss~\eqref{eq:objective} and the loss that the methods optimize. Moreover, as shown in~\citep{wang2020tackling}, other approaches such as {\sc FedProx}~\citep{li2020federated}, {\sc VRLSGD}~\citep{liang2019variance}
and {\sc SCAFFOLD}~\citep{karimireddy2019scaffold} that are designed for heterogeneous data can partially alleviate the problem but not completely eliminate it. 

In this work, we propose \fedshuffle, a method that combines update weighting and learning rate adjustments to deal with this issue. Our approach is more general than \fednova~\citep{wang2020tackling}, which only uses update weighting,\footnote{
We note that the analysis of \citet{wang2020tackling} could potentially accommodate clients using different learning rates (in their notation, balancing $\cbr{\norm{a_i}_1}$ instead of aggregation weights). Still, this is not an obvious extension of the \fednova analysis and it has been neither considered nor analyzed in theory or practice in previous work. } and we show both analytically and experimentally that it outperforms \fednova and does not slow down the convergence of \fedavg. 

{\bf Random reshuffling.} A particularly successful technique to optimize the empirical risk minimization objective is randomly permute (i.e., reshuffle) the training data at the beginning of every epoch~\citep{bottou2012stochastic} instead of randomly sampling a data point (or a subset of data points) with replacement at each step, as in the standard analysis of \sgd. This process is repeated several times and the resulting method is usually referred to as Random Reshuffling ({\sc RR}). {\sc RR} is often observed to exhibit faster convergence than sampling with replacement, which can be intuitively attributed to the fact that RR is guaranteed to process each training sample exactly once every epoch, while with-replacement sampling needs more steps than the equivalent of one epoch to see every sample with high probability. 
% Furthermore, {\sc RR} enables a significantly faster implementation. 
Properly understanding the random reshuffling trick, and why it works, has been a challenging open problem~\citep{bottou2009curiously, ahn2020tight, gurbuzbalaban2021random} until recent advances in~\citet{mishchenko2020random} introduced a significant simplification of the convergence analysis technique.

The difficulty of analysing {\sc RR} stems from the fact that step-to-step dependence results in biased gradient estimates, unlike in with-replacement sampling. Apart from this, RR in FL involves an additional challenge: imbalance in number of samples that leads to the heterogeneity in number of local updates. To the best of our knowledge, analyzing {\sc RR} in FL and local update methods remains largely unexplored in the literature despite {\sc RR} being the default implementation used in simulations and practical deployments of FL; e.g., it is a default option in TensorFlow Federated.

We are only aware of two previous papers analyzing {\sc RR} for FL~\citep{mishchenko2021proximal, yun2021minibatch} and both of these works rely on two assumptions that are usually violated in cross-device FL: (i) that all clients participate in every round, and (ii) that all clients have the same number of training samples. In addition, \citet{mishchenko2021proximal} only analyze the (strongly) convex setting and \citet{yun2021minibatch} require the Polyak-{\L}oyasiewicz condition to hold for the global function. In this work, building on shoulders of the recent advances~\citep{mishchenko2020random}, we address all the challenges that come from applying {\sc RR} for FL.

\section{Notation and Assumptions}
Firstly, recall that the portion of the loss function that belongs to client $i$ is composed of single losses $f_{ij}(x)$, where $j$ corresponds to $j$-th data point, and $x$ is a parameter we aim to optimize.  We assume that client $i$ has access to an oracle that takes $(j, x)$ as an input and returns the gradient $\nabla f_{ij}(x)$ as an output. 
In order to provide convergence guarantees, we make the following standard assumptions and will discuss how they relate to other commonly used assumptions in the literature.
We provide convergence guarantees for three common classes of smooth objectives: strongly-convex, general convex, and non-convex.
\begin{assumption}
    \label{ass:strong-convexity}
    The functions $\cbr{f_{ij}}$ are \textbf{$\mu$-convex} for $\mu \geq 0$; i.e., for any $i, x, y$
      \begin{equation}
      \label{eq:strong-convexity}
    %   \textstyle
  \inp{\nabla f_{ij}(x)}{y - x} \leq - \Bigl(f_{ij}(x) - f_{ij}(y) + \frac{\mu}{2}\norm{x - y}^2\Bigr)\,.
      \end{equation}
      We say that $f_{ij}$ is $\mu$-strongly convex if $\mu > 0$, and otherwise $f_{ij}$ is (general) convex.
\end{assumption}
\begin{assumption}
\label{ass:smoothness}
    The functions $\{f_{ij}\}$ are \textbf{$L$-smooth}; i.e., there is an $L > 0$ such that for any $i, j, x, y$
    \begin{equation}
    \label{eq:lip-grad}
    % \textstyle
    \norm{\nabla f_{ij}(x) - \nabla f_{ij}(y)} \leq L \norm{x - y}\,.
  \end{equation}
\end{assumption} 
Next, we state two standard assumptions which quantify heterogeneity. The first bounds the gradient dissimilarity among local functions $\cbr*{f_i}$ at different clients, and the second controls the variance of local gradients at each client. 
The same or more restrictive versions of these assumptions appeared in \citep{praneeth2019scaffold, karimireddy2020mime, wang2020tackling, mitra2021fedlin}.
\begin{assumption}[Gradient Similarity]
\label{ass:grad_sim}
The local gradients $\{\nabla f_i\}$ are $(G,B)$-\textbf{bounded}, i.e., for all $x \in \R^d$,
\begin{equation}
\label{eq:heterogeneity}
% \textstyle
 \sum\limits_{i=1}^n w_i \norm{\nabla f_i(x)}^2 \leq G^2 + B^2 \norm{\nabla f(x)}^2 \,.
\end{equation}
If $\{f_i\}$ are convex, then we can relax the assumption to
\begin{equation}
\label{eq:heterogeneity-convex}
% \textstyle
  \sum\limits_{i=1}^n w_i \norm{\nabla f_i(x)}^2 \leq G^2 + 2L  B^2(f(x) - f^\star) \,.
\end{equation}  
\end{assumption}

\begin{assumption}[Bounded Variance]
\label{ass:bounded_var}
    The local stochastic gradients $\{\nabla f_{ij}\}$ have $(\sigma_i, P_i)$-\textbf{bounded variance}, i.e., for all $x \in \R^d$,
    \begin{equation}\label{eqn:bounded_var}
    % \textstyle
         \frac{1}{|\cD_i|}\sum \limits_{j=1}^{|\cD_i|} \norm{\nabla f_{ij}(x) - \nabla f_i(x)}^2 \leq \sigma_i^2 + P_i^2\norm{\nabla f_i(x)}^2  \,.
    \end{equation}
 \end{assumption}
 
Note that we do not require gradient norms to be bounded by constant.
 Moreover, we do not require the global or local variance to be bounded by constants either, but we allow them to be proportional to the gradient norms. In stochastic optimization, these assumptions are referred to as relaxed growth condition~\citep{bottou2018optimization}. Furthermore, one can show that for smooth and convex $\cbr*{f_{ij}}$, these are not actually assumptions, but rather properties~\citep{stich2019unified}. While for non-convex functions, these are critical assumptions to show convergence under partial participation.
 
 Following \citep{karimireddy2020mime}, we also characterize the variance in the Hessian. This is an important assumption that helps us to understand and showcase the benefits of local steps.
 \begin{assumption}[Hessian Similarity]
\label{ass:hessian_sim}
    The local gradients $\{\nabla f_{i}\}$ have $\delta$-\textbf{Hessian similarity}, i.e., for all $x \in \R^d$ and $i \in [n]$, ($\norm{\cdot}$ represents spectral norm for matrices)
    \begin{equation}\label{eqn:hessian_sim}
         \norm{\nabla^2 f_i(x) - \nabla^2 f(x)}^2 \leq \delta^2  \,.
    \end{equation}
 \end{assumption}
 Note that if $\cbr*{f_i}$ are $L$-smooth then it must hold that $\norm{\nabla^2 f_i(x)} \leq L $ for all for all $x \in \R^d$ and $i \in [n]$ and, therefore, Assumption~\ref{ass:hessian_sim} is satisfied with $\delta \leq 2L$. In realistic examples, one might expect the clients to be similar and hence it could happen that $\delta \ll L$.
 
We work with a fixed \textit{arbitrary participation framework}~\citep{horvath2020better}, where one assumes that the subset of participating clients is determined by an arbitrary  random set-valued mapping $\cS$ (a ``sampling'') with values in $2^{[n]}$. A sampling $\cS$ is uniquely defined by assigning probabilities to all $2^n$ subsets of $[n]$. With each sampling $\cS$ we associate a {\em probability matrix} $\mP \in \R^{n\times n}$  defined by $\mP_{ij} \eqdef \Prob{}{\{i,j\}\subseteq \cS}$. The {\em probability vector} associated with $\cS$ is the vector composed of the diagonal entries of $\mP$: $p = (p_1,\dots,p_n)\in \R^n$, where $p_i\eqdef \Prob{}{i\in \cS}$. We say that $\cS$ is {\em proper} if $p_i>0$ for all $i$. It is easy to show that $b \eqdef \Exp{|\cS|} = \trace{\mP} = \sum_{i=1}^n p_i$, and hence  $b$ can be seen as the expected number of clients  participating in each communication round. We associate every proper sampling with a vector $s = [s_1, \hdots, s_n]^\top$ for which it holds
 \begin{equation}\label{eq:ESO_main}
    \mP -pp^\top \preceq {\rm \bf Diag}(p_1s_1,p_2s_2,\dots,p_n s_n),
\end{equation}
which is a quantity that appears in the convergence rate. For instance, one can show that uniform sampling with $b$ participating clients admits $s_i = \nicefrac{(n-b)}{(n-1)}$ and full participation allows to set $s_i = 0$ as $\mP$ is all ones matrix, see \citep{horvath2019nonconvex} for details. Finally, we note that a fixed \textit{arbitrary participation framework} is only for an ease of exposition and our framework can handle non-fixed distributions with minimal adjustments in the analysis.
\begin{figure}[t]
	\centering
    \begin{algorithm}[H]
        \begin{algorithmic}[1]
        \STATE {\bfseries Input:}  initial global model $x^0$, global and local step sizes $\eta_g^r$, $\eta_l^r$, proper distribution $\cS$ 
        \FOR{each round $r=0,\dots,R-1$}
            \STATE server broadcasts $x$ to all clients $i\in \mathcal{S}^r \sim S$
            \FOR{each client $i\in \cS^r$ (in parallel)}
                \STATE initialize local model $y_i \leftarrow x$
                \FOR{$e=1,\dots, E$}
                    \STATE Sample permutation $\{\Pi_0, \hdots, \Pi_{|\cD_i|-1}\}$ of $\{1, \hdots, |\cD_i| \}$
                    \FOR{$j=1,\dots, |\cD_i|$}
                        \STATE update $y_i \leftarrow y_i - \frac{\eta_l^r}{|\cD_i|} \nabla f_{i\Pi_{j-1}}(y_i)$
                    \ENDFOR
                \ENDFOR
                \STATE send $\Delta_i = y_i - x$ to server
            \ENDFOR
           \STATE  server computes $\Delta = \sum_{i\in \cS^r} \frac{w_i}{p_i}\Delta_i$
            \STATE server updates global model $x \leftarrow x - \eta_g^r \Delta$
        \ENDFOR
        \end{algorithmic}
        \caption{\fedshuffle}
        \label{alg:FedShuffle_simple}
    \end{algorithm}
\end{figure}
\section{The \fedshuffle Algorithm}
\label{sec:fedshuffle}
We now formally introduce our \fedshuffle method. Its pseudocode is provided in Algorithm~\ref{alg:FedShuffle_simple} (simple) and Algorithm~\ref{alg:FedShuffle} (precise). The main inspiration for \fedshuffle is the default optimization strategy used in Federated Learning: \fedavg. As described in~\cite{mcmahan2017communication}, in \fedavg one starts each communication round by sampling $b$ clients uniformly at random to participate. These clients then receive the global model from the server and update it by training the model for $E$ epochs on their local data. The model updates are communicated back to the server, which aggregates them and updates the global model before proceeding to the next round.  We provide the pseudocode for this procedure in Algorithm~\ref{alg:FedAvg} in the Appendix. 

Unfortunately, we show that \fedavg (as implemented in practice) does not converge to the exact solution due to inconsistency caused by unbalanced local steps and biased aggregation. We discuss each of these issues in details in Sections~\ref{sec:heterogeneity} and \ref{sec:bias_agg}, respectively. 
Therefore, we propose a new algorithm--\fedshuffle, to address these limitations of local methods. 
It involves two modifications compared to \fedavg: we scale the local step size by $\frac{1}{|\cD_i|}$, and we also adjust the aggregation step. In addition, our analysis  allows each client to run different number of epochs $\cbr{E_i}$, in that case, the local step size is scaled proportionally to $\frac{1}{E_i|\cD_i|}$; see Section~\ref{sec:fedgen} in the Appendix.  
\vspace{-1em}
\subsection{Heterogeneity in the Number of Local Updates}
\label{sec:heterogeneity}
We introduce the first adjustment: step size scaling. We consider the same example as \citet{wang2020tackling}, the quadratic minimization problem 
\vspace{-1em}
\begin{align}
\label{eq:quad_loss}
    \min \limits_{x \in \R^d} \frac{1}{|\cD|}\sum \limits_{i=1}^{|\cD|} \norm*{x - e_i}^2,
\vspace{-1em}
\end{align}
where $\cbr{e_i}_{i=1}^{|\cD|}$ are given vectors. Clearly, this is a strongly convex objective with the unique minimizer $x^\star = \frac{1}{|\cD|}\sum_{i=1}^{|\cD|} e_i$. For simplicity, let us assume that we solve this objective using standard \fedavg\ with local shuffling and full client participation, i.e., $b=|\cD|$. Since each local function has only one element, this is equivalent to running Gradient Descent (\gd), and therefore for small enough step size this algorithm converges linearly to the optimal solution $x^\star$. Now, suppose instead that only $\cbr{e_i}_{i=1}^{n}$ are unique and each client $i$ has $|\cD_i|$ copies of $e_i$ locally. Then, we can write the objective as
\vspace{-0.7em}
\begin{align}
\label{eq:quad_loss_local}
    \min \limits_{x \in \R^d} \sum \limits_{i=1}^n \frac{|\cD_i|}{|\cD|} f_i(x), \quad \text{where}\quad f_i(x) \eqdef \frac{1}{|\cD_i|}\sum \limits_{j=1}^{|\cD_i|}\norm*{x - e_i}^2.
\vspace{-1em}
\end{align}
Applying \fedavg\ with local shuffling is equivalent to running \fedavg\ with $E |\cD_i|$ local steps since all the local data are the same. Similarly to \citet{wang2020tackling}, we show  that \fedavg\ with unbalanced $E |\cD_i|$ local steps introduces bias/inconsistency is the optimized objective and converges linearly to the sub-optimal solution $\tilde x = (\nicefrac{1}{\sum_{i=1}^n |\cD_i|^2}) \sum_{i=1}^n |\cD_i|^2 e_i$ for sufficiently small step size $\eta_l$ (this statement is a direct consequence of Theorem~\ref{thm:fedshuffle_gem-full} that can be found in Appendix~\ref{sec:fedgen}). We note that one can choose $\cbr{|\cD_i|}$ and $\cbr{e_i}$ arbitrarily; thus, the difference between $\tilde x$ and $x^\star$ can be arbitrary large. 
To tackle this first issue that causes the objective inconsistency, we propose to scale the step size proportionally to $\nicefrac{1}{|\cD_i|}$, which removes the aforementioned inconsistency. 

In fact, Appendix~\ref{sec:fedgen} contains more general results. We introduce and analyze a general shuffling algorithm---\fedgen, that encapsulates \fedavg, \fednova and our \fedshuffle as special cases due to its general parametrization by local and global step sizes, step size normalization, aggregation weights and the aggregation normalization constants; see Algorithm~\ref{alg:FedGen} in the appendix. As a byproduct, we obtain a detailed theoretical comparison of \fednova and \fedshuffle. In a nutshell, we show that \fedshuffle balances the progress made by each client and keeps the aggregation weights unaffected while \fednova diminishes the weights for the client that makes the most progress. As a consequence, \fedshuffle allows larger theoretical local step sizes than both \fedavg and \fednova while preserving the worst-case convergence rate.
We refer the reader to Appendix~\ref{sec:fedgen}, particularly Section~\ref{sec:comparison}, for the extended discussion and a detailed comparison of all three methods.

Lastly, one might fix the inconsistencies in the \fedavg by running the same number of local steps $K$ at each client. Note that universally choosing a fixed number of steps for all clients is not straightforward. We will compare heuristics based on a fixed number $K$ of steps with our proposed approaches in the experiments. To be comparable to other baselines, we use two heuristics to select $K$ : (1) Set $K$ based on the client with minimum number of data points in the round (\fedmin), which ensures that such a round will not result in any additional stragglers compared to other baselines. As we will see, \fedmin does not result in great performance as it does not utilize all the data on most of the clients. (2) Set $K$ to be the average number of steps that the selected clients would have taken in that round if they were running other baselines (\fedmean). This makes sure that the total number of local steps for all clients is the same across all baselines. Note that \fedmin and \fedmean are not practical since they require additional coordination among the selected clients to determine the number of local steps to take; we consider them as a heuristic to show the difficulty of choosing a fixed number of local steps for all clients. As we will see, \fedmean under-performs \fedshuffle and even in some cases \fedavg, especially in terms of test accuracy in heterogeneous settings.
\subsection{Removing Bias in Aggregation}
\label{sec:bias_agg}
The second algorithmic change compared to \fedavg has been, to the best of our knowledge, overlooked and it is related to the aggregation step. The original aggregation that is widely used in practice, see Algorithm~\ref{alg:FedAvg} for \fedavg practical implementation, contains the step (line 15) where the local weights from the client $i \in \cS$ are normalized to sum to one by $\nicefrac{w_i}{\sum_{j\in \cS} w_j}$; we refer to this as the \textit{Sum One (SO)} aggregation. Such aggregation can lead to a biased contribution from workers and therefore to an inconsistent solution that optimizes a different objective as we show in the following example.

Suppose that there are three clients and they hold, respectively, $1$, $2$ and $3$ data points. In each round, we sample two clients uniformly at random. Then, the expected contribution from client $i$ is $\EE{i}{\nicefrac{w_i}{\Delta_i}}$, where $\Delta_i=\sum_{j\in \cS\text{ s.t. }i \in \cS}w_j$. It is easy to verify that this is equal to $\nicefrac{7}{36}$, $\nicefrac{16}{45}$ and $\nicefrac{9}{20}$, respectively. One can note that this is not proportional to the weights $\cbr{w_i}$ of the objective \eqref{eq:objective}. Furthermore, this proposed aggregation cannot be simply fixed by changing the client sampling scheme, e.g., by sampling clients with probability proportional to the number of examples they hold, since one can always find a simple counterexample. The problem of the aggregation scheme is the sample dependent normalization $\sum_{i\in \cS} w_i$ that makes sampling biased in the presence of non-uniformity with respect to the number of data samples per client. To solve this issue, we use $\nicefrac{w_i}{p_i}$ in the scaling step, where $p_i$ is the probability that client $i$ is selected. This a very standard aggregation scheme~\citep{wang2018atomo, wangni2018gradient, horvath2019nonconvex} that results in unbiased aggregation with respect to the worker contribution since $\EE{i}{\nicefrac{w_i}{p_i}} = w_i$. It is easy to see that if $\cbr*{p_i}$ are proportional to $\cbr*{w_i}$ then the aggregation step would be simply taking a sum. This can be achieved by each client being sampled independently using a probability proportional to its weight $w_i$, i.e., its dataset size if the central server has access to this information.\footnote{It may not be possible for the server to know the number of samples per client because of privacy constraints, in which case one can always default a uniform sampling scheme with $p_i=1/n$.} If not all clients are available at all times, one can use Approximate Independent Sampling~\citep{horvath2019nonconvex} that leads to the same effect.
\subsection{Extensions}
\label{sec:extension}
As mentioned previously, we introduce \fedgen (Algorithm~\ref{alg:FedGen}) in Appendix~\ref{sec:fedgen} which encapsulates \fedavg, \fednova and \fedshuffle\ as special cases and unifies the convergence analysis of these three methods. As an advantage, we use this unified framework to show that it is better to handle objective inconsistency by scaling the step sizes rather than scaling the updates, i.e., it is better to run \fedshuffle rather than \fednova as \fedshuffle allows for larger theoretical step sizes, see Remark~\ref{rem:fedshuffle_is_better} for details.

In addition, our general analysis allows for different extensions such as each client running different arbitrary number of local epochs. \fedgen also allows us to run and analyze hybrid approaches of mixing step size scaling with update scaling to overcome the objective inconsistency. 
These hybrid approaches would be efficient when applying step size scaling only, i.e. \fedshuffle, might not overcome objective inconsistency due to system challenges.
For example, such a scenario could happen when some clients cannot finish their predefined number of epochs due to a time-out, e.g., large variance in computing time, random drop-off, or interruption during local training. In such scenarios, \fedgen allows additional adjustments through update scaling.
\section{Convergence guarantees}
\label{sec:convergence}
In the theorem below, we establish the convergence guarantees for Algorithm~\ref{alg:FedShuffle_simple}. Before proceeding with the theorem, we define several quantities derived from the constants that appear in Assumptions~\ref{ass:grad_sim} and \ref{ass:bounded_var}
\begin{align*}
     M \eqdef \max_{i \in [n]} \cbr*{\tfrac{s_i}{p_i} w_i}, \quad 
     P^2 \eqdef \max_{i \in [n]} \frac{P_i^2}{|\cD_i|},  \quad
    \sigma^2 \eqdef \tfrac{1}{|\cD|}\sum_{i \in [n]} \sigma_i^2, \quad 
     \beta \eqdef 1 + (1+P)B + MB^2,
\end{align*}
 and the ones that reflect the quality of the initial solution
 $D \eqdef \norm{x^0 - x^\star}^2$ and $F \eqdef f(x^0) - f^\star$.
\begin{theorem}\label{thm:fedavg-full}
Suppose that the Assumptions~\ref{ass:smoothness}-\ref{ass:bounded_var}
hold. Then, in each of the following cases, there exist weights
$\{v_r\}$, local step sizes $\eta_l^r \eqdef \eta_l$ and effective step sizes $\tilde \eta^r \eqdef \tilde \eta = E \eta_g \eta_l$
such that for any $\eta_g^r \eqdef \eta_g \geq 1$ the output of \fedshuffle\ (Algorithm~\ref{alg:FedShuffle_simple})
\begin{equation}\label{eqn:fedshuffle-output}
    \bar x^R = x^{r} \quad \text{with probability} \quad\frac{v_r}{\sum_{\tau}v_{\tau}} \quad \text{for} \quad r\in\{0,\dots,R-1\} \,
\end{equation}
satisfies
\begin{itemize}
        \item \textbf{Strongly convex:} $\cbr{f_{ij}}$ satisfy \eqref{eq:strong-convexity} for $\mu >0$, $\tilde \eta \leq \frac{1}{4\beta L}$,  $R \geq\frac{4\beta L}{\mu}$ then 
        \[
\E{f(\bar{x}^R) - f(x^\star)} \leq \tilde\cO\rbr*{\frac{MG^2}{\mu R}  + \frac{(E^2 + P^2)G^2 + \sigma^2}{\mu^2 R^2 \eta_g^2 E^2} + \mu D^2 \exp\rbr*{-\frac{\mu}{8\beta L} R}}\,,
    \]
    \item \textbf{General convex:} $\cbr{f_{ij}}$ satisfy \eqref{eq:strong-convexity} for $\mu = 0$,
    \[
\E{f(\bar{x}^R) - f(x^\star)} \leq \cO\rbr*{\frac{\sqrt{DM}G}{\sqrt{R}} + \frac{D^{2/3}((E^2 + P^2)G^2 + \sigma^2)^{1/3}}{R^{2/3}\eta_g^{2/3} E^{2/3}}+ \frac{L D \beta}{R}}\,,
    \]
    \item \textbf{Non-convex:} $\tilde \eta \leq \frac{1}{4\beta L}$,  then
    \[
\E{\norm{\nabla f(\bar{x}^R)}^2} \leq \cO\rbr*{\frac{\sqrt{FML}G}{\sqrt{R}} + \frac{F^{2/3}L^{1/3}((E^2 + P^2)G^2 + \sigma^2)^{1/3}}{R^{2/3}\eta_g^{2/3} E^{2/3}}+ \frac{L F \beta}{R}}\,.
    \]
    \end{itemize}
\end{theorem}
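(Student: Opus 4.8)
The plan is to reduce all three cases to a single per-round progress recursion and then feed it into a standard step-size tuning lemma. Write $\tilde\eta = E\eta_g\eta_l$ for the effective step size and let $\cS^r$ be the round-$r$ sampling. The first step is to exploit the unbiasedness of the aggregation: since $\E{\tfrac{w_i}{p_i}\mathbf{1}[i\in\cS^r]} = w_i$, the conditional expectation of the aggregate $\sum_{i\in\cS^r}\tfrac{w_i}{p_i}\Delta_i$ over the sampling equals the full-participation aggregate $\sum_i w_i\Delta_i$, with $\Delta_i = y_i - x^r$. The sampling fluctuation around this mean is then controlled through the matrix inequality \eqref{eq:ESO_main}, which yields $\E{\norm{\sum_{i\in\cS^r}\tfrac{w_i}{p_i}\Delta_i - \sum_i w_i\Delta_i}^2} \le \sum_i \tfrac{s_i w_i^2}{p_i}\norm{\Delta_i}^2 \le M\sum_i w_i\norm{\Delta_i}^2$, which is precisely where $M = \max_i\tfrac{s_i}{p_i}w_i$ enters. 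Combined with the client analysis below, the target recursion for strongly convex $f$ reads
\[
\E{\norm{x^{r+1}-x^\star}^2} \le \rbr*{1-\tfrac{\mu}{2}\tilde\eta}\E{\norm{x^r-x^\star}^2} - c_1\tilde\eta\,\E{f(x^r)-f^\star} + c_2\tilde\eta^2 + c_3\tilde\eta^3,
\]
with $c_2\propto MG^2$ and $c_3$ collecting the reshuffling and drift terms; the general-convex and non-convex cases use the analogous descent inequality on $\E{f(x^{r+1})}$ with $\E{\norm{\nabla f(x^r)}^2}$ in place of the suboptimality. The three displayed rates then follow by substituting this recursion into the tuning lemmas (as in \citet{karimireddy2019scaffold}) with the stated weights $\{v_r\}$ and randomized output $\bar x^R$.

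The heart of the argument is computing the client contribution $\Delta_i = y_i - x^r$. Each client runs $E$ epochs of $|\cD_i|$ reshuffled steps with step size $\tfrac{\eta_l}{|\cD_i|}$, so to first order a clean epoch moves the local model toward the local minimizer by $\tfrac{\eta_l}{|\cD_i|}\sum_j\nabla f_{ij}(x^r) = \eta_l\nabla f_i(x^r)$, giving $\Delta_i\approx -E\eta_l\nabla f_i(x^r)$ and an effective server descent step of size $\tilde\eta$ along $-\nabla f(x^r)$. The $\tfrac{1}{|\cD_i|}$ scaling is exactly what makes every client's per-epoch progress equal to $\eta_l\nabla f_i$ independently of dataset size; this is what keeps the leading term proportional to $\nabla f(x^r)$ and removes the objective inconsistency. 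The corrections to this idealized step split into (i) local drift $\nabla f_{ij}(y_i)-\nabla f_{ij}(x^r)$, bounded by $L\norm{y_i-x^r}$ through Assumption~\ref{ass:smoothness} together with a short recursion on $\norm{y_i-x^r}$ across the local steps, and (ii) the random-reshuffling variance coming from the within-epoch ordering.

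For (ii) I would follow the epoch-level analysis of \citet{mishchenko2020random}: the key observation is that summing $\nabla f_{i\Pi_{j}}(x^r)$ over a full permutation recovers $|\cD_i|\nabla f_i(x^r)$ exactly, so the biased, step-to-step-dependent gradients leave no residual bias over a complete epoch, and the leftover contributes only a higher-order variance of size $O\big(\tilde\eta^3(\sigma^2 + P^2\norm{\nabla f_i}^2)\big)$ rather than the $O(\tilde\eta^2\sigma^2)$ of with-replacement sampling. This variance-reduction signature is exactly why $\sigma^2$, together with the $E^2+P^2$ drift factors, appears only in the $\cO(1/R^2)$ term of the strongly convex bound, whereas the client-sampling variance $MG^2$, which enjoys no such reduction, enters at $\cO(1/R)$. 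Throughout, Assumptions~\ref{ass:grad_sim}--\ref{ass:bounded_var} are used to convert the stray $\norm{\nabla f_i}^2$ factors into $G^2 + B^2\norm{\nabla f}^2$ (or, in the convex case, $2LB^2(f-f^\star)$), which is absorbed back into the contraction and produces the composite constant $\beta = 1+(1+P)B+MB^2$ governing the admissible range $\tilde\eta\le\tfrac{1}{4\beta L}$.

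The main obstacle is controlling the cross terms created by combining the biased, dependent RR gradients with the randomness of the sampling $\cS^r$ in a single variance bound while retaining the RR variance reduction. Because the per-step gradients are neither unbiased nor independent, one cannot take expectations step-by-step; the expectation must be taken at the granularity of a whole epoch per client and only then composed with the sampling expectation over $\cS^r$, so that the two randomness sources interact solely through terms dominated by the $\tilde\eta^2$ and $\tilde\eta^3$ budgets. I expect the bookkeeping of the local-drift recursion under a reshuffled (rather than i.i.d.) gradient sequence, and checking that the resulting constants telescope correctly across the $E$ epochs and $|\cD_i|$ steps uniformly over clients, to be the most delicate part; once the per-round recursion is established, the three rates are a routine application of the tuning lemma.
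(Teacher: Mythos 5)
Your plan matches the paper's proof essentially step for step: the unbiased $w_i/p_i$ aggregation combined with \eqref{eq:ESO_main} yields exactly the $M G^2$ client-sampling variance term (Lemma~\ref{LEM:UPPERV}), the within-epoch without-replacement variance lemma (Lemma~\ref{lem:sampling_wo_replacement}) is precisely what pushes $\sigma^2$ and the $E^2+P^2$ factors into the $\tilde\eta^3$ budget, and the three rates follow from the same SCAFFOLD-style tuning lemmas applied to the resulting per-round recursion, with the assumptions converting stray $\norm{\nabla f_i}^2$ terms into the constant $\beta$ exactly as you describe. The only cosmetic difference is that the paper controls the drift $\xi^r$ via a single self-bounding inequality ($\xi \le 2\eta_l^2 L^2 E^2 \xi + \dots$, absorbed by the step-size condition) rather than the per-step recursion you sketch, which does not change the argument.
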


Let us discuss the obtained rates. First, note that for a sufficiently large number of communication rounds, the first term is the leading term. This term together with the last term correspond to the rate of Distributed \gd\ with partial participation, where each sampled client returns its gradient as the update. If each client participates then $M=0$ and the first term vanishes. The second term comes from local steps using random reshuffling. Note here that the dependency of the noise term $\sigma^2$ on the number of communication rounds is $R^2$ and $R^{2/3}$, respectively, while for local steps with unbiased stochastic gradients, this would be $R$ and $R^{1/3}$. This shows that the variance is decreased when one employs random reshuffling instead of with-replacement sampling. We further note that the middle term can be completely removed in the limit where $\eta_g \rightarrow \infty$, and the local variance $\sigma^2$ vanishes when $E \rightarrow \infty$. We note that such property was not observed for \fednova. The limit $\eta_g \rightarrow \infty$ implies $\eta_l \rightarrow 0$ and thus \fedshuffle reduces to \gd\ with partial participation. To analyze the effect of the cohort size $b$ (number of sampled clients) on the convergence rate, we look at the special case where each client is sampled independently with probability $p_i = b w_i$ (assume $b w_i \leq 1$ for simplicity) for all $i \in [n]$. We refer to this sampling as \textit{importance sampling} as it is easy to see that the $M$ term is minimized for this sampling~\citep{horvath2019nonconvex}. In this particular case, $M = \nicefrac{(1 - \min\cbr*{w_i})}{b}$ and, thus, we obtain theoretical linear speed with respect to the expected cohort size $b$.

Lastly, we note that the obtained rates do not asymptotically improve upon distributed \gd\ with partial participation, but this is the case for every local method with local steps based only on the local dataset, i.e., no global information is exploited. 

\subsection{Improving upon Non-Local Methods}
\label{sec:mvr}

Contrary to the relatively negative worst-case results presented in the previous section, local methods have been observed to perform significantly better in practice~\citep{mcmahan2017communication} when compared to non-local (i.e., one local step) methods. To overcome this issue, \citet{praneeth2019scaffold} proposed to use a Hessian similarity assumption~\citep{arjevani2015communication}, and they showed that local steps bring improvement when the objective is \textit{quadratic}, \textit{all clients} participate in each round and the local steps are corrected using \texttt{SAGA}-like \textit{variance reduction}~\citep{defazio2014saga}. Later, \citet{karimireddy2020mime} proposed \textsc{MimeMVR} that uses the Momentum Variance Reduction (\mvr) technique~\citep{cutkosky2019momentum, tran2019hybrid} and extended the prior results to smooth non-convex functions with uniform partial participation.
In our work, we build upon these results and show that \fedshuffle\ can also improve in terms of communication rounds complexity. To achieve this, we introduce \fedmvr, a \fedshuffle type algorithm that is extended with \textsc{MimeMVR}'s momentum technique. Each local update of \fedmvr\ has the following form
\begin{equation}
 y_{i, e, j}^r = y_{i, e, j-1}^r - \frac{\eta_l^r}{|\cD_i|} d_{i, e, j-1},
\end{equation}
where
\begin{align}
    \label{eq:mvr_loc_update}
    d_{i, e, j} = a\nabla f_{i\Pi^r_{i, e, j}}(y_{i, e, j}^r) + (1-a) m^r 
    + (1-a)\rbr*{\nabla f_{i\Pi^r_{i, e, j}}(y_{i, e, j}^r) - \nabla f_{i\Pi^r_{i, e, j}}(x^r)}
\end{align}
where the momentum term $m^{r}$ is updated at the beginning of each communication round as
\begin{align}
    \label{eq:mvr_mom_update}
    m^{r} = a\sum\limits_{i\in \cS^r} \frac{w_i}{p_i}\nabla f_i(x^r) + (1-a) m^{r-1} 
    + (1-a)\rbr*{\sum\limits_{i\in \cS^r} \frac{w_i}{p_i}\rbr{\nabla f_i(x^r) - \nabla f_i(x^{r-1})}}.
\end{align}
For the notation details, we refer the reader to Algorithm~\ref{alg:FedShuffle} in the Appendix. The above equations can be seen as the standard momentum (first two terms) with an extra correction term (the last term). For a detailed explanation about the motivation behind this momentum technique, we refer the reader to \citet{cutkosky2019momentum}. Note that the momentum term is only updated \textit{once} in each communication round, this is to reduce the local drift as proposed by \citet{karimireddy2020mime}. A convergence guarantee of \fedmvr\ in the non-convex regime follows.
\begin{theorem}\label{thm:mvr_is_better}
  Let us run \fedmvr\ with step sizes $\eta_l = \frac{1}{40E}\min\cbr*{\frac{1}{ \delta }, \rbr*{\frac{f(x^{0}) - f^\star}{R\delta^2(G^2 + \sigma^2)}}^{1/3}}$, $\eta_g = 1$, momentum parameter $a = \max\rbr*{1152 E^2 \delta^2 \eta_l^2, \frac{1}{R}}$, and local epochs $E \geq \frac{L}{\delta}$. Then, given that Assumptions~\ref{ass:smoothness}-\ref{ass:hessian_sim} hold with $P_i = 0$ for all $i \in [n]$, $B = 1$, $\delta > 0$ and one client is sampled with probabilities $\cbr*{w_i}$, we have
  \begin{align*}
      \frac{1}{RE}\sum \limits_{r=0}^{R-1}\sum\limits_{e=0}^{E-1}\E{\norm*{\nabla f(y^r_{i^r,e})}^2} \leq \cO\rbr[\bigg]{ \frac{\delta^{2/3} F^{2/3} \rbr*{G^2 + \sigma^2}^{1/3}}{R^{2/3}} +  \frac{\delta F + G^2}{R} }\,.
  \end{align*}
\end{theorem}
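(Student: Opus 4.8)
The plan is to follow the momentum variance reduction (MVR) analysis developed for \textsc{MimeMVR}, adapted to handle both the biased gradients introduced by random reshuffling and the heterogeneous local step counts. The central object is the \emph{momentum error} $\epsilon^r \eqdef m^r - \nabla f(x^r)$, and the whole argument rests on two coupled estimates: a recursion showing that $\E{\norm{\epsilon^r}^2}$ contracts thanks to the MVR correction, and a descent inequality for $f(x^r)$ that I can telescope. Assumption~\ref{ass:hessian_sim} enters in two distinct places — it controls the new error injected into the momentum recursion, and it bounds the local client drift — and the condition $E \geq \nicefrac{L}{\delta}$ is precisely what makes the local epochs \emph{track} the global landscape rather than overshoot.

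First I would derive the momentum error recursion. Subtracting $\nabla f(x^r)$ from \eqref{eq:mvr_mom_update} and using that a single client is sampled with probability $w_i$ (so $p_i = w_i$ and $\E{\sum_{i\in\cS^r}\frac{w_i}{p_i}\nabla f_i(x^r)} = \nabla f(x^r)$, with $B=1$), the correction term is built so that $\epsilon^r = (1-a)\epsilon^{r-1} + \text{(sampling noise)} + \text{(difference noise)}$. Taking squared norms and conditional expectations, the cross term with $\epsilon^{r-1}$ vanishes, leaving a contraction factor $(1-a)$. Because the momentum uses the full local gradients $\nabla f_i(x^r)$, the sampling noise is purely client-sampling variance and contributes $\cO(a^2 G^2)$ via Assumption~\ref{ass:grad_sim} with $B=1$. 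The difference noise is where Hessian similarity is essential: $\norm{(\nabla f_i(x^r) - \nabla f_i(x^{r-1})) - (\nabla f(x^r) - \nabla f(x^{r-1}))} \leq \delta \norm{x^r - x^{r-1}}$, so this term is $\cO((1-a)^2\delta^2\norm{x^r - x^{r-1}}^2)$. Unrolling bounds the accumulated error by a geometric sum of the per-round variance plus a weighted sum $\sum_r \delta^2\norm{x^r - x^{r-1}}^2$.

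Next I would establish the local descent. Because the momentum is frozen within a round and each local update \eqref{eq:mvr_loc_update} combines the fresh reshuffled gradient with the frozen $m^r$, I would first bound the local drift $\norm{y^r_{i,e,j} - x^r}$ using $L$-smoothness and the step-size choice $\eta_l = \cO(\nicefrac{1}{E\delta})$; the $\nicefrac{1}{|\cD_i|}$ scaling together with $E \geq \nicefrac{L}{\delta}$ keeps the drift of order $\eta_l E$ times an effective gradient. Averaging the reshuffled increments over a full epoch lets me replace the per-step stochastic gradients by the full local gradient up to a reshuffling bias, which I would control with a per-epoch variance estimate in the style of \citet{mishchenko2020random} — this is what routes the $\sigma^2$ contribution through the reshuffling-improved bound. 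Feeding the resulting effective server step into the smoothness descent lemma yields, for one round, $\E{f(x^{r+1})} \leq \E{f(x^r)} - \Omega(\eta_l E)\,\E{\norm{\nabla f(x^r)}^2} + \cO(\eta_l E)\,\E{\norm{\epsilon^r}^2} + (\text{drift and RR terms})$, and I would convert $\norm{\nabla f(x^r)}^2$ into the averaged local gradients $\frac{1}{E}\sum_e \norm{\nabla f(y^r_{i^r,e})}^2$ on the left-hand side of the theorem, again via smoothness and the drift bound.

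Finally I would telescope over $r=0,\dots,R-1$, substitute the momentum-error bound from the first step (the $\sum_r \delta^2\norm{x^r - x^{r-1}}^2$ terms get reabsorbed into the descent, using that $\norm{x^r - x^{r-1}}^2 = \eta_g^2\norm{\Delta^r}^2$ is itself controlled by gradients and errors), and divide by $\eta_l E R$. Plugging in $a = \max(1152 E^2\delta^2\eta_l^2, \nicefrac{1}{R})$ makes the recursion contract fast enough while keeping $a$ small, and optimizing the two branches of $\eta_l = \frac{1}{40E}\min\{\nicefrac{1}{\delta}, (\nicefrac{F}{R\delta^2(G^2+\sigma^2)})^{1/3}\}$ balances the deterministic and variance terms, producing the leading $\nicefrac{\delta^{2/3}F^{2/3}(G^2+\sigma^2)^{1/3}}{R^{2/3}}$ rate and the lower-order $\nicefrac{(\delta F + G^2)}{R}$ term. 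The hard part will be handling two sources of bias simultaneously — the step-to-step dependence of random reshuffling and the momentum recursion — since neither yields unbiased per-step gradients; carefully choosing the granularity (per-epoch rather than per-step) at which to take expectations, so that the reshuffling bias and the frozen-momentum bias can be estimated independently, is the crux that makes the two estimates above compatible.
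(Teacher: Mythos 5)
Your proposal follows essentially the same route as the paper's proof: an epoch-level random-reshuffling variance bound, a momentum-error recursion contracting at rate $(1-a)$ with new error controlled by Hessian similarity and client-sampling variance $\cO(a^2G^2)$, a local-drift bound absorbed using $\eta_l \lesssim \nicefrac{1}{E\delta}$, and a telescoped descent (the paper packages the coupling via the Lyapunov function $\Psi^r = f(x^r)-f^\star + \frac{288\eta_l}{23a}E\norm{m^r-\nabla f(x^r)}^2$, which is equivalent to your reabsorption argument). The only piece you do not address is controlling the initial momentum error $\norm{m^0-\nabla f(x^0)}^2$, which the paper handles by a warm-up accumulation over $R$ rounds giving $\nicefrac{G^2}{R}$ — a minor but necessary step to obtain the stated $\nicefrac{G^2}{R}$ term.
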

Note that our rate is independent of $L$ and only depends on the Hessian similarity constant $\delta$. Because $\delta \leq L$, this rate improves upon the rate of the centralized \mvr $\cO\rbr{\nicefrac{L^{2/3}}{R^{2/3}}}$. We note that the improvement is only for the number of communication rounds, and the number of gradient calls is at least the same as for the non-local centralized \mvr\ since $E\delta \leq L$, but our main concern here is the communication efficiency. 
It is worth noting that our results are qualitatively similar to those provided by {\sc MimeMVR}~\citet{karimireddy2020mime}, but in our work, we consider a more challenging and practical setting. Namely, \fedmvr does not require that all clients perform the same number of local updates, and each client runs local epochs using random reshuffling. Therefore, we work with biased gradients, and, in addition, we allow for heterogeneity in the number of samples per client, which brings another challenge that needs to be adequately addressed in the analysis to avoid objective inconsistency.
Lastly,  we note that our step size scaling is essential in the provided \fedshufflemvr convergence theory as it requires balancing the progress made by each client. Therefore, it is not clear whether a combination of \fednova and {\sc MVR} can lead to a similar improvement.
\section{Experimental Evaluation}
\begin{figure}[t]
    \centering
    \includegraphics[width=0.24\textwidth]{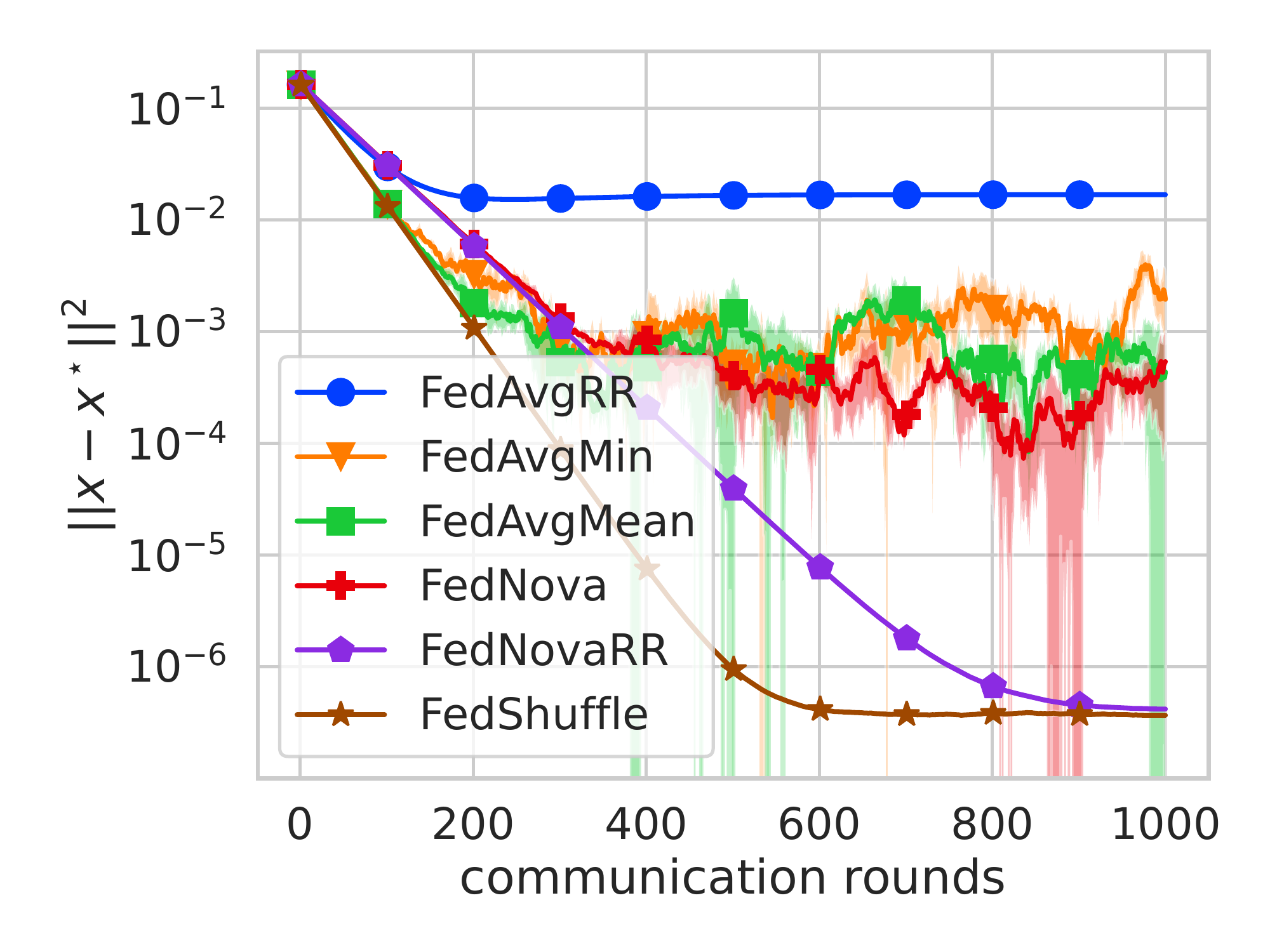}
    \includegraphics[width=0.24\textwidth]{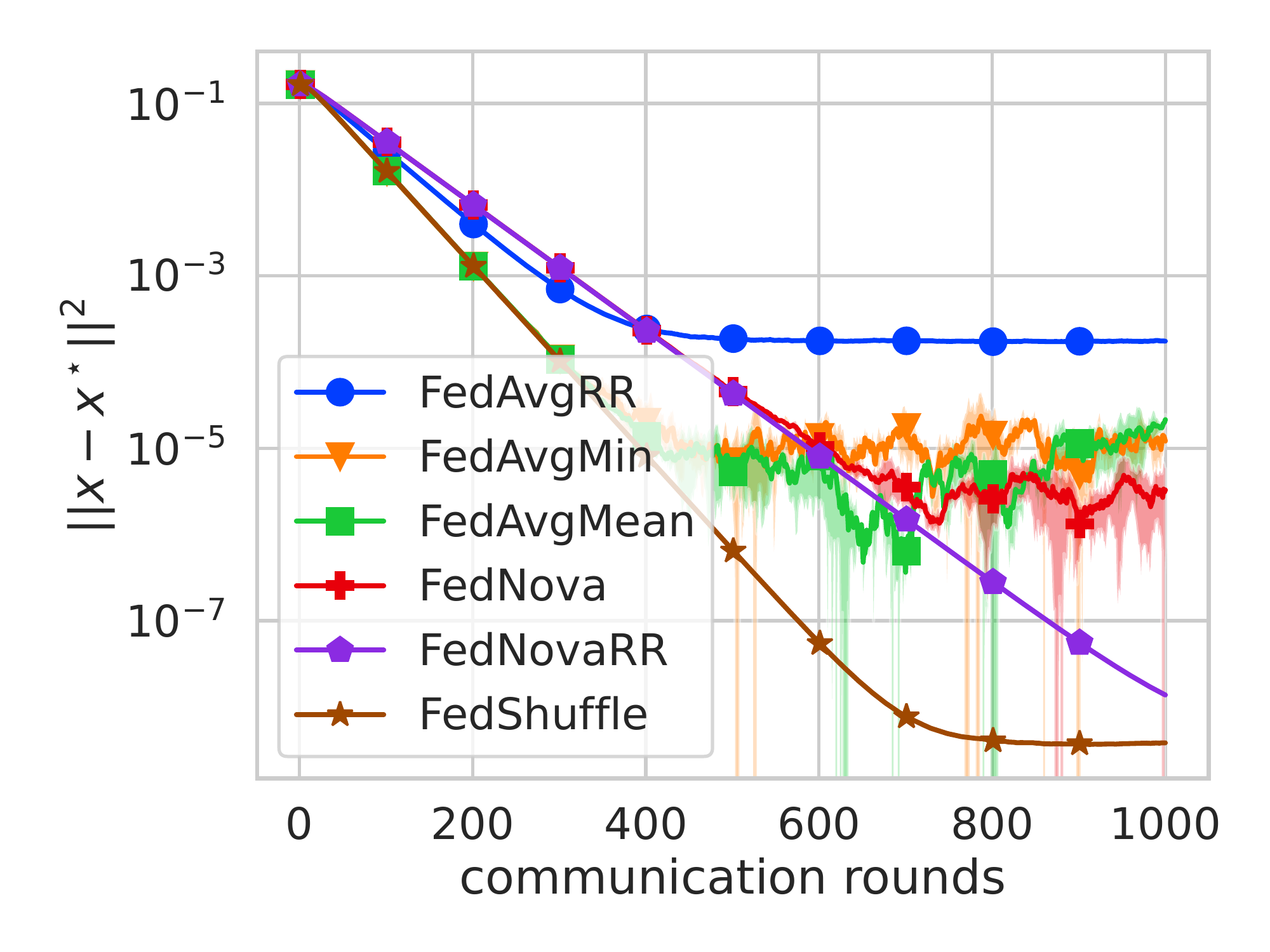}
    \includegraphics[width=0.24\textwidth]{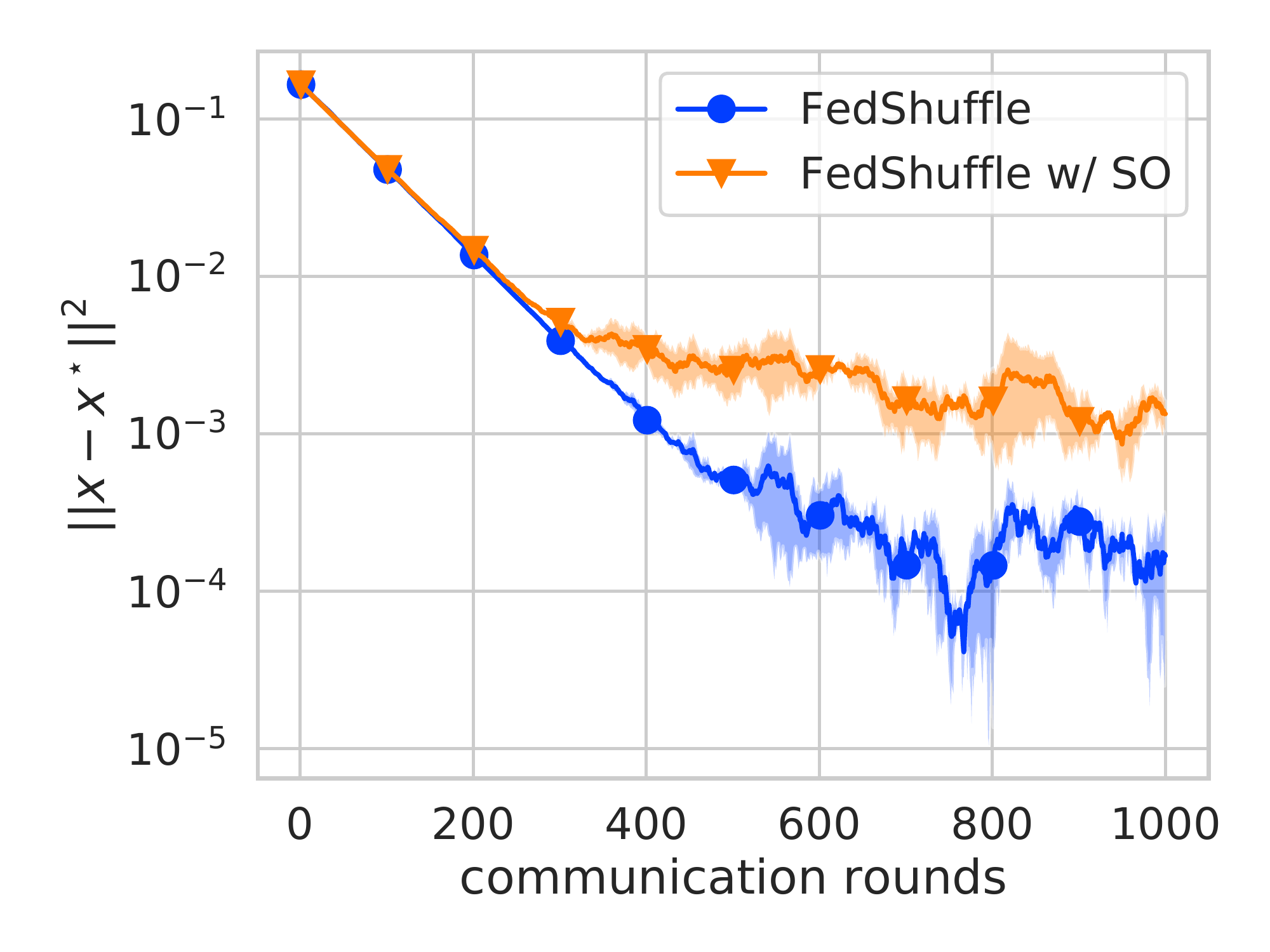}
    \includegraphics[width=0.24\textwidth]{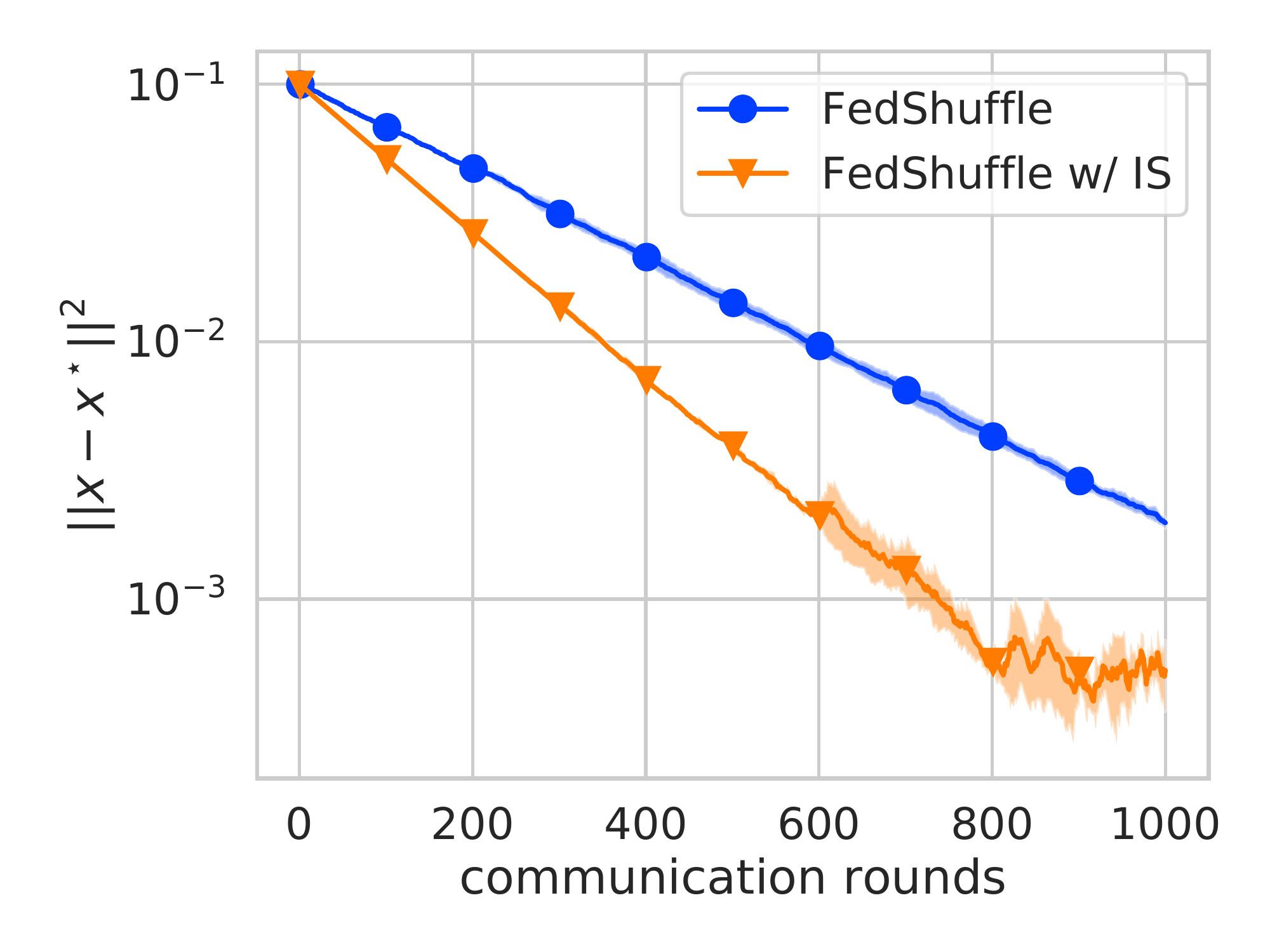}
    \caption{Quadratic objective as defined in \eqref{eq:quad_obj}. Each client runs one local epoch. Left: A comparison of \fedavg, \fedavg\ with reshuffling (\fedavgrr), \fednova\ and \fednova\ with reshuffling (\fednova\textsc{RR}) and \fedshuffle with full participation. Middle Left: the same baselines with the global momentum $0.9$. Middle Right: \fedshuffle\ w/ Sum One and plain \fedshuffle with partial participation (two clients sampled uniformly at random).
    Right: \fedshuffle\ with uniform sampling and importance sampling (IS) with partial participation (one client per round).
    }
    \label{fig:toy_experiments}
    \vspace{-1.5em}
\end{figure}
For the experimental evaluation, we compare three methods --- \fedavg, \fednova, and our \fedshuffle --- with different extensions such as random reshuffling or momentum. We run two sets of experiments. In the first, we perform an ablation study on a simple distributed quadratic problem to verify the improvements predicted by our theory. In the second part, we compare all of the methods for training deep neural networks on the CIFAR100 and Shakespeare datasets. Details of the experimental setup can be found in Appendix~\ref{sec:setup}. As expected, our findings are that \fedshuffle consistently outperforms other baselines. Moreover,
global momentum 
leads to an improved performance of all methods as predicted by our theory.
\subsection{Results on quadratic functions}
In this section, we verify our theoretical findings on a convex quadratic objective; see \eqref{eq:quad_obj} in the appendix for details. Figure~\ref{fig:toy_experiments} summarizes the results. The left-most plot showcases the comparison of \fedavg, \fedavg with reshuffling (\fedavgrr), \fednova as analyzed in \citep{wang2020tackling} (with sampling), \fednova with reshuffling (\fednovarr) as we analyze it in Appendix~\ref{sec:fedgen}, and \fedshuffle. All clients participate in each round and run one local epoch with batch size $1$. 

As expected, \fedavgrr saturates at a higher loss, since it optimizes the wrong objective, see Appendix~\ref{sec:fedgen}. \fedmin fixes the objective inconsistency of the \fedavg, but it is still dominated by other baselines due to decreased amount of local work per client. \fednova provides a better performance since it does not contain any inconsistency by construction, but its performance is later dominated by noise coming from stochastic gradients. The same holds for \fedmean. As predicted by our theory, random reshuffling decreases the stochastic noise and \fednovarr improves upon the performance of \fednova. \fedshuffle dominates all the baselines since it does not have any objective inconsistency, it uses a superior method to remove inconsistencies compared to \fednova, and it also incorporates random reshuffling which itself provides some variance reduction.

In the second plot, we use the same baselines but include global momentum as defined in \eqref{eq:mvr_loc_update} and \eqref{eq:mvr_mom_update}. We can see that this technique helps \fedavg to reduce its objective inconsistency since the momentum in \eqref{eq:mvr_mom_update} is unbiased. For other methods, we can see that momentum has a beneficial variance reduction effect as expected, and we observe convergence to a solution with higher precision. Note that \fedshuffle still performs the best as predicted by our theory.

In the third plot, we analyse the difference between the default implementation of the aggregation step, that is denoted as ``sum one'' (\fedshuffle w/ SO) since the sum of weights during aggregation is normalized to be one, and our unbiased version, where the weights are scaled by the probability of sampling the given client. We sample two clients in each step uniformly at random and each client runs one local epoch. As was discussed in Section~\ref{sec:fedshuffle}, we observe that \fedshuffle w/ SO converges to a worse solution due to objective inconsistency resulting from the biased aggregation.  

Finally, we compare \fedshuffle with uniform and importance sampling, where we sample one client per round and the sampled client runs one local epoch. For importance sampling (IS), each client is sampled proportionally to its dataset size. To better showcase the effect of importance sampling, we use a slightly different objective, where $d=10$, the first client holds $8$ data points, and other two clients hold one.  As predicted by our theory (decrease of the $M$ term in Theorem~\ref{thm:fedavg-full}), importance sampling leads to a substantial improvement. 
\begin{figure}[t]
    \centering
    \subfigure[Shakespeare w/ LSTM]{
        \includegraphics[width=0.24\textwidth]{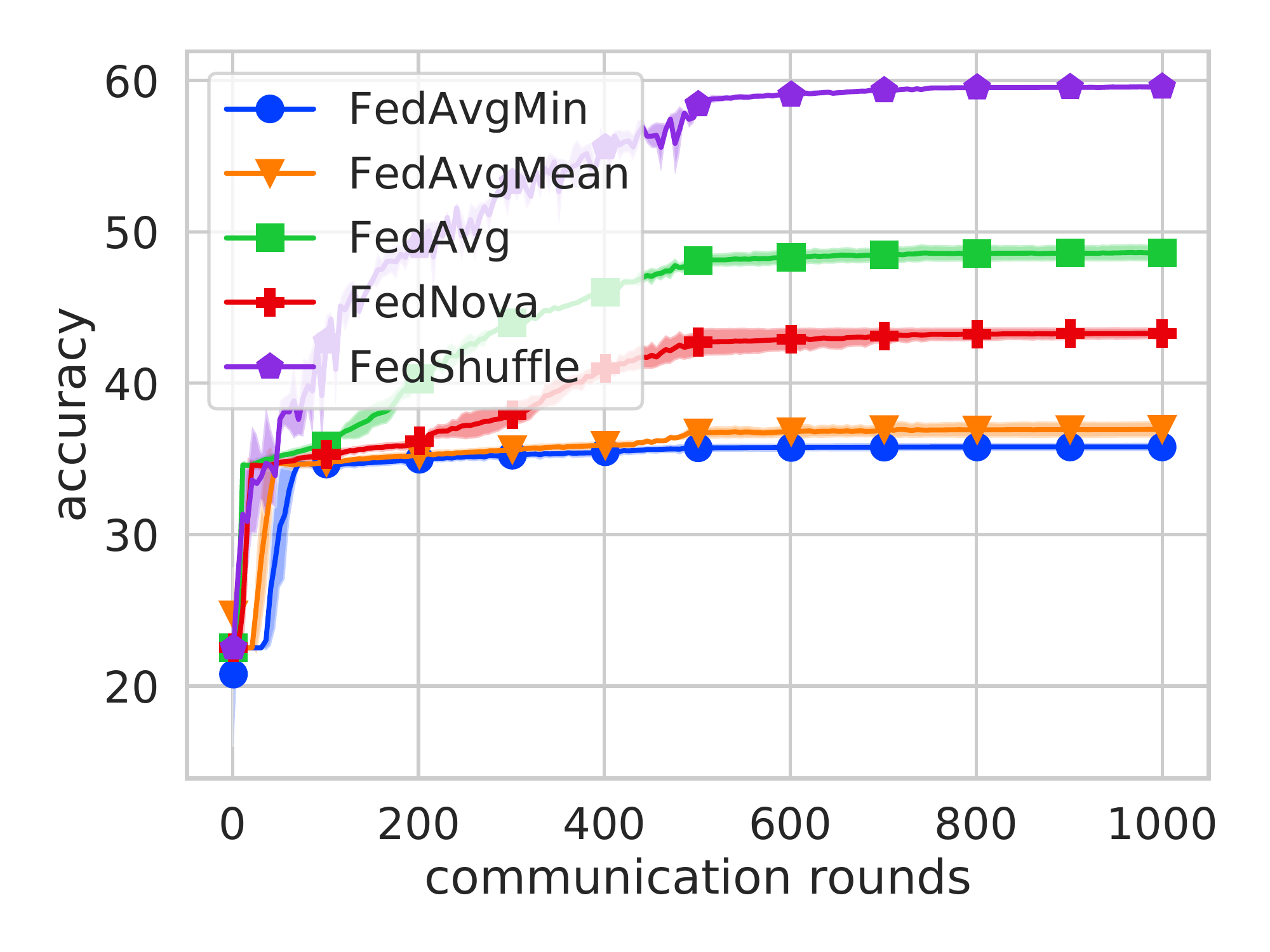}
        \includegraphics[width=0.24\textwidth]{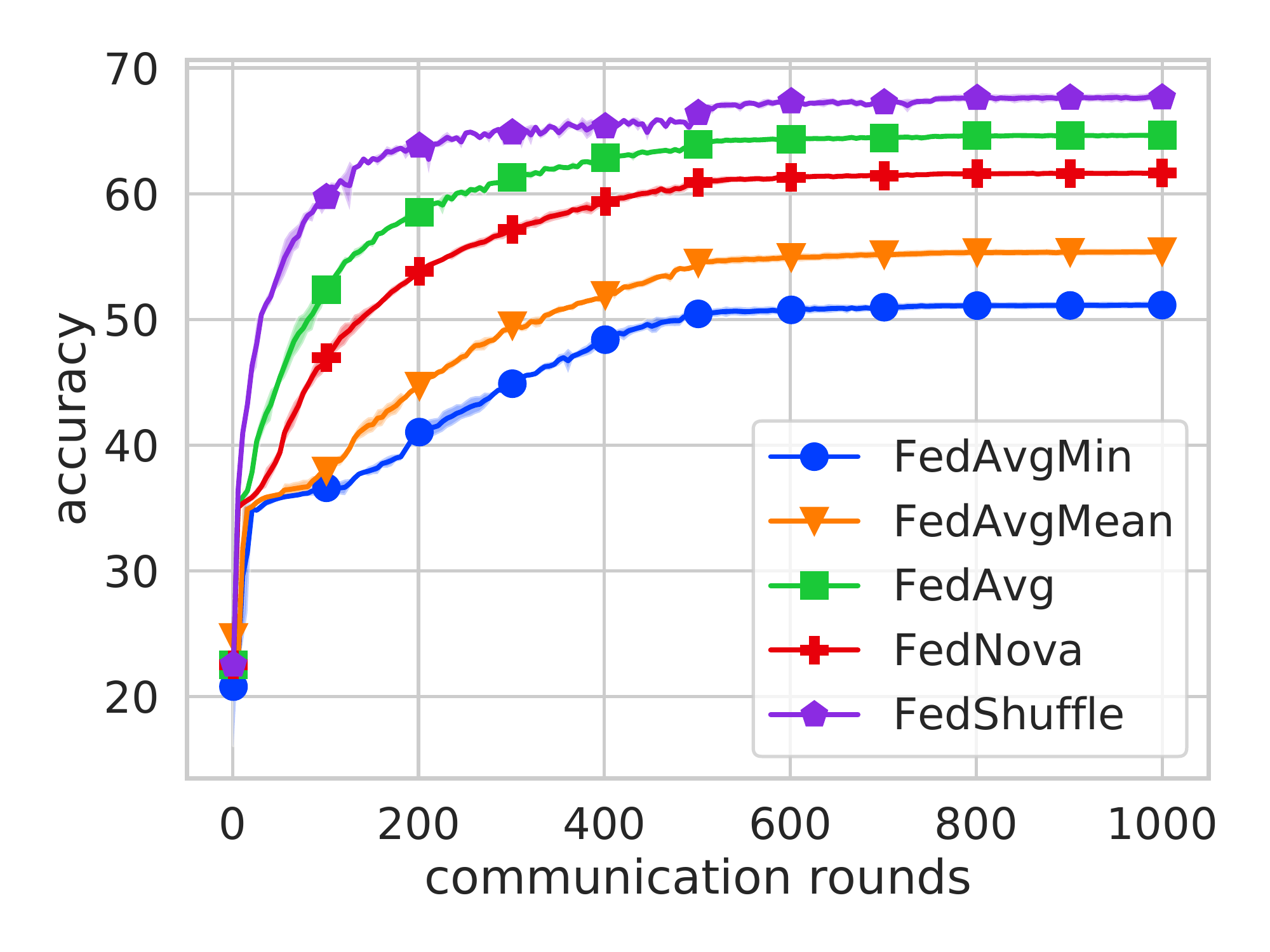}
        }
    \hfill
    \subfigure[CIFAR100 (TFF Split) w/ ResNet18]{
        \includegraphics[width=0.235\textwidth]{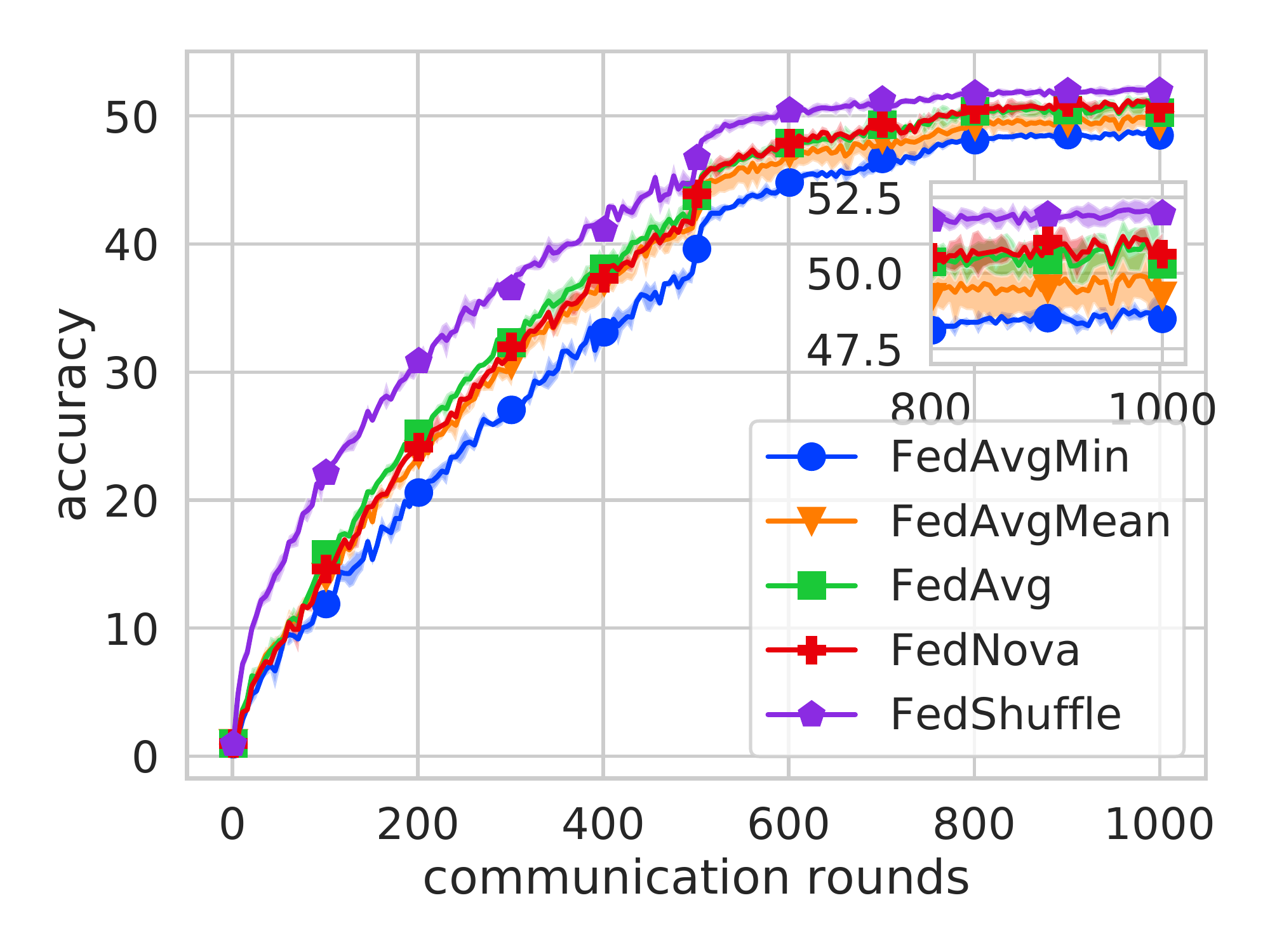} 
        \includegraphics[width=0.235\textwidth]{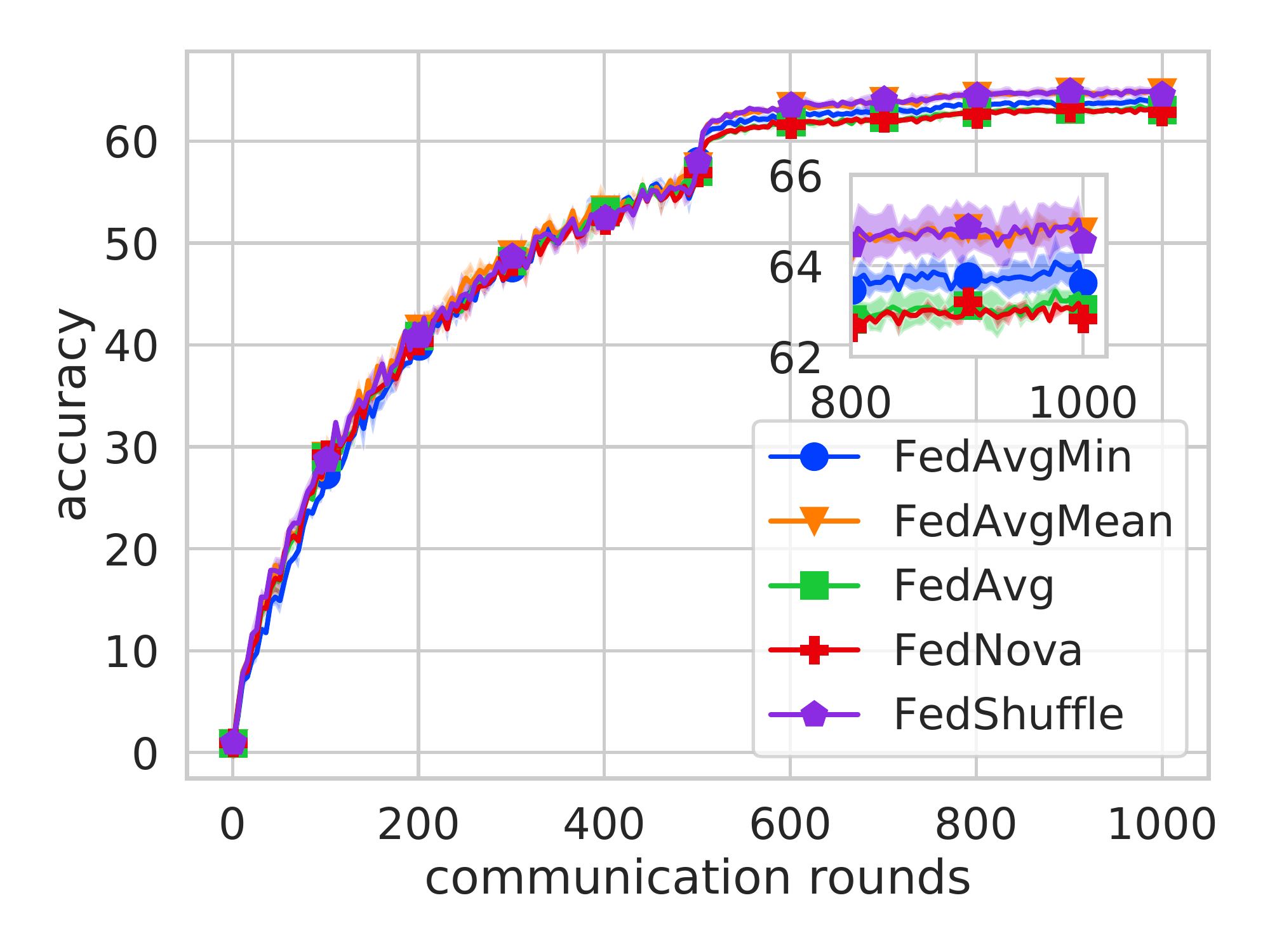}
        }
    \caption{Comparison of \fedmin, \fedavg, \fednova, \fedshuffle on real-world datasets.
    Partial participation: in each round 16 client is sampled uniformly at random.
    All methods use random reshuffling. For Shakespeare, number of local epochs is $2$ and for CIFAR100, it is 2 to 5 sampled uniformly at random at each communication round for each client. Left: Plain methods, without momentum.
    Right: Global momentum $0.9$. 
    }
    \label{fig:neural_nets}
    \vspace{-1.5em}
\end{figure}
\begin{table}[t]
\centering
\begin{minipage}[b]{0.48\linewidth}
    \centering
     \footnotesize
    \caption{Shakespeare dataset.}
    \begin{tabular}{|c|c|c|}
    \hline
    Accuracy   & No Momentum         & With Momentum     \\ \hline 
    \fedmin     & $35.79 \pm 0.28$ & $51.13 \pm 0.27$ \\ \hline
    \fedmean & $36.95 \pm 0.55$ & $55.38 \pm 0.27$ \\ \hline 
    \fedavg     & $48.61 \pm 0.56$ & $64.64 \pm 0.10$ \\ \hline 
    \fednova    & $43.29 \pm 0.40$ & $61.65 \pm 0.06$ \\ \hline
    \fedshuffle & $\mathbf{59.57 \pm 0.14}$ & $\mathbf{67.63 \pm 0.35}$ \\ \hline 
    \end{tabular}
    \label{tab:shakespeare}
\end{minipage}
\hfill
\begin{minipage}[b]{0.48\linewidth}
\centering
 \footnotesize
    \caption{CIFAR100 dataset.}
    \begin{tabular}{|c|c|c|c|}
    \hline
    Accuracy   & No Momentum         & With Momentum   \\ \hline
    \fedmin & $48.49 \pm 0.28$ & $63.62 \pm 0.20$ \\ \hline
    \fedmean & $49.24 \pm 0.92$ & $\mathbf{64.75 \pm 0.20}$ \\ \hline
    \fedavg   & $50.29 \pm 0.31$ & $63.04 \pm 0.51$ \\ \hline
    \fednova   & $50.63 \pm 0.66$ & $62.83 \pm 0.19$ \\ \hline
    \fedshuffle & $\mathbf{51.97 \pm 0.20}$ & $\mathbf{64.52 \pm 0.48}$ \\ \hline
    \end{tabular}
    \label{tab:cifar100}
\end{minipage}
\vspace{-1.5em}
\end{table}
\subsection{Training Deep Neural Networks}
\vspace{-0.5em}
In the next experiments, we evaluate the same methods on the CIFAR100~\cite{krizhevsky2009learning} and Shakespeare~\cite{mcmahan2017communication} datasets. The results already showcased theoretically (see Sections~\ref{sec:fedshuffle}
and Appendix \ref{sec:fedgen}) and empirically in the previous experiments that reshuffling leads to a substantial improvement over random sampling with replacement.
Therefore, in these experiments we focus on other aspects of \fedshuffle and show its superiority over other methods that use random reshuffling. To do that, we only consider random reshuffling methods in this section; thus \fednova, \fedavg, \fedmin and \fedmean refer to \fedavgrr, \fednovarr, \fedmin and \fedmean with (partial) random reshuffling, respectively. For each task, we run $1000$ rounds and we sample $16$ clients in each round. For Shakespeare, each client runs two local epochs. For CIFAR100, all clients have the same number of data points. Thus, we follow \citep{wang2020tackling} to create heterogeneity and test our \fedgen framework we assume each client runs 2 to 5 epochs uniformly at random. 
We investigate which method performs the best and, furthermore, we look at the effects of the momentum.
and importance sampling, where the description of the used importance sampling strategy can be found in \citep[Section 2.3]{horvath2019nonconvex}.
We report the test accuracies in Figure~\ref{fig:neural_nets} and Tables~\ref{tab:shakespeare} and \ref{tab:cifar100}. We observe that \fedshuffle outperforms all the baselines, for the Shakespeare dataset with a large margin. With respect to the global fixed momentum, we can see that this technique helps all the methods and substantially improves their performance.
We note that \fedmean\ and \fedmin\ perform exceptionally well for CIFAR100 (w/ momentum) but not for the Shakespeare dataset. We conjecture this is due to the significant heterogeneity of the Shakespeare dataset in terms of samples per client. In this setting, \fedmin\ does not utilize most of the data on clients with larger data-sets, while \fedmean\ over-uses the data on clients with smaller data-size. Note that for CIFAR100, this is not the case as we use an equal-sized split. For completeness, we include the train loss corresponding to Figure~\ref{fig:neural_nets} in Section~\ref{sec:setup} in the appendix. 
\vspace{-1em}
\section{Conclusion}
\vspace{-0.5em}
This paper introduces and analyzes \fedshuffle, which incorporates the practice of running local epochs with reshuffling in common FL implementations while also accounting for data imbalance across clients, and correcting for the resulting objective inconsistency problem that arises in \fedavg-type methods. \fedshuffle involves adjusting local learning rates based on the amount of local data, in addition to modified aggregation weights compared to prior work like \fedavg or \fednova. Under an additional Hessian smoothness assumption, incorporating momentum variance reduction leads to order-optimal rates, in the sense of matching the lower bounds achieved by non-local-update methods. The theoretical contributions of this work are verified in controlled experiments using quadratic functions, and the superiority of \fedshuffle is also demonstrated in experiments training deep neural networks on standard FL benchmark problems. 

Promising directions for future work include generalizing the analysis of \fedshuffle to cover asynchronous execution~\citep{huba2022papaya, nguyen2022federated}, incorporating compression mechanisms to reduce communication overhead~\citep{karimireddy2019error, mishchenko2019distributed}, and to account for computational and communication heterogeneity across clients~\citep{caldas2018expanding, diao2020heterofl, horvath2021fjord}.

\section*{Acknowledgements}

The authors thank John Nguyen, Tian Li and Jianyu Wang for helpful discussions and feedback on the manuscript.

\bibliographystyle{tmlr}
\bibliography{literature}

\begin{thebibliography}{64}
\providecommand{\natexlab}[1]{#1}
\providecommand{\url}[1]{\texttt{#1}}
\expandafter\ifx\csname urlstyle\endcsname\relax
  \providecommand{\doi}[1]{doi: #1}\else
  \providecommand{\doi}{doi: \begingroup \urlstyle{rm}\Url}\fi

\bibitem[Ahn \& Sra(2020)Ahn and Sra]{ahn2020tight}
Kwangjun Ahn and Suvrit Sra.
\newblock On tight convergence rates of without-replacement {SGD}.
\newblock \emph{arXiv preprint arXiv:2004.08657}, 2020.

\bibitem[Apple(2019)]{apple19wwdc}
Apple.
\newblock Designing for privacy (video and slide deck).
\newblock Apple WWDC,
  \url{https://developer.apple.com/videos/play/wwdc2019/708}, 2019.

\bibitem[Arjevani \& Shamir(2015)Arjevani and
  Shamir]{arjevani2015communication}
Yossi Arjevani and Ohad Shamir.
\newblock Communication complexity of distributed convex learning and
  optimization.
\newblock \emph{arXiv preprint arXiv:1506.01900}, 2015.

\bibitem[Bottou(2009)]{bottou2009curiously}
L{\'e}on Bottou.
\newblock Curiously fast convergence of some stochastic gradient descent
  algorithms.
\newblock In \emph{Proceedings of the symposium on learning and data science,
  Paris}, volume~8, pp.\  2624--2633, 2009.

\bibitem[Bottou(2012)]{bottou2012stochastic}
Leon Bottou.
\newblock Stochastic gradient descent tricks, volume 7700 of lecture notes in
  computer science (lncs), 2012.

\bibitem[Bottou et~al.(2018)Bottou, Curtis, and
  Nocedal]{bottou2018optimization}
L{\'e}on Bottou, Frank~E Curtis, and Jorge Nocedal.
\newblock Optimization methods for large-scale machine learning.
\newblock \emph{Siam Review}, 60\penalty0 (2):\penalty0 223--311, 2018.

\bibitem[Caldas et~al.(2018{\natexlab{a}})Caldas, Duddu, Wu, Li,
  Kone{\v{c}}n{\`y}, McMahan, Smith, and Talwalkar]{caldas2018leaf}
Sebastian Caldas, Sai Meher~Karthik Duddu, Peter Wu, Tian Li, Jakub
  Kone{\v{c}}n{\`y}, H~Brendan McMahan, Virginia Smith, and Ameet Talwalkar.
\newblock Leaf: A benchmark for federated settings.
\newblock \emph{arXiv preprint arXiv:1812.01097}, 2018{\natexlab{a}}.

\bibitem[Caldas et~al.(2018{\natexlab{b}})Caldas, Kone{\v{c}}ny, McMahan, and
  Talwalkar]{caldas2018expanding}
Sebastian Caldas, Jakub Kone{\v{c}}ny, H~Brendan McMahan, and Ameet Talwalkar.
\newblock Expanding the reach of federated learning by reducing client resource
  requirements.
\newblock \emph{arXiv preprint arXiv:1812.07210}, 2018{\natexlab{b}}.

\bibitem[Chen et~al.(2020)Chen, Horv{\'a}th, and
  Richt{\'a}rik]{chen2020optimal}
Wenlin Chen, Samuel Horv{\'a}th, and Peter Richt{\'a}rik.
\newblock Optimal client sampling for federated learning.
\newblock \emph{arXiv preprint arXiv:2010.13723}, 2020.

\bibitem[Cutkosky \& Orabona(2019)Cutkosky and Orabona]{cutkosky2019momentum}
Ashok Cutkosky and Francesco Orabona.
\newblock Momentum-based variance reduction in non-convex {SGD}.
\newblock \emph{arXiv preprint arXiv:1905.10018}, 2019.

\bibitem[Defazio et~al.(2014)Defazio, Bach, and
  Lacoste-Julien]{defazio2014saga}
Aaron Defazio, Francis Bach, and Simon Lacoste-Julien.
\newblock {SAGA: A fast incremental gradient method with support for
  non-strongly convex composite objectives}.
\newblock \emph{Advances in neural information processing systems}, 27, 2014.

\bibitem[Diao et~al.(2020)Diao, Ding, and Tarokh]{diao2020heterofl}
Enmao Diao, Jie Ding, and Vahid Tarokh.
\newblock Heterofl: Computation and communication efficient federated learning
  for heterogeneous clients.
\newblock \emph{arXiv preprint arXiv:2010.01264}, 2020.

\bibitem[G{\"u}rb{\"u}zbalaban et~al.(2021)G{\"u}rb{\"u}zbalaban, Ozdaglar, and
  Parrilo]{gurbuzbalaban2021random}
Mert G{\"u}rb{\"u}zbalaban, Asu Ozdaglar, and Pablo~A Parrilo.
\newblock Why random reshuffling beats stochastic gradient descent.
\newblock \emph{Mathematical Programming}, 186\penalty0 (1):\penalty0 49--84,
  2021.

\bibitem[Haddadpour \& Mahdavi(2019)Haddadpour and
  Mahdavi]{haddadpour2019convergence}
Farzin Haddadpour and Mehrdad Mahdavi.
\newblock On the convergence of local descent methods in federated learning.
\newblock \emph{arXiv preprint arXiv:1910.14425}, 2019.

\bibitem[Haddadpour et~al.(2019{\natexlab{a}})Haddadpour, Kamani, Mahdavi, and
  Cadambe]{haddadpour2019trading}
Farzin Haddadpour, Mohammad~Mahdi Kamani, Mehrdad Mahdavi, and Viveck Cadambe.
\newblock Trading redundancy for communication: Speeding up distributed {SGD}
  for non-convex optimization.
\newblock In \emph{International Conference on Machine Learning}, pp.\
  2545--2554. PMLR, 2019{\natexlab{a}}.

\bibitem[Haddadpour et~al.(2019{\natexlab{b}})Haddadpour, Kamani, Mahdavi, and
  Cadambe]{haddadpour2019local}
Farzin Haddadpour, Mohammad~Mahdi Kamani, Mehrdad Mahdavi, and Viveck~R
  Cadambe.
\newblock Local {SGD} with periodic averaging: Tighter analysis and adaptive
  synchronization.
\newblock \emph{arXiv preprint arXiv:1910.13598}, 2019{\natexlab{b}}.

\bibitem[Hard et~al.(2018)Hard, Rao, Mathews, Ramaswamy, Beaufays, Augenstein,
  Eichner, Kiddon, and Ramage]{hard2018federated}
Andrew Hard, Kanishka Rao, Rajiv Mathews, Swaroop Ramaswamy, Fran{\c{c}}oise
  Beaufays, Sean Augenstein, Hubert Eichner, Chlo{\'e} Kiddon, and Daniel
  Ramage.
\newblock Federated learning for mobile keyboard prediction.
\newblock \emph{arXiv preprint arXiv:1811.03604}, 2018.

\bibitem[He et~al.(2016)He, Zhang, Ren, and Sun]{he2016deep}
Kaiming He, Xiangyu Zhang, Shaoqing Ren, and Jian Sun.
\newblock Deep residual learning for image recognition.
\newblock In \emph{Proceedings of the IEEE conference on computer vision and
  pattern recognition}, pp.\  770--778, 2016.

\bibitem[Hochreiter \& Schmidhuber(1997)Hochreiter and
  Schmidhuber]{hochreiter1997long}
Sepp Hochreiter and J{\"u}rgen Schmidhuber.
\newblock Long short-term memory.
\newblock \emph{Neural computation}, 9\penalty0 (8):\penalty0 1735--1780, 1997.

\bibitem[Horv{\'a}th \& Richt{\'a}rik(2019)Horv{\'a}th and
  Richt{\'a}rik]{horvath2019nonconvex}
Samuel Horv{\'a}th and Peter Richt{\'a}rik.
\newblock Nonconvex variance reduced optimization with arbitrary sampling.
\newblock In \emph{International Conference on Machine Learning}, pp.\
  2781--2789. PMLR, 2019.

\bibitem[Horv{\'a}th \& Richt{\'a}rik(2020)Horv{\'a}th and
  Richt{\'a}rik]{horvath2020better}
Samuel Horv{\'a}th and Peter Richt{\'a}rik.
\newblock A better alternative to error feedback for communication-efficient
  distributed learning.
\newblock \emph{arXiv preprint arXiv:2006.11077}, 2020.

\bibitem[Horv{\'a}th et~al.(2021)Horv{\'a}th, Laskaridis, Almeida, Leontiadis,
  Venieris, and Lane]{horvath2021fjord}
Samuel Horv{\'a}th, Stefanos Laskaridis, Mario Almeida, Ilias Leontiadis,
  Stylianos Venieris, and Nicholas~Donald Lane.
\newblock Fj{ORD}: Fair and accurate federated learning under heterogeneous
  targets with ordered dropout.
\newblock In A.~Beygelzimer, Y.~Dauphin, P.~Liang, and J.~Wortman Vaughan
  (eds.), \emph{Advances in Neural Information Processing Systems}, 2021.
\newblock URL \url{https://openreview.net/forum?id=4fLr7H5D_eT}.

\bibitem[Hsieh et~al.(2020)Hsieh, Phanishayee, Mutlu, and
  Gibbons]{hsieh2020noniid}
Kevin Hsieh, Amar Phanishayee, Onur Mutlu, and Phillip~B. Gibbons.
\newblock {The Non-IID Data Quagmire of Decentralized Machine Learning}.
\newblock In \emph{Proc. of ICML}, volume 119, pp.\  4387--4398. {PMLR}, 2020.

\bibitem[Huba et~al.(2022)Huba, Nguyen, Malik, Zhu, Rabbat, Yousefpour, Wu,
  Zhan, Ustinov, Srinivas, et~al.]{huba2022papaya}
Dzmitry Huba, John Nguyen, Kshitiz Malik, Ruiyu Zhu, Mike Rabbat, Ashkan
  Yousefpour, Carole-Jean Wu, Hongyuan Zhan, Pavel Ustinov, Harish Srinivas,
  et~al.
\newblock Papaya: Practical, private, and scalable federated learning.
\newblock \emph{Proceedings of Machine Learning and Systems}, 4:\penalty0
  814--832, 2022.

\bibitem[Kairouz et~al.(2019)Kairouz, McMahan, Avent, Bellet, Bennis, Bhagoji,
  Bonawitz, Charles, Cormode, Cummings, et~al.]{kairouz2019advances}
Peter Kairouz, H~Brendan McMahan, Brendan Avent, Aur{\'e}lien Bellet, Mehdi
  Bennis, Arjun~Nitin Bhagoji, Keith Bonawitz, Zachary Charles, Graham Cormode,
  Rachel Cummings, et~al.
\newblock Advances and open problems in federated learning.
\newblock \emph{arXiv preprint arXiv:1912.04977}, 2019.

\bibitem[Karimireddy et~al.(2020)Karimireddy, Jaggi, Kale, Mohri, Reddi, Stich,
  and Suresh]{karimireddy2020mime}
Praneeth Karimireddy, Martin Jaggi, Satyen Kale, Mehryar Mohri, Sashank~J
  Reddi, Sebastian~U Stich, and Ananda~Theertha Suresh.
\newblock Mime: Mimicking centralized stochastic algorithms in federated
  learning.
\newblock \emph{arXiv preprint arXiv:2008.03606}, 2020.

\bibitem[Karimireddy et~al.(2019{\natexlab{a}})Karimireddy, Kale, Mohri, Reddi,
  Stich, and Suresh]{karimireddy2019scaffold}
Sai~Praneeth Karimireddy, Satyen Kale, Mehryar Mohri, Sashank~J Reddi,
  Sebastian~U Stich, and Ananda~Theertha Suresh.
\newblock Scaffold: Stochastic controlled averaging for on-device federated
  learning.
\newblock 2019{\natexlab{a}}.

\bibitem[Karimireddy et~al.(2019{\natexlab{b}})Karimireddy, Kale, Mohri, Reddi,
  Stich, and Theertha~Suresh]{praneeth2019scaffold}
Sai~Praneeth Karimireddy, Satyen Kale, Mehryar Mohri, Sashank~J Reddi,
  Sebastian~U Stich, and Ananda Theertha~Suresh.
\newblock {SCAFFOLD}: Stochastic controlled averaging for federated learning.
\newblock \emph{arXiv e-prints}, pp.\  arXiv--1910, 2019{\natexlab{b}}.

\bibitem[Karimireddy et~al.(2019{\natexlab{c}})Karimireddy, Rebjock, Stich, and
  Jaggi]{karimireddy2019error}
Sai~Praneeth Karimireddy, Quentin Rebjock, Sebastian Stich, and Martin Jaggi.
\newblock Error feedback fixes {SignSGD} and other gradient compression
  schemes.
\newblock In \emph{International Conference on Machine Learning}, pp.\
  3252--3261. PMLR, 2019{\natexlab{c}}.

\bibitem[Khaled et~al.(2019)Khaled, Mishchenko, and
  Richt{\'a}rik]{khaled2019first}
Ahmed Khaled, Konstantin Mishchenko, and Peter Richt{\'a}rik.
\newblock First analysis of local {GD} on heterogeneous data.
\newblock \emph{arXiv preprint arXiv:1909.04715}, 2019.

\bibitem[Khaled et~al.(2020)Khaled, Mishchenko, and
  Richt{\'a}rik]{khaled2020tighter}
Ahmed Khaled, Konstantin Mishchenko, and Peter Richt{\'a}rik.
\newblock Tighter theory for local {SGD} on identical and heterogeneous data.
\newblock In \emph{International Conference on Artificial Intelligence and
  Statistics}, pp.\  4519--4529. PMLR, 2020.

\bibitem[Koloskova et~al.(2020)Koloskova, Loizou, Boreiri, Jaggi, and
  Stich]{koloskova2020unified}
Anastasia Koloskova, Nicolas Loizou, Sadra Boreiri, Martin Jaggi, and Sebastian
  Stich.
\newblock A unified theory of decentralized {SGD} with changing topology and
  local updates.
\newblock In \emph{International Conference on Machine Learning}, pp.\
  5381--5393. PMLR, 2020.

\bibitem[Krizhevsky et~al.(2009)Krizhevsky, Hinton,
  et~al.]{krizhevsky2009learning}
Alex Krizhevsky, Geoffrey Hinton, et~al.
\newblock Learning multiple layers of features from tiny images.
\newblock 2009.

\bibitem[Li et~al.(2020)Li, Sahu, Zaheer, Sanjabi, Talwalkar, and
  Smith]{li2020federated}
Tian Li, Anit~Kumar Sahu, Manzil Zaheer, Maziar Sanjabi, Ameet Talwalkar, and
  Virginia Smith.
\newblock Federated optimization in heterogeneous networks.
\newblock \emph{Proceedings of Machine Learning and Systems}, 2:\penalty0
  429--450, 2020.

\bibitem[Li \& McCallum(2006)Li and McCallum]{li2006pachinko}
Wei Li and Andrew McCallum.
\newblock Pachinko allocation: Dag-structured mixture models of topic
  correlations.
\newblock In \emph{Proceedings of the 23rd international conference on Machine
  learning}, pp.\  577--584, 2006.

\bibitem[Li et~al.(2019)Li, Huang, Yang, Wang, and Zhang]{li2019convergence}
Xiang Li, Kaixuan Huang, Wenhao Yang, Shusen Wang, and Zhihua Zhang.
\newblock On the convergence of fedavg on non-iid data.
\newblock \emph{arXiv preprint arXiv:1907.02189}, 2019.

\bibitem[Liang et~al.(2019)Liang, Shen, Liu, Pan, Chen, and
  Cheng]{liang2019variance}
Xianfeng Liang, Shuheng Shen, Jingchang Liu, Zhen Pan, Enhong Chen, and Yifei
  Cheng.
\newblock Variance reduced local {SGD} with lower communication complexity.
\newblock \emph{arXiv preprint arXiv:1912.12844}, 2019.

\bibitem[Lin et~al.(2018)Lin, Stich, Patel, and Jaggi]{lin2018don}
Tao Lin, Sebastian~U Stich, Kumar~Kshitij Patel, and Martin Jaggi.
\newblock Don't use large mini-batches, use local {SGD}.
\newblock \emph{arXiv preprint arXiv:1808.07217}, 2018.

\bibitem[Malinovsky et~al.(2021)Malinovsky, Mishchenko, and
  Richt\'{a}rik]{Nastya}
Grigory Malinovsky, Konstantin Mishchenko, and Peter Richt\'{a}rik.
\newblock Server-side stepsizes and sampling without replacement provably help
  in federated optimization.
\newblock \emph{arXiv preprint arXiv:2201.11066}, 2021.

\bibitem[McMahan et~al.(2017)McMahan, Moore, Ramage, Hampson, and
  y~Arcas]{mcmahan2017communication}
Brendan McMahan, Eider Moore, Daniel Ramage, Seth Hampson, and Blaise~Aguera
  y~Arcas.
\newblock Communication-efficient learning of deep networks from decentralized
  data.
\newblock In \emph{International Conference on Artificial Intelligence and
  Statistics}, pp.\  1273--1282. PMLR, 2017.

\bibitem[Mishchenko et~al.(2019)Mishchenko, Gorbunov, Tak{\'a}{\v{c}}, and
  Richt{\'a}rik]{mishchenko2019distributed}
Konstantin Mishchenko, Eduard Gorbunov, Martin Tak{\'a}{\v{c}}, and Peter
  Richt{\'a}rik.
\newblock Distributed learning with compressed gradient differences.
\newblock \emph{arXiv preprint arXiv:1901.09269}, 2019.

\bibitem[Mishchenko et~al.(2020)Mishchenko, Khaled Ragab~Bayoumi, and
  Richt{\'a}rik]{mishchenko2020random}
Konstantin Mishchenko, Ahmed Khaled Ragab~Bayoumi, and Peter Richt{\'a}rik.
\newblock Random reshuffling: Simple analysis with vast improvements.
\newblock \emph{Advances in Neural Information Processing Systems}, 33, 2020.

\bibitem[Mishchenko et~al.(2021)Mishchenko, Khaled, and
  Richt{\'a}rik]{mishchenko2021proximal}
Konstantin Mishchenko, Ahmed Khaled, and Peter Richt{\'a}rik.
\newblock Proximal and federated random reshuffling.
\newblock \emph{arXiv preprint arXiv:2102.06704}, 2021.

\bibitem[Mitra et~al.(2021)Mitra, Jaafar, Pappas, and Hassani]{mitra2021fedlin}
Aritra Mitra, Rayana Jaafar, George~J Pappas, and Hamed Hassani.
\newblock Achieving linear convergence in federated learning under objective
  and systems heterogeneity.
\newblock \emph{arXiv preprint arXiv:2102.07053}, 2021.

\bibitem[Nesterov et~al.(2018)]{nesterov2018lectures}
Yurii Nesterov et~al.
\newblock \emph{Lectures on convex optimization}, volume 137.
\newblock Springer, 2018.

\bibitem[Nguyen et~al.(2022)Nguyen, Malik, Zhan, Yousefpour, Rabbat, Malek, and
  Huba]{nguyen2022federated}
John Nguyen, Kshitiz Malik, Hongyuan Zhan, Ashkan Yousefpour, Mike Rabbat, Mani
  Malek, and Dzmitry Huba.
\newblock Federated learning with buffered asynchronous aggregation.
\newblock In \emph{International Conference on Artificial Intelligence and
  Statistics}, pp.\  3581--3607. PMLR, 2022.

\bibitem[Paszke et~al.(2019)Paszke, Gross, Massa, Lerer, Bradbury, Chanan,
  Killeen, Lin, Gimelshein, Antiga, Desmaison, Kopf, Yang, DeVito, Raison,
  Tejani, Chilamkurthy, Steiner, Fang, Bai, and Chintala]{NEURIPS2019_9015}
Adam Paszke, Sam Gross, Francisco Massa, Adam Lerer, James Bradbury, Gregory
  Chanan, Trevor Killeen, Zeming Lin, Natalia Gimelshein, Luca Antiga, Alban
  Desmaison, Andreas Kopf, Edward Yang, Zachary DeVito, Martin Raison, Alykhan
  Tejani, Sasank Chilamkurthy, Benoit Steiner, Lu~Fang, Junjie Bai, and Soumith
  Chintala.
\newblock Pytorch: An imperative style, high-performance deep learning library.
\newblock In H.~Wallach, H.~Larochelle, A.~Beygelzimer, F.~d\textquotesingle
  Alch\'{e}-Buc, E.~Fox, and R.~Garnett (eds.), \emph{Advances in Neural
  Information Processing Systems 32}, pp.\  8024--8035. Curran Associates,
  Inc., 2019.
\newblock URL
  \url{http://papers.neurips.cc/paper/9015-pytorch-an-imperative-style-high-performance-deep-learning-library.pdf}.

\bibitem[Reddi et~al.(2020)Reddi, Charles, Zaheer, Garrett, Rush,
  Kone{\v{c}}n{\`y}, Kumar, and McMahan]{reddi2020adaptive}
Sashank Reddi, Zachary Charles, Manzil Zaheer, Zachary Garrett, Keith Rush,
  Jakub Kone{\v{c}}n{\`y}, Sanjiv Kumar, and H~Brendan McMahan.
\newblock Adaptive federated optimization.
\newblock \emph{arXiv preprint arXiv:2003.00295}, 2020.

\bibitem[Stich(2018)]{stich2018local}
Sebastian~U Stich.
\newblock Local {SGD} converges fast and communicates little.
\newblock \emph{arXiv preprint arXiv:1805.09767}, 2018.

\bibitem[Stich(2019)]{stich2019unified}
Sebastian~U Stich.
\newblock Unified optimal analysis of the (stochastic) gradient method.
\newblock \emph{arXiv preprint arXiv:1907.04232}, 2019.

\bibitem[Stich \& Karimireddy(2020)Stich and Karimireddy]{stich2020error}
Sebastian~U Stich and Sai~Praneeth Karimireddy.
\newblock The error-feedback framework: Better rates for {SGD} with delayed
  gradients and compressed updates.
\newblock \emph{Journal of Machine Learning Research}, 21:\penalty0 1--36,
  2020.

\bibitem[TFF(2021)]{tffdatasets}
TFF.
\newblock Tensorflow federated datasets.
\newblock 2021.
\newblock URL \url{https://www.tensorflow.org/federated/ api_ docs/ python/
  tff/ simulation/ datasets}.

\bibitem[Tran-Dinh et~al.(2019)Tran-Dinh, Pham, Phan, and
  Nguyen]{tran2019hybrid}
Quoc Tran-Dinh, Nhan~H Pham, Dzung~T Phan, and Lam~M Nguyen.
\newblock Hybrid stochastic gradient descent algorithms for stochastic
  nonconvex optimization.
\newblock \emph{arXiv preprint arXiv:1905.05920}, 2019.

\bibitem[Wang et~al.(2018)Wang, Sievert, Charles, Liu, Wright, and
  Papailiopoulos]{wang2018atomo}
Hongyi Wang, Scott Sievert, Zachary Charles, Shengchao Liu, Stephen Wright, and
  Dimitris Papailiopoulos.
\newblock Atomo: Communication-efficient learning via atomic sparsification.
\newblock \emph{arXiv preprint arXiv:1806.04090}, 2018.

\bibitem[Wang \& Joshi(2019)Wang and Joshi]{wang2019adaptive}
Jianyu Wang and Gauri Joshi.
\newblock Adaptive communication strategies to achieve the best error-runtime
  trade-off in local-update {SGD}.
\newblock \emph{Proceedings of Machine Learning and Systems}, 1:\penalty0
  212--229, 2019.

\bibitem[Wang \& Joshi(2021)Wang and Joshi]{wang2021cooperative}
Jianyu Wang and Gauri Joshi.
\newblock Cooperative {SGD}: A unified framework for the design and analysis of
  local-update {SGD} algorithms.
\newblock \emph{Journal of Machine Learning Research}, 22\penalty0
  (213):\penalty0 1--50, 2021.

\bibitem[Wang et~al.(2020)Wang, Liu, Liang, Joshi, and Poor]{wang2020tackling}
Jianyu Wang, Qinghua Liu, Hao Liang, Gauri Joshi, and H~Vincent Poor.
\newblock Tackling the objective inconsistency problem in heterogeneous
  federated optimization.
\newblock \emph{arXiv preprint arXiv:2007.07481}, 2020.

\bibitem[Wangni et~al.(2018)Wangni, Wang, Liu, and Zhang]{wangni2018gradient}
Jianqiao Wangni, Jialei Wang, Ji~Liu, and Tong Zhang.
\newblock Gradient sparsification for communication-efficient distributed
  optimization.
\newblock In \emph{Advances in Neural Information Processing Systems}, pp.\
  1299--1309, 2018.

\bibitem[Woodworth et~al.(2018)Woodworth, Wang, Smith, McMahan, and
  Srebro]{woodworth2018graph}
Blake Woodworth, Jialei Wang, Adam Smith, Brendan McMahan, and Nathan Srebro.
\newblock Graph oracle models, lower bounds, and gaps for parallel stochastic
  optimization.
\newblock \emph{arXiv preprint arXiv:1805.10222}, 2018.

\bibitem[Woodworth et~al.(2020)Woodworth, Patel, Stich, Dai, Bullins, Mcmahan,
  Shamir, and Srebro]{woodworth2020local}
Blake Woodworth, Kumar~Kshitij Patel, Sebastian Stich, Zhen Dai, Brian Bullins,
  Brendan Mcmahan, Ohad Shamir, and Nathan Srebro.
\newblock Is local {SGD} better than minibatch {SGD}?
\newblock In \emph{International Conference on Machine Learning}, pp.\
  10334--10343. PMLR, 2020.

\bibitem[Xie et~al.(2019)Xie, Koyejo, Gupta, and Lin]{xie2019local}
Cong Xie, Oluwasanmi Koyejo, Indranil Gupta, and Haibin Lin.
\newblock Local {A}da{A}lter: Communication-efficient stochastic gradient
  descent with adaptive learning rates.
\newblock \emph{arXiv preprint arXiv:1911.09030}, 2019.

\bibitem[Yu et~al.(2019)Yu, Yang, and Zhu]{yu2019parallel}
Hao Yu, Sen Yang, and Shenghuo Zhu.
\newblock Parallel restarted {SGD} with faster convergence and less
  communication: Demystifying why model averaging works for deep learning.
\newblock In \emph{Proceedings of the AAAI Conference on Artificial
  Intelligence}, volume~33, pp.\  5693--5700, 2019.

\bibitem[Yun et~al.(2021)Yun, Rajput, and Sra]{yun2021minibatch}
Chulhee Yun, Shashank Rajput, and Suvrit Sra.
\newblock Minibatch vs local {SGD} with shuffling: Tight convergence bounds and
  beyond.
\newblock \emph{arXiv preprint arXiv:2110.10342}, 2021.

\bibitem[Zhou \& Cong(2017)Zhou and Cong]{zhou2017convergence}
Fan Zhou and Guojing Cong.
\newblock On the convergence properties of a $ k $-step averaging stochastic
  gradient descent algorithm for nonconvex optimization.
\newblock \emph{arXiv preprint arXiv:1708.01012}, 2017.

\end{thebibliography}

\clearpage
\appendix
    
\part*{Appendix}

\section{\fedavg\ with Random Reshuffling}

\begin{figure}[h]
	\centering
    \begin{algorithm}[H]
        \begin{algorithmic}[1]
        \STATE {\bfseries Input:}  initial global model $x^0$, global and local step sizes $\eta_g^r$, $\eta_l^r$ 
        \FOR{each round $r=0,\dots,R-1$}
            \STATE server broadcasts $x^{r}$ to all clients $i\in \mathcal{S}^r \subset [n]$ sampled uniformly at random with $|\mathcal{S}^r| = b$
            \FOR{each client $i\in \cS^r$ (in parallel)}
                \STATE initialize local model $y_{i, 0, 0}^r\leftarrow x^{r}$
                \FOR{$e=1,\dots, E$}
                    \STATE Sample permutation $\{\Pi^r_{i, e, 0}, \hdots, \Pi^k_{i, e, |\cD_i|-1}\}$ of $\{1, \hdots, |\cD_i| \}$
                    \FOR{$j=1,\dots, |\cD_i|$}
                        \STATE update $y_{i, e, j}^r = y_{i, e, j-1}^r - \eta_l^r \nabla f_{i\Pi^r_{i, e, j-1}}(y_{i, e, j-1}^r)$
                    \ENDFOR
                    \STATE $y_{i, e+1, 0}^r = y_{i, e, |\cD_i|}^r$
                \ENDFOR
                \STATE send $\Delta_i^r = y_{i,E, |\cD_i|}^r - x^r$ to server
            \ENDFOR
           \STATE  server computes $\Delta^r = \frac{n}{b} \frac{1}{\sum_{i\in \cS^r} w_i} \sum_{i\in \cS^r} w_i\Delta_i^r$
            \STATE server updates global model $x^{r+1} = x^{r} - \eta_g^r \Delta^r$
        \ENDFOR
        \end{algorithmic}
        \caption{\fedavg\ with RR}
        \label{alg:FedAvg}
    \end{algorithm}
\end{figure}

\section{Technicalities}

\subsection{Convex Smooth Functions}

We first state some implications of Assumption~\ref{ass:smoothness}. It implies the following quadratic upper bound on $f_{ij}$
\begin{equation}
\label{eqn:quad-upper}
    f_{ij}(y) \leq f_{ij}(x) + \inp{\nabla f_{ij}(x)}{y - x} + \frac{L}{2}\norm{y - x}^2\,.
\end{equation}
In addition, if the functions $\{f_{ij}\}$ are convex and $x^\star$ is an optimum of $f$, then
\begin{align}
\label{eqn:smoothness}
  \begin{split}
    \frac{1}{2L|\cD|}\sum_{i, j}\norm{\nabla f_{ij}(x) - \nabla
    f_{ij}(x^\star)}^2 \leq f(x) - f^\star\,.
  \end{split}
  \end{align}
Further, if $f_{ij}$ is twice-differentiable, then Assumption~\ref{ass:smoothness} implies that $\norm{\nabla^2 f_{ij}(x)} \leq L$ for all $x$
; see, e.g., %Theorem 2.1.5 in
\citep[Theorem~2.1.5]{nesterov2018lectures}, where $\norm{\cdot}$ refers here to the spectral norm. 

Next, we include lemma that is useful to provide bounds for any smooth and strongly-convex functions. It can be seen as a generalization of the standard strong convexity inequality \eqref{eq:strong-convexity}, but this bound can handle gradients computed at slightly perturbed points using smoothness assumption. This lemma turns out to be especially useful for local methods and was also used in the analysis of \texttt{FedAVG} in \citep{praneeth2019scaffold}.

 \begin{lemma}[perturbed strong convexity]\label{lem:magic}
    The following holds for any $L$-smooth and $\mu$-strongly convex function $h$, and any $x, y, z$ in the domain of $h$:
    \[
\inp{\nabla h(x)}{z - y} \geq h(z) - h(y) +\frac{\mu}{4}\norm{y - z}^2  - L \norm{z - x}^2\,.
    \]
\end{lemma}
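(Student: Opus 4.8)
The plan is to split the inner product through the intermediate point $x$ and then control each piece with one of the two structural inequalities available for $h$. Writing $\inp{\nabla h(x)}{z - y} = \inp{\nabla h(x)}{z - x} + \inp{\nabla h(x)}{x - y}$, I would bound the two summands with opposite-facing inequalities: smoothness for the step from $x$ to $z$, and strong convexity for the step from $x$ to $y$. The intuition is that smoothness gives an \emph{upper} bound on $h(z)$ in terms of $\inp{\nabla h(x)}{z-x}$ (hence a lower bound on that inner product), while strong convexity gives a \emph{lower} bound involving $\inp{\nabla h(x)}{x-y}$ directly.

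For the first summand, the quadratic upper bound \eqref{eqn:quad-upper} implied by $L$-smoothness, applied to the pair $(x,z)$, gives $h(z) \leq h(x) + \inp{\nabla h(x)}{z - x} + \frac{L}{2}\norm{z - x}^2$, i.e. $\inp{\nabla h(x)}{z - x} \geq h(z) - h(x) - \frac{L}{2}\norm{z - x}^2$. For the second summand, $\mu$-strong convexity evaluated at $(x,y)$ yields $\inp{\nabla h(x)}{x - y} \geq h(x) - h(y) + \frac{\mu}{2}\norm{x - y}^2$. Adding the two estimates, the $h(x)$ terms cancel and I obtain $\inp{\nabla h(x)}{z - y} \geq h(z) - h(y) + \frac{\mu}{2}\norm{x - y}^2 - \frac{L}{2}\norm{z - x}^2$.

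It remains to convert the favorable curvature term $\frac{\mu}{2}\norm{x-y}^2$, which is measured from $x$, into the desired $\frac{\mu}{4}\norm{y-z}^2$, which is measured from $z$. Here I would apply the elementary inequality $\norm{y - z}^2 \leq 2\norm{y - x}^2 + 2\norm{x - z}^2$, rearranged as $\norm{y-x}^2 \geq \frac{1}{2}\norm{y-z}^2 - \norm{x-z}^2$, so that $\frac{\mu}{2}\norm{x-y}^2 \geq \frac{\mu}{4}\norm{y-z}^2 - \frac{\mu}{2}\norm{x-z}^2$. Substituting this lower bound leaves $\inp{\nabla h(x)}{z - y} \geq h(z) - h(y) + \frac{\mu}{4}\norm{y-z}^2 - \left(\frac{L}{2} + \frac{\mu}{2}\right)\norm{z - x}^2$, and since strong convexity forces $\mu \leq L$, the coefficient $\frac{L}{2} + \frac{\mu}{2}$ is at most $L$, which gives the claim.

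The individual steps are all routine applications of the two assumptions; the only point requiring care is the final splitting, and this is where the main (mild) obstacle lies. The factor $\tfrac14$ in the statement, rather than the naive $\tfrac12$, is exactly what is needed so that the error term $-\tfrac{\mu}{2}\norm{x-z}^2$ coming from the triangle-type inequality can be absorbed into the $\norm{z-x}^2$ budget without the total coefficient exceeding $L$. In other words, the constants are balanced precisely to land on $L$: demanding a larger curvature coefficient would inflate the $\norm{z-x}^2$ penalty beyond $L$, so the $\tfrac14$ is the sharp trade-off this argument allows.
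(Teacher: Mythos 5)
Your proof is correct and follows essentially the same route as the paper's: the same split through $x$, the same two inequalities (quadratic upper bound from smoothness and the strong convexity inequality), the same relaxed triangle inequality to trade $\norm{x-y}^2$ for $\norm{y-z}^2$, and the same final absorption of $\frac{L+\mu}{2}$ into $L$ using $\mu \leq L$. No gaps.
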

\begin{proof}
    Given any $x$, $y$, and $z$, one gets the following two inequalities using smoothness and strong convexity of the function $h$:
    \begin{align*}
\inp{\nabla h(x)}{z - x} &\geq h(z) - h(x) - \frac{L}{2}\norm{z - x}^2\\
\inp{\nabla h(x)}{x - y} &\geq h(x) - h(y) + \frac{\mu}{2}\norm{y - x}^2 \,.
    \end{align*}
    Further, applying the relaxed triangle inequality gives
    \[
\frac{\mu}{2}\norm{y - x}^2 \geq \frac{\mu}{4}\norm{y - z}^2 - \frac{\mu}{2}\norm{x - z}^2\,.
    \]
    Combining all the inequalities together we have
    \[
\inp{\nabla h(x)}{z - y} \geq h(z) - h(y) +\frac{\mu}{4}\norm{y - z}^2  - \frac{L + \mu}{2}\norm{z - x}^2\,.
    \]
    The lemma follows since it has to hold $L \geq \mu$.
\end{proof}

\subsection{Convergence Derivations}

In this section, we cover some technical lemmas which are useful to unroll recursions and derive convergence rates.
computations later on. The provided Lemmas correspond to Lemma 1 and 2 in~\citep{praneeth2019scaffold}.

\begin{lemma}[linear convergence rate]
\label{lem:constant}
For every non-negative sequence $\{d_{r-1}\}_{r \geq 1}$ and any
parameters $\mu > 0$, $\eta_{\max} \in (0, 1/\mu]$, $c \geq 0$,
$R \geq \frac{1}{2\eta_{\max} \mu}$, there exists a constant step size
$\eta \leq \eta_{\max}$ and weights $v_r\eqdef(1-\mu\eta)^{1-r}$ such that
for $V_R \eqdef \sum_{r=1}^{R+1} v_r$,
\begin{align*}
    \Psi_R \eqdef \frac{1}{V_R}\sum_{r=1}^{R+1} \left( \frac{v_r}{\eta} \left(1-\mu \eta \right) d_{r-1} - \frac{v_r}{\eta} d_{r} + c \eta v_r \right) = \tilde \cO \left( \mu d_0 \exp\rbr*{- \mu\eta_{\max}  R } + \frac{c}{\mu R}  \right)\,.
\end{align*}
\end{lemma}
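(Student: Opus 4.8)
The plan is to exploit the fact that the weights $v_r \eqdef (1-\mu\eta)^{1-r}$ are engineered so that $v_r(1-\mu\eta) = v_{r-1}$, which turns the first two terms of each summand into a telescope. First I would rewrite
\[
\sum_{r=1}^{R+1}\rbr*{\tfrac{v_r}{\eta}(1-\mu\eta)d_{r-1} - \tfrac{v_r}{\eta}d_r} = \frac{1}{\eta}\sum_{r=1}^{R+1}\rbr*{v_{r-1}d_{r-1} - v_r d_r} = \frac{v_0 d_0 - v_{R+1}d_{R+1}}{\eta} \leq \frac{v_0 d_0}{\eta},
\]
using $d_{R+1}, v_{R+1}\geq 0$. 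Since the leftover contributes $\sum_{r=1}^{R+1}c\eta v_r = c\eta V_R$, this already gives $\Psi_R \le \frac{v_0 d_0}{\eta V_R} + c\eta$, so the sequence $\{d_r\}$ drops out entirely and the whole statement reduces to controlling the ratio $v_0/V_R$ and then tuning the scalar $\eta$.

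Next I would lower-bound $V_R$. Writing $V_R = \sum_{r=1}^{R+1}(1-\mu\eta)^{1-r} = \sum_{k=0}^{R}(1-\mu\eta)^{-k}$ as a geometric series with ratio $(1-\mu\eta)^{-1}>1$ gives $V_R = \frac{(1-\mu\eta)^{-R} - (1-\mu\eta)}{\mu\eta} \ge \frac{(1-\mu\eta)^{-R}-1}{\mu\eta}$. Combined with $v_0 = 1-\mu\eta\le 1$ and the elementary estimate $(1-\mu\eta)^{-R}\ge\exp(\mu\eta R)$ (from $\ln\tfrac{1}{1-t}\ge t$), this collapses the bound to
\[
\Psi_R \leq \frac{\mu d_0}{\exp(\mu\eta R)-1} + c\eta.
\]
Here I would note that the weights require $0<\eta<1/\mu$, so the degenerate endpoint $\eta=1/\mu$ is excluded (treated as a limit), and only a scalar optimization over $\eta\le\eta_{\max}$ now remains.

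The main obstacle is the step-size tuning: picking a single $\eta\le\eta_{\max}$ that simultaneously makes the exponential term of size $\mu d_0\exp(-\mu\eta_{\max}R)$ and the linear term of size $\tilde\cO\rbr*{\nicefrac{c}{(\mu R)}}$. I would set $\eta = \min\cbr*{\eta_{\max},\ \tfrac{1}{\mu R}\ln\rbr*{\max\cbr*{e,\ \nicefrac{\mu^2 R d_0}{c}}}}$ (with $\nicefrac{\cdot}{0}=+\infty$, so $c=0$ cleanly forces $\eta=\eta_{\max}$) and split into cases. If the minimum is attained at $\eta_{\max}$, then $\mu\eta_{\max}R\ge\tfrac12$ — which is exactly the hypothesis $R\ge\tfrac{1}{2\eta_{\max}\mu}$ — yields $\exp(\mu\eta_{\max}R)-1\ge\tfrac13\exp(\mu\eta_{\max}R)$, so the first term is $\cO\rbr*{\mu d_0\exp(-\mu\eta_{\max}R)}$, while the clamp forces $\eta_{\max}$ below the logarithmic value, giving $c\eta_{\max}\le\frac{c}{\mu R}\ln\rbr*{\max\cbr*{e,\nicefrac{\mu^2Rd_0}{c}}}=\tilde\cO\rbr*{\nicefrac{c}{(\mu R)}}$. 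If instead the minimum is the logarithmic term, then $\exp(\mu\eta R)=\max\cbr*{e,\nicefrac{\mu^2 R d_0}{c}}$; substituting makes the first term $\cO\rbr*{\nicefrac{c}{(\mu R)}}$ (using $\mu^2 R d_0\le ec$ when the max is $e$, and $\exp(x)-1\ge\tfrac12\exp(x)$ for $x\ge1$ otherwise), and the linear term is again $c\eta=\tilde\cO\rbr*{\nicefrac{c}{(\mu R)}}$. Collecting the cases yields the claimed $\tilde\cO$ bound. The only genuine care is the bookkeeping of multiplicative constants and the logarithmic factor absorbed into $\tilde\cO$, and verifying that the hypothesis $R\ge\nicefrac{1}{(2\eta_{\max}\mu)}$ is precisely what rescues the clamped case.
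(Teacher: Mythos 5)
Your proposal is correct and follows essentially the same route as the paper's proof: telescope the weighted sum to isolate $d_0/(\eta V_R)$, lower-bound $V_R$ via the geometric series, and then split the step-size tuning into the clamped case (where $R\ge \nicefrac{1}{2\eta_{\max}\mu}$ controls the exponential term) and the logarithmic case. Your use of $\max\{e,\cdot\}$ rather than the paper's $\max\{1,\cdot\}$ in the tuned step size is a minor cosmetic refinement that avoids a degenerate $\eta=0$, but the argument is otherwise the same.
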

\begin{proof}
  By substituting the value of $v_r$, we observe that we end up with a
  telescoping sum and estimate
    \begin{align*}
     \Psi_R = \frac{1}{\eta V_R} \sum_{r=1}^{R+1} \left(v_{r-1}d_{r-1} - v_{r} d_{r} \right) + \frac{c\eta}{V_R}\sum_{r=1}^{R+1} v_r \leq \frac{d_0}{\eta V_R} +  c \eta \,.
    \end{align*}
    When $R \geq \frac{1}{2\mu\eta}$, $(1 - \mu \eta)^R \leq \exp(-\mu\eta R) \leq \frac{2}{3}$. For such an $R$, we can lower bound $\eta V_R$ using
    \[
\eta V_R = \eta (1 - \mu \eta)^{-R} \sum_{r=0}^{R} (1 - \mu \eta)^r = \eta (1 - \mu \eta)^{-R} \frac{1 - (1 - \mu \eta)^R}{\mu \eta} \geq (1 - \mu \eta)^{-R} \frac{1}{3\mu}\,.
    \]
    This proves that for all $R \geq \frac{1}{2\mu \eta}$,
    \[
 \Psi_R \leq 3\mu d_0 (1 - \mu \eta)^{R}  +  c \eta \leq 3 \mu d_o \exp(-\mu \eta R) + c\eta\,.
    \]
    The lemma now follows by carefully tuning $\eta$. Consider the following two cases depending on the magnitude of $R$ and $\eta_{\max}$:
    \begin{itemize}
\item Suppose $\frac{1}{2\mu R} \leq \eta_{\max} \leq \frac{\log(\max(1,\mu^2 R d_0/ c))}{\mu R}$. Then we can choose $\eta = \eta_{\max}$,
\[
    \Psi_R \leq 3\mu d_0 \exp\left[-\mu \eta_{\max} R \right] + c \eta_{\max} \leq 3\mu d_0 \exp\left[-\mu \eta_{\max} R \right] + \tilde\cO\rbr[\bigg]{\frac{c}{\mu R}}\,.
\]
\item Instead if $\eta_{\max} > \frac{\log(\max(1,\mu^2 R d_0/ c))}{\mu R}$, we pick $\eta = \frac{\log(\max(1,\mu^2 R d_0/ c))}{\mu R}$ to claim that
\[
    \Psi_R \leq 3 \mu d_0 \exp\left[-\log(\max(1,\mu^2 R d_0/ c))\right] + \tilde\cO\rbr[\bigg]{\frac{c}{\mu R}} \leq \tilde\cO\rbr[\bigg]{\frac{c}{\mu R}} \,.
\]
    \end{itemize}
    \end{proof}

\begin{lemma}[sub-linear convergence rate]\label{lemma:general}
For every non-negative sequence $\{d_{r-1}\}_{r \geq 1}$ and any
parameters $\eta_{\max}  \geq 0$, $c \geq 0$,
$R \geq 0$, there exists a constant step size
$\eta \leq \eta_{\max}$ and weights $v_r  = 1$ such that,
\begin{align*}
    \Psi_R \eqdef \frac{1}{R+1}\sum_{r=1}^{R+1} \left( \frac{d_{r-1}}{\eta}  - \frac{d_r}{\eta} + c_1 \eta +c_2 \eta^2\right) \leq  \frac{d_0}{\eta_{\max}(R+1)} + \frac{2\sqrt{c_1 d_0}}{\sqrt{R +1}} + 2\rbr[\bigg]{\frac{d_0}{R+1}}^{\frac{2}{3}} c_2^{\frac{1}{3}}\,.
\end{align*}
\end{lemma}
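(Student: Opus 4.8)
The plan is to first collapse the telescoping part of the sum and only afterwards tune the step size. Since the weights are $v_r = 1$, the pairs $\frac{d_{r-1}}{\eta} - \frac{d_r}{\eta}$ telescope, giving $\sum_{r=1}^{R+1}\bigl(\frac{d_{r-1}}{\eta} - \frac{d_r}{\eta}\bigr) = \frac{d_0 - d_{R+1}}{\eta} \leq \frac{d_0}{\eta}$, where the inequality uses $d_{R+1} \geq 0$. The remaining summands $c_1\eta + c_2\eta^2$ are constant in $r$, so dividing the whole sum by $R+1$ yields
\[
\Psi_R \leq \frac{d_0}{\eta(R+1)} + c_1\eta + c_2\eta^2 \eqdef g(\eta).
\]
It then remains to exhibit a single $\eta \leq \eta_{\max}$ for which $g(\eta)$ is at most the claimed right-hand side.

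Next I would set
\[
\eta = \min\cbr*{\eta_{\max},\, \rbr{\tfrac{d_0}{c_1(R+1)}}^{1/2},\, \rbr{\tfrac{d_0}{c_2(R+1)}}^{1/3}},
\]
which automatically obeys $\eta \leq \eta_{\max}$ (a term is read as $+\infty$ and dropped whenever the corresponding $c_i$ vanishes, and the degenerate case $d_0 = 0$ makes both sides zero). The two noise terms are then controlled directly: because $\eta \leq (d_0/(c_1(R+1)))^{1/2}$ we get $c_1\eta \leq \sqrt{c_1 d_0/(R+1)}$, and because $\eta \leq (d_0/(c_2(R+1)))^{1/3}$ we get $c_2\eta^2 \leq c_2^{1/3}(d_0/(R+1))^{2/3}$. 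For the leading term, the key observation is that a minimum becomes a maximum under reciprocal,
\[
\frac{1}{\eta} = \max\cbr*{\tfrac{1}{\eta_{\max}},\, \rbr{\tfrac{c_1(R+1)}{d_0}}^{1/2},\, \rbr{\tfrac{c_2(R+1)}{d_0}}^{1/3}},
\]
and a maximum of nonnegative numbers is bounded by their sum. Substituting this into $\frac{d_0}{\eta(R+1)}$ and simplifying the three resulting products reproduces exactly $\frac{d_0}{\eta_{\max}(R+1)}$, $\sqrt{c_1 d_0/(R+1)}$, and $c_2^{1/3}(d_0/(R+1))^{2/3}$. Adding these to the two noise bounds gives the stated inequality, with the factors of $2$ in front of the last two terms arising from pairing each leading-term contribution with the corresponding noise bound.

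I do not expect a genuine obstacle, as this is a standard step-size-tuning lemma. The only steps requiring a little care are the $\min/\max$-duality together with subadditivity of the maximum, which is precisely what produces the clean additive form of the bound \emph{without} any case split on which argument attains the minimum (in contrast to the linear-rate Lemma~\ref{lem:constant}), and the bookkeeping for degenerate parameter values $c_1 = 0$, $c_2 = 0$, or $d_0 = 0$, in which the affected terms vanish and the inequality holds trivially.
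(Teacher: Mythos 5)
Your proposal is correct and follows the same overall route as the paper: telescope the sum to get $\Psi_R \leq \frac{d_0}{\eta(R+1)} + c_1\eta + c_2\eta^2$ and then tune $\eta$. The only difference is in the tuning step: the paper splits into two cases according to whether $\eta_{\max}$ is small enough (picking $\eta=\eta_{\max}$ in one case and $\eta=\min\bigl\{\sqrt{d_0/(c_1(R+1))},\sqrt[3]{d_0/(c_2(R+1))}\bigr\}$ in the other), whereas you take $\eta$ to be the minimum of all three candidates at once and bound $\frac{d_0}{\eta(R+1)}$ via $\max\{x,y,z\}\leq x+y+z$ for nonnegative arguments; this reproduces the stated right-hand side directly, with the factors of $2$ appearing exactly as you describe, and is if anything a slightly cleaner write-up of the same argument.
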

\begin{proof} Unrolling the sum, we can simplify
\[
    \Psi_R \leq \frac{d_0}{\eta (R+1)} + c_1 \eta + c_2 \eta^2\,.
\]
Similar to the strongly convex case (Lemma~\ref{lem:constant}), we distinguish the following cases:
\begin{itemize}
    \item When $R+1 \leq \frac{d_0}{c_1 \eta_{\max}^2}$, and $R+1 \leq \frac{d_0}{c_2 \eta_{\max}^3}$ we pick $\eta = \eta_{\max}$ to claim
    \[
        \Psi_R \leq \frac{d_0}{\eta_{\max} (R+1)} + c_1 \eta_{\max} + c_2 \eta_{\max}^2 \leq \frac{d_0}{\eta_{\max} (R+1)} + \frac{\sqrt{c_1 d_0}}{\sqrt{R +1}}+ \rbr[\bigg]{\frac{d_0}{R+1}}^{\frac{2}{3}} c_2^{\frac{1}{3}} \,.
    \]
    \item In the other case, we have $\eta_{\max}^2 \geq \frac{d_0}{c_1(R+1)}$ or $\eta_{\max}^3 \geq \frac{d_0}{c_2(R+1)}$. We choose $\eta = \min\cbr[\bigg]{\sqrt{\frac{d_0}{c_1(R+1)}}, \sqrt[3]{\frac{d_0}{c_2(R+1)}}}$ to prove
    \[
        \Psi_R \leq \frac{d_0}{\eta (R+1)} + c \eta = \frac{2\sqrt{c_1 d_0}}{\sqrt{R +1}} + 2\sqrt[3]{\frac{d_0^2 c_2}{(R+1)^2}} \,.\vspace{-5mm}
    \]
\end{itemize}
\end{proof}

\subsection{Variance bounds}

In this section, we provide bounds that are useful to bound the variance of the gradient estimators used in this work. We first introduce standard variance decomposition and then state two lemmas. 

For random variable $X$ and any $y \in \R^d$, the variance can be decomposed as
\begin{align}
    \label{eq:var_decomposition}
    \E{\norm{X - \E{X}}^2} = \E{\norm{X - y}^2} - \norm{\E{X} - y}^2.
\end{align}

The first lemma captures bounds for the variance of unbiased estimator with arbitrary proper sampling. This lemma is adapted from \citep{chen2020optimal} originally introduced in \citep{horvath2019nonconvex}. 

\begin{lemma}
\label{LEM:UPPERV}
Let $\zeta_1,\zeta_2,\dots,\zeta_n$ be vectors in $\mathbb{R}^d$ and $w_1,w_2,\dots,w_n$ be non-negative real numbers such that $\sum_{i=1}^n w_i=1$. Define $\Tilde{\zeta} \eqdef \sum_{i=1}^n w_i \zeta_i$. Let $\cS$ be a proper sampling. If $s\in\mathbb{R}^n$ is  such that
        \begin{equation}\label{eq:ESO}
            \mP -pp^\top \preceq {\rm \bf Diag}(p_1s_1,p_2s_2,\dots,p_n s_n),
        \end{equation}
        then
        \begin{equation}\label{eq:var_arbitrary}
            \E{ \norm*{ \sum \limits_{i\in \cS} \frac{w_i\zeta_i}{p_i} - \Tilde{\zeta} }^2 } \leq \sum \limits_{i=1}^n w_i^2\frac{s_i}{p_i} \norm{\zeta_i}^2,
        \end{equation}
        where the expectation is taken over $\cS$. 
\end{lemma}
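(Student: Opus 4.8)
The plan is to compute the left-hand side directly as a second moment and then convert the matrix (Loewner) inequality \eqref{eq:ESO} into the desired scalar bound. First I would encode the random sampling through indicator variables: set $\chi_i \eqdef \mathbf{1}_{\{i\in\cS\}}$, so that $\sum_{i\in\cS}\frac{w_i\zeta_i}{p_i} = \sum_{i=1}^n \chi_i \frac{w_i\zeta_i}{p_i}$. Since $\cS$ is proper, $\Exp{\chi_i} = p_i > 0$ and $\Exp{\chi_i\chi_j} = \mP_{ij}$. Taking expectation immediately gives $\E{\sum_{i\in\cS}\frac{w_i\zeta_i}{p_i}} = \sum_i w_i\zeta_i = \Tilde\zeta$, so the estimator is unbiased and the quantity we must bound is exactly its variance (this is where \eqref{eq:var_decomposition} applies, with $y = \Tilde\zeta$).

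Next I would introduce the shorthand $a_i \eqdef \frac{w_i}{p_i}\zeta_i \in \R^d$ and expand the squared norm into a double sum. Writing the deviation as $\sum_i (\chi_i/p_i - 1) w_i \zeta_i = \sum_i(\chi_i - p_i)a_i$ and using bilinearity of the inner product,
\begin{align*}
\E{\norm*{\sum_{i\in\cS}\frac{w_i\zeta_i}{p_i} - \Tilde\zeta}^2}
= \sum_{i=1}^n\sum_{j=1}^n \Exp{(\chi_i - p_i)(\chi_j - p_j)}\,\inp{a_i}{a_j}
= \sum_{i,j}(\mP_{ij} - p_i p_j)\,\inp{a_i}{a_j},
\end{align*}
where the last equality uses $\Exp{(\chi_i-p_i)(\chi_j-p_j)} = \mP_{ij} - p_ip_j$, i.e. the $(i,j)$ entry of the matrix $\mP - pp^\top$.

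The key step, and the one I expect to be the main (though routine) obstacle, is converting the Loewner-order hypothesis $\mP - pp^\top \preceq {\rm \bf Diag}(p_1s_1,\dots,p_ns_n)$ into a bound on the scalar quadratic form above, which involves vectors $a_i$ rather than scalars. The device is a coordinatewise decomposition: writing $a_i = (a_i^{(1)},\dots,a_i^{(d)})$ and collecting the $k$-th coordinates into $a^{(k)} \eqdef (a_1^{(k)},\dots,a_n^{(k)})^\top\in\R^n$, one has $\sum_{i,j}(\mP_{ij}-p_ip_j)\inp{a_i}{a_j} = \sum_{k=1}^d (a^{(k)})^\top(\mP - pp^\top)a^{(k)}$. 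Applying the matrix inequality to each vector $a^{(k)}$ and summing over $k$ gives the upper bound $\sum_{k=1}^d (a^{(k)})^\top {\rm \bf Diag}(p_1s_1,\dots,p_ns_n) a^{(k)} = \sum_{i=1}^n p_i s_i \norm{a_i}^2$.

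Finally I would substitute back $a_i = \frac{w_i}{p_i}\zeta_i$, so that $p_i s_i \norm{a_i}^2 = p_i s_i \frac{w_i^2}{p_i^2}\norm{\zeta_i}^2 = w_i^2\frac{s_i}{p_i}\norm{\zeta_i}^2$, yielding $\sum_{i=1}^n w_i^2\frac{s_i}{p_i}\norm{\zeta_i}^2$, which is precisely \eqref{eq:var_arbitrary}. The only subtlety worth checking is that the constraint $\sum_i w_i = 1$ is used solely to guarantee that $\E{\sum_{i\in\cS}\frac{w_i\zeta_i}{p_i}} = \Tilde\zeta$ equals the intended target, so that the computed second moment is indeed the variance; the bounding argument itself does not need it.
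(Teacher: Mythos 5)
Your proof is correct and follows essentially the same route as the paper: indicator variables, the unbiasedness computation, the expansion of the variance into the quadratic form $\sum_{i,j}(\mP_{ij}-p_ip_j)\inp{a_i}{a_j}$ with $a_i = \frac{w_i}{p_i}\zeta_i$, and then the Loewner hypothesis applied to bound it by $\sum_i p_i s_i\norm{a_i}^2$. The only cosmetic difference is in the last step: the paper writes the quadratic form as $e^\top((\mP-pp^\top)\circ \mA^\top\mA)e$ and bounds it via the Hadamard product (implicitly the Schur product theorem), whereas you justify the same inequality by the coordinatewise decomposition $\sum_{i,j}M_{ij}\inp{a_i}{a_j}=\sum_{k}(a^{(k)})^\top M a^{(k)}$, which is an equally valid and arguably more self-contained argument.
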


\begin{proof}
  Let $1_{i\in \cS} = 1$ if $i\in \cS$ and $1_{i\in \cS} = 0$ otherwise. Likewise, let $1_{i,j\in \cS} = 1$ if $i,j\in \cS$ and  $1_{i,j\in \cS} = 0$ otherwise. Note that $\E{1_{i\in \cS}} = p_i$ and $\E{1_{i,j\in \cS}}=p_{ij}$. Next, let us compute the mean of $ X \eqdef \sum_{i\in \cS}\frac{w_i \zeta_i}{p_i}$:
    \begin{align*}
            \E{X} 
            = \E{ \sum_{i\in \cS}\frac{w_i \zeta_i}{p_i} } 
            = \E{ \sum_{i=1}^n \frac{w_i \zeta_i}{p_i} 1_{i\in \cS} } 
            = \sum_{i=1}^n \frac{w_i \zeta_i}{p_i} \E{1_{i\in \cS}} 
            = \sum_{i=1}^n w_i \zeta_i
            = \Tilde{\zeta}.
    \end{align*}
    Let $\boldsymbol{A}=[a_1,\dots,a_n]\in \mathbb{R}^{d\times n}$, where $a_i=\frac{w_i\zeta_i}{p_i}$, and let $e$ be the vector of all ones in $\mathbb{R}^n$. We now write the variance of $X$ in a form which will be convenient to establish a bound:
    \begin{equation}\label{eq:variance_of_X}
        \begin{split}
            \E{\norm{X - \E{X}}^2}
            &= \E{\norm{X}^2} - \norm{ \E{X} }^2 \\
            &=\E{\norm{\sum_{i\in \cS}\frac{w_i \zeta_i}{p_i}}^2} - \norm{\Tilde{\zeta}}^2 \\
            &=\E{ \sum_{i,j} \frac{w_i\zeta_i^\top}{p_i}\frac{w_j\zeta_j}{p_j}  1_{i,j\in \cS} } - \norm{\Tilde{\zeta}}^2 \\
            &= \sum_{i,j} p_{ij} \frac{w_i\zeta_i^\top}{p_i}\frac{w_j\zeta_j}{p_j} - \sum_{i,j} w_iw_j\zeta_i^\top\zeta_j \\
            &= \sum_{i,j} (p_{ij}-p_ip_j)a_i^\top a_j \\
            &= e^\top ((\boldsymbol{P}-pp^\top) \circ \boldsymbol{A}^\top\boldsymbol{A})e.
        \end{split}
    \end{equation}
    
    Since, by assumption, we have $\boldsymbol{P}-pp^\top \preceq \textbf{Diag}(p\circ s)$, we can further bound
    \begin{align*}
        e^\top ((\boldsymbol{P}-pp^\top) \circ \boldsymbol{A}^\top\boldsymbol{A})e \leq e^\top (\textbf{Diag}(p\circ s)\circ \boldsymbol{A}^\top\boldsymbol{A}) e = \sum_{i=1}^n p_is_i\norm{a_i}^2.
    \end{align*}
\end{proof}

The second lemma bounds the variance of the estimator obtained using the sampling without replacement. The provided lemma is adopted from \citep{mishchenko2020random}.

\begin{lemma}\label{lem:sampling_wo_replacement}
	Let $\zeta_1,\dotsc, \zeta_n\in \R^d$ be fixed vectors, $$\Tilde{\zeta} \eqdef \frac{1}{n}\sum_{i=1}^n \zeta_i$$ be their average and $$\sigma^2 \eqdef \frac{1}{n}\sum_{i=1}^n \norm{\zeta_i-\Tilde{\zeta}}^2$$ be the population variance. Fix any $k\in\{1,\dotsc, n\}$, let $\zeta_{\pi_1}, \dotsc \zeta_{\pi_k}$ be sampled uniformly without replacement from $\{\zeta_1,\dotsc, \zeta_n\}$ and $\Tilde{\zeta}^k_\pi$ be their average. Then, the sample average and variance are given by
	\begin{align}
	\label{eq:sampling_wo_replacement}
	\begin{split}
	    \E{\Tilde{\zeta}^k_\pi}&=\Tilde{\zeta} \\
	    \E{\norm{\Tilde{\zeta}^k_\pi - \Tilde{\zeta}}^2}&= \frac{n-k}{k(n-1)}\sigma^2.
	\end{split}
	\end{align}
\end{lemma}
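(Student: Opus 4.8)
The plan is to reduce everything to the centered case and then expand the squared norm into diagonal and off-diagonal contributions. First I would observe that both claimed identities are invariant under a global translation $\zeta_i \mapsto \zeta_i + c$: such a shift sends $\tilde\zeta \mapsto \tilde\zeta + c$ and $\tilde\zeta^k_\pi \mapsto \tilde\zeta^k_\pi + c$, leaving the difference $\tilde\zeta^k_\pi - \tilde\zeta$ and the population variance $\sigma^2$ unchanged. Hence I may assume without loss of generality that $\tilde\zeta = 0$, so that $\sum_{i=1}^n \zeta_i = 0$ and $\sigma^2 = \tfrac1n\sum_{i=1}^n \norm{\zeta_i}^2$. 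For the unbiasedness claim, I would use symmetry of sampling without replacement: for each fixed position $j$, the index $\pi_j$ is uniformly distributed on $\{1,\dots,n\}$, so $\E{\zeta_{\pi_j}} = \tilde\zeta$, and averaging over $j=1,\dots,k$ gives $\E{\tilde\zeta^k_\pi} = \tilde\zeta$.

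For the variance, working in the centered case, I would expand
\[
\E{\norm{\tilde\zeta^k_\pi}^2} = \frac{1}{k^2}\sum_{j=1}^k\sum_{l=1}^k \E{\inp{\zeta_{\pi_j}}{\zeta_{\pi_l}}},
\]
and split the double sum into the $k$ diagonal terms ($j=l$) and the $k(k-1)$ off-diagonal terms ($j\neq l$). Each diagonal term equals $\E{\norm{\zeta_{\pi_j}}^2} = \tfrac1n\sum_i \norm{\zeta_i}^2 = \sigma^2$, again by symmetry. For an off-diagonal term, the ordered pair $(\pi_j,\pi_l)$ with $j\neq l$ is uniform over the $n(n-1)$ ordered pairs of distinct indices, so
\[
\E{\inp{\zeta_{\pi_j}}{\zeta_{\pi_l}}} = \frac{1}{n(n-1)}\sum_{i\neq i'}\inp{\zeta_i}{\zeta_{i'}}.
\]

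The key step—and the only place any real identity is used—is evaluating this cross-sum via $\sum_{i\neq i'}\inp{\zeta_i}{\zeta_{i'}} = \norm{\sum_i \zeta_i}^2 - \sum_i\norm{\zeta_i}^2 = -n\sigma^2$, where the first term vanishes precisely because we centered so that $\sum_i \zeta_i = 0$. This yields $\E{\inp{\zeta_{\pi_j}}{\zeta_{\pi_l}}} = -\sigma^2/(n-1)$ for $j\neq l$. Substituting the diagonal and off-diagonal values back, I would obtain
\[
\E{\norm{\tilde\zeta^k_\pi}^2} = \frac{1}{k^2}\rbr*{k\sigma^2 - \frac{k(k-1)}{n-1}\sigma^2} = \frac{\sigma^2}{k}\rbr*{1 - \frac{k-1}{n-1}} = \frac{n-k}{k(n-1)}\sigma^2,
\]
which is the desired formula. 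I do not expect a genuine obstacle here: the computation is elementary once the centering reduction is made, and the subtle point worth stating carefully is simply that the negative correlation between distinct draws (the $-\sigma^2/(n-1)$ term) is exactly what produces the finite-population correction factor $\tfrac{n-k}{n-1}$ relative to the with-replacement variance $\sigma^2/k$.
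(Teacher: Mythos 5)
Your proof is correct and follows essentially the same route as the paper's: both arguments reduce to computing the pairwise covariance of distinct draws, obtaining $-\sigma^2/(n-1)$ from the fact that the centered vectors sum to zero, and then splitting the double sum into $k$ diagonal and $k(k-1)$ off-diagonal terms. Your preliminary translation to the centered case is a purely cosmetic repackaging of the paper's direct manipulation of $\zeta_i - \Tilde{\zeta}$.
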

\begin{proof}
	The first claim follows by linearity of expectation and uniformity of  sampling:
	\[
		\E{\Tilde{\zeta}^k_\pi} 
		= \frac{1}{k}\sum_{i=1}^k \E{\zeta_{\pi_i}}
		= \frac{1}{k}\sum_{i=1}^k \Tilde{\zeta}
		= \Tilde{\zeta}.
	\]
	To prove the second claim, let us first establish that the identity $\mathrm{cov}(\zeta_{\pi_i}, \zeta_{\pi_j})=-\frac{\sigma^2}{n-1}$ holds  for any $i\neq j$. Indeed,
	\begin{align*}
		\mathrm{cov}(\zeta_{\pi_i}, \zeta_{\pi_j})
		&= \E{ \inp{\zeta_{\pi_i} - \Tilde{\zeta}}{ \zeta_{\pi_j} - \Tilde{\zeta}}}
		= \frac{1}{n(n-1)}\sum_{l=1}^n\sum_{m=1,m\neq l}^n\inp{\zeta_l - \Tilde{\zeta}}{ \zeta_m - \Tilde{\zeta}} \\
		&= \frac{1}{n(n-1)}\sum_{l=1}^n\sum_{m=1}^n\inp{\zeta_l - \Tilde{\zeta}}{ \zeta_m - \Tilde{\zeta}} - \frac{1}{n(n-1)}\sum_{l=1}^n \norm{\zeta_l - \Tilde{\zeta}}^2 \\
		&= \frac{1}{n(n-1)}\sum_{l=1}^n \inp{\zeta_l - \Tilde{\zeta}}{ \sum_{m=1}^n(\zeta_m - \Tilde{\zeta})} - \frac{\sigma^2}{n-1} \\
		&=-\frac{\sigma^2}{n-1}.
	\end{align*}
This identity helps us to establish the formula for sample variance:
	\begin{align*}
		\E{\norm{\Tilde{\zeta}^k_\pi - \Tilde{\zeta}}^2}
		&= \frac{1}{k^2} \sum_{i=1}^k\sum_{j=1}^k \mathrm{cov}(\zeta_{\pi_i}, \zeta_{\pi_j}) \\
    &= \frac{1}{k^2}\E{\sum_{i=1}^k \norm{\zeta_{\pi_i} - \Tilde{\zeta}}^2} + \sum_{i=1}^k\sum_{j=1,j\neq i}^{k} \mathrm{cov}(\zeta_{\pi_i}, \zeta_{\pi_j})  \\
		&=\frac{1}{k^2}\left(k\sigma^2 - k(k-1)\frac{\sigma^2}{n-1}\right)
    = \frac{n-k}{k(n-1)}\sigma^2.
    \qedhere
	\end{align*}
\end{proof}

\subsection{Technical Lemmas}

    Next, we state a relaxed triangle inequality for the squared
    $\ell_2$ norm.
\begin{lemma}[relaxed triangle inequality]\label{lem:norm-sum}
    Let $\{v_1,\dots,v_\tau\}$ be $\tau$ vectors in $\R^d$. Then the following are true
    \begin{align}
        \label{eq:triangle}
        \norm{v_i + v_j}^2 \leq (1 + a)\norm{v_i}^2 + \left(1 + \frac{1}{a}\right)\norm{v_j}^2 \text{ for any } a >0
    \end{align}
    and
    \begin{align}
        \label{eq:sum_upper}
        \norm*{\sum_{i=1}^\tau v_i}^2 \leq \tau \sum_{i=1}^\tau\norm{v_i}^2.
    \end{align}
\end{lemma}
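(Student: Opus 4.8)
The plan is to establish the two inequalities independently; both are elementary, following from expanding the squared norm and invoking, respectively, a weighted Young's inequality and Cauchy--Schwarz.

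For \eqref{eq:triangle}, I would first expand the left-hand side as
\begin{equation*}
\norm{v_i + v_j}^2 = \norm{v_i}^2 + 2\inp{v_i}{v_j} + \norm{v_j}^2,
\end{equation*}
and then control the cross term via the weighted Young's inequality $2\inp{v_i}{v_j} \leq a\norm{v_i}^2 + \tfrac{1}{a}\norm{v_j}^2$, which holds for every $a>0$. This bound in turn follows by expanding the nonnegative quantity $\norm*{\sqrt{a}\,v_i - \tfrac{1}{\sqrt{a}}v_j}^2 \geq 0$, whose cross term is exactly $-2\inp{v_i}{v_j}$. Substituting back and grouping the $\norm{v_i}^2$ and $\norm{v_j}^2$ contributions produces precisely the stated coefficients $(1+a)$ and $(1+\tfrac{1}{a})$.

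For \eqref{eq:sum_upper}, I would write $\sum_{i=1}^\tau v_i = \sum_{i=1}^\tau 1\cdot v_i$ and apply Cauchy--Schwarz (equivalently, convexity of $x\mapsto\norm{x}^2$ through Jensen's inequality), giving
\begin{equation*}
\norm*{\sum_{i=1}^\tau v_i}^2 \leq \Bigl(\sum_{i=1}^\tau 1^2\Bigr)\Bigl(\sum_{i=1}^\tau \norm{v_i}^2\Bigr) = \tau\sum_{i=1}^\tau \norm{v_i}^2.
\end{equation*}
An alternative route is to apply \eqref{eq:triangle} inductively in $\tau$, but the Cauchy--Schwarz argument is cleaner and sidesteps the bookkeeping of accumulated constants. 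There is no genuine obstacle in either part; the only point demanding a little care is that the symmetric split in Young's inequality be chosen so the constants come out as stated (and, should one take the inductive route for the second inequality, that the accumulated factors telescope to exactly $\tau$ rather than a larger constant).
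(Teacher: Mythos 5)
Your proposal is correct and follows essentially the same route as the paper: the first inequality via the nonnegativity of $\norm*{\sqrt{a}\,v_i - \tfrac{1}{\sqrt{a}}v_j}^2$ (equivalently, the paper's exact identity with the completed square), and the second via Cauchy--Schwarz/Jensen applied to the convexity of the squared norm. No gaps.
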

\begin{proof}
The proof of the first statement for any $a > 0$ follows from the identity:
\[
    \norm{v_i + v_j}^2 = (1 + a)\norm{v_i}^2 + \left(1 + \frac{1}{a}\right)\norm{v_j}^2 - \norm*{\sqrt{a} v_i + \frac{1}{\sqrt{a}}v_j}^2\,.
\]
For the second inequality, we use the convexity of
$x \rightarrow \norm{x}^2$ and Jensen's inequality
\[
     \norm*{\frac{1}{\tau}\sum_{i=1}^\tau v_i }^2 \leq \frac{1}{\tau}\sum_{i=1}^\tau\norm[\big]{ v_i }^2\,.
\]
\end{proof}

\newpage

\begin{figure}[t]
	\centering
    \begin{algorithm}[H]
        \begin{algorithmic}[1]
        \STATE {\bfseries Input:}  initial global model $x^0$, global and local step sizes $\eta_g^r$, $\eta_l^r$, proper distribution $\cS$ 
        \FOR{each round $r=0,\dots,R-1$}
            \STATE server broadcasts $x^{r}$ to all clients $i\in \mathcal{S}^r \sim S$
            \FOR{each client $i\in \cS^r$ (in parallel)}
                \STATE initialize local model $y_{i, 0, 0}^r\leftarrow x^{r}$
                \FOR{$e=1,\dots, E$}
                    \STATE Sample permutation $\{\Pi^r_{i, e, 0}, \hdots, \Pi^k_{i, e, |\cD_i|-1}\}$ of $\{1, \hdots, |\cD_i| \}$
                    \FOR{$j=1,\dots, |\cD_i|$}
                        \STATE update $y_{i, e, j}^r = y_{i, e, j-1}^r - \nicefrac{\eta_l^r}{|\cD_i|} \nabla f_{i\Pi^r_{i, e, j-1}}(y_{i, e, j-1}^r)$
                    \ENDFOR
                    \STATE $y_{i, e+1, 0}^r = y_{i, e, |\cD_i|}^r$
                \ENDFOR
                \STATE send $\Delta_i^r = y_{i,E, |\cD_i|}^r - x^r$ to server
            \ENDFOR
           \STATE  server computes $\Delta^r = \sum_{i\in \cS^r} \frac{w_i}{p_i}\Delta_i^r$
            \STATE server updates global model $x^{r+1} = x^{r} - \eta_g^r \Delta^r$
        \ENDFOR
        \end{algorithmic}
        \caption{\fedshuffle}
        \label{alg:FedShuffle}
    \end{algorithm}
\end{figure}

\section{Algorithm~\ref{alg:FedShuffle}: Convergence Analysis (Proof of Theorem~\ref{thm:fedavg-full})}

The style of our proof technique is related to the analysis of \fedavg\ of \citep{praneeth2019scaffold}.  We start with proof for convex functions. By $\EE{r}{\cdot}$, we denote the expectation conditioned on the all history prior to communication round $r$. We first establish the bound on the progress in a single communication round.

\begin{lemma}\textbf{\em (one round progress)}\label{lem:fedavg-progress}
Suppose Assumptions \ref{ass:strong-convexity} -- \ref{ass:bounded_var} hold. For any constant step sizes $\eta_l^r \eqdef \eta_l$ and $\eta_l^r \eqdef \eta_l$  satisfying
$\eta_l \leq \frac{1}{(1 + MB^2)4LE\eta_g}$ and effective step size
$\tilde\eta \eqdef E \eta_g \eta_l$, the updates of \fedshuffle satisfy
\[
    \E{\norm{x^{r} - x^\star}^2 }\leq \left(1 - \frac{\mu
      \tilde\eta}{2}\right) \E {\norm{x^{r-1} - x^\star}^2} - \tilde\eta \EE{r-1}{f(x^{r-1}) - f^\star}  \\
      +3L \tilde\eta\xi^r + 2\tilde\eta^2 MG^2\,,
\]
where $\xi_{r}$ is the drift caused by the local updates on the
clients defined to be
\[
    \xi^{r} \eqdef \frac{1}{|\cD|E}\sum_{e=1}^{E}\sum_{i=1}^n \sum_{j=1}^{|\cD_i|}
    \EE{r-1}{\norm{y_{i,e, j-1}^r - x^{r-1}}^2}\,
\]
and $M \eqdef \max_{i \in [n]} \cbr*{\frac{s_i}{p_i} w_i}$.
\end{lemma}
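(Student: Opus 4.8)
I would follow the one-step descent template used for \fedavg\ in \citep{praneeth2019scaffold}, but exploit the structure of random reshuffling to avoid any recursive client-drift estimate. Writing the aggregated local gradient of client $i$ as $g_i \eqdef \tfrac{1}{|\cD_i|}\sum_{e=1}^{E}\sum_{j=1}^{|\cD_i|}\nabla f_{i\Pi^r_{i,e,j-1}}(y^r_{i,e,j-1})$, the server update from the anchor $x^{r-1}$ reads $x^{r}-x^{r-1} = -\eta_g\eta_l\sum_{i\in\cS^r}\tfrac{w_i}{p_i}g_i$, and recall $\tilde\eta = E\eta_g\eta_l$. The starting point is the expansion $\norm{x^{r}-x^\star}^2 = \norm{x^{r-1}-x^\star}^2 + 2\inp{x^{r}-x^{r-1}}{x^{r-1}-x^\star} + \norm{x^{r}-x^{r-1}}^2$, followed by the conditional expectation $\EE{r-1}{\cdot}$, which averages jointly over the (independent) sampling $\cS^r$ and the per-epoch permutations.

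For the linear term I would first average over the sampling using $\EE{\cS^r}{\sum_{i\in\cS^r}\tfrac{w_i}{p_i}(\cdot)} = \sum_{i=1}^n w_i(\cdot)$, which turns it into a weighted sum over all clients and all local steps. To each summand $\inp{\nabla f_{i\Pi^r_{i,e,j-1}}(y^r_{i,e,j-1})}{x^{r-1}-x^\star}$ I would apply perturbed strong convexity (Lemma~\ref{lem:magic}) with $h = f_{i\Pi^r_{i,e,j-1}}$, evaluation point $x = y^r_{i,e,j-1}$, and $z = x^{r-1}$, $y = x^\star$. Summing $\tfrac{1}{|\cD_i|}\sum_j$ collapses the function values to $f_i$, the weights $w_i$ collapse them to $f$, and the $E$ epochs give a factor $E$; since $w_i/|\cD_i| = 1/|\cD|$, the smoothness penalty $L\norm{z-x}^2$ aggregates to exactly $L\xi^r$. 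This bounds the linear term by $-2\tilde\eta(f(x^{r-1})-f^\star) - \tfrac{\mu\tilde\eta}{2}\norm{x^{r-1}-x^\star}^2 + 2L\tilde\eta\,\xi^r$.

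For the quadratic term $\eta_g^2\eta_l^2\norm{\sum_{i\in\cS^r}\tfrac{w_i}{p_i}g_i}^2$ I would apply the variance decomposition \eqref{eq:var_decomposition} over $\cS^r$ together with the arbitrary-sampling inequality of Lemma~\ref{LEM:UPPERV} (with $\zeta_i = g_i$), bounding it by $\eta_g^2\eta_l^2\big(\norm{\sum_i w_i g_i}^2 + M\sum_i w_i\norm{g_i}^2\big)$ after using $w_i^2 s_i/p_i \leq M w_i$. The crucial observation that tames reshuffling is that it is \emph{unbiased at the anchor}: $\tfrac{1}{|\cD_i|}\sum_{j}\nabla f_{ij}(x^{r-1}) = \nabla f_i(x^{r-1})$ for each epoch, so $g_i = E\nabla f_i(x^{r-1}) + A_i$ with $A_i$ collecting the differences $\nabla f_{i\Pi}(y^r_{i,e,j-1}) - \nabla f_{i\Pi}(x^{r-1})$. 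By $L$-smoothness and \eqref{eq:sum_upper}, $\sum_i w_i\norm{A_i}^2 \leq E^2L^2\xi^r$, while $\sum_i w_i\norm{\nabla f_i}^2$ and $\norm{\nabla f}^2$ are controlled by the convex gradient-similarity bound \eqref{eq:heterogeneity-convex} and by $\norm{\nabla f(x^{r-1})}^2 \leq 2L(f(x^{r-1})-f^\star)$. Collecting terms with $\eta_g^2\eta_l^2E^2 = \tilde\eta^2$ bounds the quadratic term by $\tilde\eta^2\big(4L(1+MB^2)(f(x^{r-1})-f^\star) + 2MG^2 + 2(1+M)L^2\xi^r\big)$.

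Finally I would impose $\tilde\eta \leq \tfrac{1}{4L(1+MB^2)}$ (equivalently $\eta_l \leq \tfrac{1}{(1+MB^2)4LE\eta_g}$), so that $4L(1+MB^2)\tilde\eta^2(f(x^{r-1})-f^\star) \leq \tilde\eta(f(x^{r-1})-f^\star)$; subtracting this from the $-2\tilde\eta(f(x^{r-1})-f^\star)$ of the linear term leaves the claimed $-\tilde\eta(f(x^{r-1})-f^\star)$ and the $2\tilde\eta^2MG^2$ noise term, while the two drift contributions $2L\tilde\eta\,\xi^r$ and $2(1+M)L^2\tilde\eta^2\,\xi^r$ consolidate to $3L\tilde\eta\,\xi^r$ (the step-size restriction, together with $B \geq 1$, which holds without loss of generality since $\norm{\nabla f}^2 \leq \sum_i w_i\norm{\nabla f_i}^2$ by Jensen's inequality, guarantees the consolidation). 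The main obstacle is the step-to-step dependence of random reshuffling, which makes the local gradients biased estimators of $\nabla f_i$; the anchoring identity above is what resolves it, since it moves the entire reshuffling error into the drift quantity $\xi^r$ and thereby avoids the recursive drift bounds required in the standard \fedavg\ analysis. A secondary point needing care is that the client-sampling variance must be handled through Lemma~\ref{LEM:UPPERV} and the spectral bound \eqref{eq:ESO_main} rather than a naive independence argument, because the aggregation weights $w_i/p_i$ multiply random client vectors.
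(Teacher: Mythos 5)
Your proposal is correct and follows essentially the same route as the paper's proof: the same split into a linear term (handled by the perturbed strong convexity Lemma~\ref{lem:magic} with the identical variable assignment, so that the smoothness penalties aggregate exactly to $L\xi^r$) and a quadratic term (handled by the variance decomposition \eqref{eq:var_decomposition} together with Lemma~\ref{LEM:UPPERV}, then anchored at $x^{r-1}$ via the relaxed triangle inequality so that the reshuffling error lands in $\xi^r$), with matching constants throughout. The only cosmetic differences are that you make the "unbiased at the anchor" identity and the justification of $B \geq 1$ explicit, both of which the paper uses implicitly.
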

\begin{proof}
For a better readability of the proofs in one round progress, we drop the superscript that represents the current completed communication round $r-1$.

By the definition in Algorithm~\ref{alg:FedShuffle}, the update $\Delta$ can be written as 
\[
    \Delta = - \eta_g \sum_{i \in \cS} \frac{w_i}{p_i} \Delta_i = -\frac{\tilde\eta}{E|\cD|} \sum_{i \in \cS} \sum_{e=1}^E \sum_{j=1}^{|\cD_i|} \frac{1}{p_i}\nabla f_{i\Pi_{i,e,j-1}}(y_{i,e,j-1})\,.
\]
We adopt the convention that summation $\sum_{i \in \cM ,e,j}$ ($\cM$ is either $[n]$ or $\cS$) refers to the summations $\sum_{i \in \cM} \sum_{e=1}^E \sum_{j=1}^{|\cD_i|}$ unless otherwise stated. Furthermore, we denote $g_{i,e,j} \eqdef \nabla f_{i\Pi_{i,e,j-1}}(y_{i,e,j-1})$. Using above, we proceed as
\begin{align*}
    \EE{r-1}{\norm{x + \Delta - x^\star}^2} &= \norm{x - x^\star}^2 \underbrace{-2\EE{r-1}{\frac{\tilde\eta}{E|\cD|}\sum_{i \in \cS ,e,j}\frac{1}{p_i}\inp{g_{i,e,j}}{x - x^\star}}}_{\cA_1} + \underbrace{\tilde\eta^2\EE{r-1}{\norm*{\frac{1}{E|\cD|}\sum_{i \in \cS ,e,j}\frac{1}{p_i} g_{i,e,j}}^2}}_{\cA_2}\,.
\end{align*}
To bound the term $\cA_1$, we apply Lemma~\ref{lem:magic} to each term of the summation with $h = f_{ij}$, $x = y_{i,e,j-1}$, $y = x^\star$, and $z = x$. Therefore, 
\begin{align*}
    \cA_1 &= -\EE{r-1}{\frac{2\tilde\eta}{E|\cD|}\sum_{i \in \cS ,e,j} \frac{1}{p_i} \inp{g_{i,e,j}}{x - x^\star}} \\
    &\leq \EE{r-1}{\frac{2\tilde\eta}{E|\cD|}\sum_{i \in \cS ,e,j} \frac{1}{p_i} \rbr*{f_{i\Pi_{i,e,j-1}}(x^\star) - f_{i\Pi_{i,e,j-1}}(x) + L \norm{y_{i,e,j-1} - x}^2 - \frac{\mu}{4}\norm{x - x^\star}^2}}\\
    &= -2\tilde\eta\rbr*{f(x) - f^\star + \frac{\mu}{4}\norm{x - x^\star}^2} + 2L \tilde\eta\xi\,.
\end{align*}
    For the second term $\cA_2$, we have
    \begin{align*}
        \cA_2 &= \tilde\eta^2\EE{r-1}{\norm*{\frac{1}{E|\cD|}\sum_{i \in \cS , e, j} \frac{1}{p_i} g_{i, e, j}}^2}\\
        &\overset{\eqref{eq:var_decomposition}}{\leq} \frac{\tilde\eta^2}{E^2|\cD|^2}\EE{r-1}{\norm*{\sum_{i \in \cS ,e,j} \frac{1}{p_i} g_{i,e,j} - \sum_{i \in [n] ,e,j} g_{i,e,j}}^2 + \norm*{\sum_{i \in [n] ,e,j} g_{i,e,j}}^2} \\
        &\overset{\eqref{eq:var_arbitrary}}{\leq} \frac{\tilde\eta^2}{E^2|\cD|^2}\EE{r-1}{\sum_{i \in [n]}\frac{s_i}{p_i}\norm*{ \sum_{e, j} g_{i,e,j}}^2 + \norm*{\sum_{i \in [n] ,e,j} g_{i,e,j}}^2} \\
        &\overset{\eqref{eq:sum_upper}}{\leq} \frac{2\tilde\eta^2}{E^2|\cD|^2}\EE{r-1}{\sum_{i \in [n]}\frac{s_i}{p_i}\norm*{ \sum_{e, j} g_{i,e,j} - \nabla f_{i\Pi_{i,e,j-1}}(x) }^2} + 2\tilde\eta^2\sum_{i \in [n]}\frac{s_i}{p_i} w_i^2\norm*{\nabla f_{i}(x)}^2 \\
        &\quad + \frac{2\tilde\eta^2}{E^2|\cD|^2}\EE{r-1}{\norm*{\sum_{i \in [n] ,e,j} g_{i,e,j} - \nabla f_{i\Pi_{i,e,j-1}}(x)}^2 } + 2\tilde\eta^2 \norm*{\nabla f(x)}^2\\
        &\overset{\eqref{eq:sum_upper}}{\leq} \frac{2\tilde\eta^2}{E|\cD|}\EE{r-1}{\sum_{i \in [n], e, i}\frac{s_i}{p_i}w_i\norm*{ g_{i,e,j} - \nabla f_{i\Pi_{i,e,j-1}}(x) }^2} + 2\tilde\eta^2\sum_{i \in [n]}\frac{s_i}{p_i} w_i^2\norm*{\nabla f_{i}(x)}^2 \\
        &\quad + \frac{2\tilde\eta^2}{E|\cD|}\sum_{i \in [n] ,e,j}\EE{r-1}{\norm*{g_{i,e,j} - \nabla f_{i\Pi_{i,e,j-1}}(x)}^2 } + 2\tilde\eta^2 \norm*{\nabla f(x)}^2\\
        &\overset{\eqref{eq:lip-grad}}{\leq} 2 \max_{i \in [n]} \cbr*{\frac{s_i}{p_i} w_i}\tilde\eta^2 L^2 \xi + 2\tilde\eta^2\max_{i \in [n]} \cbr*{\frac{s_i}{p_i} w_i} \sum_{i \in [n]}w_i\norm*{\nabla f_{i}(x)}^2 \\
        &\quad + 2\tilde\eta^2L^2\xi + 2\tilde\eta^2 \norm*{\nabla f(x)}^2\\
        &\overset{\eqref{eq:heterogeneity}}{\leq} 2 \rbr*{1 + \max_{i \in [n]} \cbr*{\frac{s_i}{p_i} w_i}}\tilde\eta^2 L^2 \xi + 2\tilde\eta^2\max_{i \in [n]} \cbr*{\frac{s_i}{p_i} w_i} G^2 + 2\tilde\eta^2 \rbr*{1 + \max_{i \in [n]} \cbr*{\frac{s_i}{p_i} w_i}B^2}  \norm*{\nabla f(x)}^2\\
        &\overset{\eqref{eqn:smoothness}}{\leq} 2 \rbr*{1 + \max_{i \in [n]} \cbr*{\frac{s_i}{p_i} w_i}}\tilde\eta^2 L^2 \xi + 2\tilde\eta^2\max_{i \in [n]} \cbr*{\frac{s_i}{p_i} w_i} G^2 + 4L\tilde\eta^2 \rbr*{1 + \max_{i \in [n]} \cbr*{\frac{s_i}{p_i} w_i}B^2}  (f(x) - f^\star).
    \end{align*}
    Recall $M \eqdef \max_{i \in [n]} \cbr*{\frac{s_i}{p_i} w_i}$, therefore by  plugging back the bounds on $\cA_1$ and $\cA_2$,
    \begin{multline*}
        \EE{r-1}{\norm{x + \Delta - x^\star}^2} \leq \left(1 - \frac{\mu\tilde\eta}{2}\right)\norm{x - x^\star}^2 - (2\tilde\eta - 4L\tilde\eta^2(MB^2 + 1))(f(x) - f^\star)  \\
      +(1 + (1 + M)\tilde\eta L)2L \tilde\eta\xi + 2\tilde\eta^2 MG^2\,.
    \end{multline*}
    The lemma now follows by observing that $4L\tilde\eta(MB^2 + 1) \leq 1$ and that $B \geq 1$.
\end{proof}

The next step is to bound client drift caused by the heterogeneity of clients' data. Unlike \citep{praneeth2019scaffold}, we do not upper bound local client drift using recursive estimates, but we directly exploit smoothness combined with the relaxed triangle inequality. 

\begin{lemma}\textbf{\em (bounded drift)}\label{lem:fedavg-error-bound}
Suppose Assumptions \ref{ass:smoothness} -- \ref{ass:bounded_var} hold. Then the updates of \fedshuffle
for any step size satisfying $\eta_l \leq \frac{1}{(1 + (P+1) B + M B^2))4L E\eta_g}$ have bounded drift:
\[
    3L\tilde\eta \xi_{r} \leq  \frac{9}{10} \tilde\eta(f(x^r) - f^\star) + 9\frac{\tilde\eta^3}{\eta_g^2 E^2} \left(\rbr*{E^2 + P^2}G^2 + \sigma^2 \right)\,, 
\]
where $P^2 \eqdef \max_{i \in [n]} \frac{P_i^2}{3|\cD_i|}$, $\sigma^2 \eqdef \frac{1}{3|\cD|}\sum_{i \in [n]} \sigma_i^2$ and $M \eqdef \max_{i \in [n]} \cbr*{\frac{s_i}{p_i} w_i}$.
\end{lemma}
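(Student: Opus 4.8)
The plan is to bound the accumulated local drift $\xi$ (I drop the round superscript as in the previous lemma, writing $x=x^{r-1}$) by unrolling the local recursion, splitting each stochastic gradient into three pieces, and closing the resulting self-referential inequality via the stated step-size restriction. First I would write, for each client $i$, epoch $e$, and inner step $j$, the iterate deviation as a scaled sum of all gradients computed so far by client $i$,
\[
y_{i,e,j-1} - x = -\frac{\eta_l}{|\cD_i|}\left(\sum_{e'<e}\sum_{j'=1}^{|\cD_i|} g_{i,e',j'} + \sum_{j'<j} g_{i,e,j'}\right),
\]
and into each $g_{i,e',j'} = \nabla f_{i\Pi_{i,e',j'-1}}(y_{i,e',j'-1})$ insert $\pm\nabla f_{i\Pi_{i,e',j'-1}}(x)$ and $\pm\nabla f_i(x)$ to obtain a smoothness term $\nabla f_{i\Pi}(y)-\nabla f_{i\Pi}(x)$, a reshuffling-noise term $\nabla f_{i\Pi}(x)-\nabla f_i(x)$, and the full local gradient $\nabla f_i(x)$. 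Applying the relaxed triangle inequality (Lemma~\ref{lem:norm-sum}) separates $\norm{y_{i,e,j-1}-x}^2$ into three groups that I would bound in turn.

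For the full-gradient group, summing $\nabla f_i(x)$ over the at most $E|\cD_i|$ completed steps and using \eqref{eq:sum_upper} gives a contribution of order $\eta_l^2 E^2 \norm{\nabla f_i(x)}^2$; averaging with weights $w_i$ and invoking gradient similarity \eqref{eq:heterogeneity}/\eqref{eq:heterogeneity-convex} produces the $E^2 G^2$ term plus a multiple of $f(x)-f^\star$ and of $\norm{\nabla f(x)}^2$. For the smoothness group, \eqref{eq:lip-grad} gives $\norm{\nabla f_{i\Pi}(y)-\nabla f_{i\Pi}(x)}^2\leq L^2\norm{y-x}^2$, so after averaging (the $|\cD_i|^2$ denominator cancels the square of the step count) this group reproduces a multiple of $\xi$ itself, of order $\eta_l^2 L^2 E^2\,\xi$. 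This is the self-referential part that the step-size bound is designed to control.

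The reshuffling-noise group is the crux and the main obstacle. The key observation is that over any \emph{complete} epoch the deviations $\nabla f_{i\Pi}(x)-\nabla f_i(x)$ sum to exactly zero, since each epoch touches every local sample once; hence the completed epochs contribute nothing and only the partial current epoch survives. Because these terms are anchored at the \emph{fixed} point $x$, the surviving partial sum is precisely $j$ times the average of $j$ indices drawn without replacement from the fixed population $\{\nabla f_{ij}(x)\}_{j=1}^{|\cD_i|}$, so Lemma~\ref{lem:sampling_wo_replacement} bounds its expected squared norm by $\tfrac{|\cD_i|-j}{j(|\cD_i|-1)}$ times the population variance, which Assumption~\ref{ass:bounded_var} (\eqref{eqn:bounded_var}) controls by $\sigma_i^2 + P_i^2\norm{\nabla f_i(x)}^2$. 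The cancellation over complete epochs is exactly what prevents this term from scaling with $E$, yielding the $\sigma^2$ and $P^2 G^2$ contributions and capturing the variance-reduction benefit of reshuffling; the delicate point to get right is that the per-epoch permutation randomness decouples cleanly only because the noise is evaluated at $x$ rather than along the dependent trajectory.

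Finally I would collect the three groups into an inequality of the schematic form $\xi \leq c\,\eta_l^2 L^2 E^2\,\xi + (\text{noise and gradient terms})$ and use $\eta_l \leq \frac{1}{(1+(P+1)B+MB^2)4LE\eta_g}$ together with $\eta_g\geq 1$ to force $c\,\eta_l^2 L^2 E^2 \leq \tfrac12$, so the $\xi$ term absorbs to the left-hand side. Converting the remaining $\norm{\nabla f(x)}^2$ and $\sum_i w_i\norm{\nabla f_i(x)}^2$ factors into $f(x)-f^\star$ via \eqref{eqn:smoothness} and \eqref{eq:heterogeneity-convex}, multiplying through by $3L\tilde\eta$, and using $\tilde\eta = E\eta_g\eta_l$ to rewrite the prefactor as $\tfrac{\tilde\eta^3}{\eta_g^2 E^2}$, I expect the numerical bookkeeping to land on the stated constants $\tfrac{9}{10}$ and $9$. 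The hard part is not any single estimate but rather orchestrating the without-replacement cancellation so the $\sigma^2$ term stays $E$-free while simultaneously keeping the recursive smoothness coefficient below $\tfrac12$ under the given step size.
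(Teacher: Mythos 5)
Your proposal follows essentially the same route as the paper's proof: unroll the local recursion, isolate the smoothness differences (which reproduce a multiple of $\xi$ that the step-size bound absorbs), exploit the exact cancellation of the reshuffling noise over completed epochs so that only the partial current epoch survives and Lemma~\ref{lem:sampling_wo_replacement} applies, and convert the remaining mean-gradient terms via Assumptions~\ref{ass:grad_sim} and~\ref{ass:bounded_var}. The only cosmetic difference is that the paper performs a two-way split and then uses the exact variance decomposition \eqref{eq:var_decomposition} to separate the mean from the without-replacement noise, rather than a three-way triangle inequality, which is how the stated constants $\tfrac{9}{10}$ and $9$ come out.
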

\begin{proof}
We adopt the same convention as for the previous proof, i.e., dropping superscripts, simplifying sum notation and having $g_{i,e,j} \eqdef \nabla f_{i\Pi_{i,e,j-1}}(y_{i,e,j-1})$. Therefore,
\begin{align*}
\xi &=  \frac{1}{|\cD|E} \sum_{i \in [n], i, j} \EE{r-1}{\norm{y_{i,e, j-1}^r - x^{r-1}}^2}\\
    &= \frac{\eta_l^2}{|\cD|E} \sum_{i \in [n], e, j} \frac{1}{|\cD_i|^2} \EE{r-1}{\norm*{\sum_{l=0}^{e-1} \sum_{c=1}^{\cD_i} g_{i, l, c} + \sum_{c=1}^j g_{i, e, c}}^2}\\
    &\overset{\eqref{eq:triangle}}{\leq} \frac{2\eta_l^2}{|\cD|E} \sum_{i \in [n], e, j} \frac{1}{|\cD_i|^2} \EE{r-1}{\norm*{\sum_{l=0}^{e-1} \sum_{c=1}^{\cD_i} g_{i, l, c} - (e-1) |\cD_i| \nabla f_i (x) + \sum_{c=1}^j g_{i, e, c} - \nabla f_{i\Pi_{i,e,c-1}}(x)}^2} \\
    &\quad + \frac{2\eta_l^2}{|\cD|E} \sum_{i \in [n], e, j} \frac{1}{|\cD_i|^2} \EE{r-1}{\norm*{(e-1) |\cD_i| \nabla f_i (x) + \sum_{c=1}^j \nabla f_{i\Pi_{i,e,c-1}}(x)}^2}
\end{align*}
Next,  we upper bound each term separately. For the first term, we have
\begin{align*}
    & \frac{2\eta_l^2}{|\cD|E} \sum_{i \in [n], e, j} \frac{1}{|\cD_i|^2} \EE{r-1}{\norm*{\sum_{l=0}^{e-1} \sum_{c=1}^{\cD_i} g_{i, l, c} - (e-1) |\cD_i| \nabla f_i (x) + \sum_{c=1}^j g_{i, e, c} - \nabla f_{i\Pi_{i,e,c-1}}(x)}^2} \\
     &\overset{\eqref{eq:sum_upper}}{\leq} \frac{2\eta_l^2}{|\cD|E} \sum_{i \in [n], e, j} \frac{((e-1) |\cD_i| + j)}{|\cD_i|^2} \EE{r-1}{\rbr*{\sum_{l=0}^{e-1} \sum_{c=1}^{\cD_i}\norm*{ g_{i, l, c} - \nabla f_{i\Pi_{i,e,j-1}}(x)}^2 + \sum_{c=1}^j \norm*{g_{i, e, c} - \nabla f_{i\Pi_{i,e,c-1}}(x)}^2}} \\
     &\overset{\eqref{eq:lip-grad}}{\leq} 2\eta_l^2 L^2 E^2 \xi \,.
\end{align*}

For the second term we obtain
\begin{align*}
    & \frac{2\eta_l^2}{|\cD|E} \sum_{i \in [n], e, j} \frac{1}{|\cD_i|^2} \EE{r-1}{\norm*{(e-1) |\cD_i| \nabla f_i (x) + \sum_{c=1}^j \nabla f_{i\Pi_{i,e,c-1}}(x)}^2} \\
    &\overset{\eqref{eq:var_decomposition}}{\leq} \frac{2\eta_l^2}{|\cD|E} \sum_{i \in [n], e, j} \frac{1}{|\cD_i|^2} \EE{r-1}{\norm*{((e-1) |\cD_i| + j) \nabla f_i (x)}^2 + \norm*{\sum_{c=1}^j \nabla f_{i\Pi_{i,e,c-1}}(x) - j\nabla f_i (x)}^2}\\
    &\overset{\eqref{eq:sampling_wo_replacement}}{\leq} \frac{2\eta_l^2}{|\cD|E} \sum_{i \in [n], e, j} \frac{1}{|\cD_i|^2} \rbr*{((e-1) |\cD_i| + j)^2\norm*{\nabla f_i (x)}^2 + \frac{j(|\cD_i| - j)}{(|\cD_i| - 1)} \frac{1}{|\cD_i|}\sum_{c=1}^{\cD_i} \norm*{\nabla f_{ic}(x) - \nabla f_i (x)}^2}\\
    &\overset{\eqref{eqn:bounded_var}}{\leq} \frac{2\eta_l^2}{|\cD|E} \sum_{i \in [n], e, j} \frac{1}{|\cD_i|^2} \rbr*{((e-1) |\cD_i| + j)^2\norm*{\nabla f_i (x)}^2 + \frac{j(|\cD_i| - j)}{(|\cD_i| - 1)}(\sigma^2 + P^2 \norm*{\nabla f_i (x)}^2)}\\
    &\leq \frac{2\eta_l^2}{|\cD|E} \sum_{i \in [n]} \frac{|\cD_i|^3 E^3}{|\cD_i|^2} \norm*{\nabla f_i (x)}^2 + \frac{E(|\cD_i + 1|)|\cD_i|}{6|\cD_i|^2}(\sigma_i^2 + P_i^2 \norm*{\nabla f_i (x)}^2)\\
    &\leq 2\eta_l^2 E^2\sum_{i \in [n]} \rbr*{1 + \frac{P_i^2}{3|\cD_i|E^2}} w_i\norm*{\nabla f_i (x)}^2 + \frac{\sigma^2_i}{3|\cD|E^2}\\
     &\overset{\eqref{eq:heterogeneity-convex}}{\leq} 4LB^2 \eta_l^2 E^2\rbr*{1 + P^2} (f(x) - f^\star) + 2\eta_l^2 \rbr*{\rbr*{E^2 + P^2}G^2 + \sigma^2}.
\end{align*}
Combining the upper bounds
\begin{align*}
\xi &\leq  2\eta_l^2 L^2 E^2 \xi + 4LB^2 \eta_l^2 E^2\rbr*{1 + P^2} (f(x) - f^\star) + 2\eta_l^2 \rbr*{\rbr*{E^2 + P^2}G^2 + \sigma^2}.
\end{align*}
Since $2\eta_l^2 L^2 E^2 \leq \frac{1}{8}$ and $4LB^2 \eta_l^2 E^2\rbr*{1 + P^2} \leq \frac{1}{4L}$, therefore
\begin{align*}
3L\tilde\eta \xi &\leq  \frac{9}{10} \tilde\eta(f(x) - f^\star) + 9\tilde\eta \eta_l^2 \left(\rbr*{E^2 + P^2}G^2 + \sigma^2\right),
\end{align*}
which concludes the proof.
\end{proof}

Adding the statements of the above Lemmas \ref{lem:fedavg-progress} and \ref{lem:fedavg-error-bound}, we get
\begin{align*}
    \E{\norm{x^{r} - x^\star}^2 }&\leq \left(1 - \frac{\mu\tilde\eta}{2}\right) \E {\norm{x^{r-1} - x^\star}^2} - \frac{\tilde\eta}{10} \EE{r-1}{f(x^{r-1}) - f^\star} + 2\tilde\eta^2 MG^2 + \frac{9\tilde\eta^3}{\eta_g^2 E^2} \left(\rbr*{E^2 + P^2}G^2 + \sigma^2\right) \\
      &= \left(1 - \frac{\mu\tilde\eta}{2}\right) \E {\norm{x^{r-1} - x^\star}^2} - \frac{\tilde\eta}{10} \EE{r-1}{f(x^{r-1}) - f^\star} + 2\tilde\eta^2 \rbr*{MG^2 + \frac{9\tilde\eta}{2\eta_g^2 E^2} \left(\rbr*{E^2 + P^2}G^2 + \sigma^2\right)}\,,
\end{align*}
Moving the $(f(x^{r-1}) - f(x^\star))$ term and dividing both sides by $\frac{\tilde\eta}{10}$, we get the following bound for any $\tilde\eta \leq \frac{1}{(1 + (P+1) B + M B^2))4L }$
\[
    \EE{r-1}{f(x^{r-1}) - f^\star} \leq  \frac{10}{\tilde\eta} \left(1 - \frac{\mu\tilde\eta}{2}\right) \E {\norm{x^{r-1} - x^\star}^2} + 20\tilde\eta \rbr*{MG^2 + \frac{9\tilde\eta}{2\eta_g^2 E^2} \left(\rbr*{E^2 + P^2}G^2 + \sigma^2\right)}\,.
\]
If $\mu = 0$ (weakly-convex), we can directly apply Lemma~\ref{lemma:general}. Otherwise, by averaging using weights $v_r = (1  - \frac{\mu \tilde\eta}{2})^{1-r}$ and using the same weights to pick output $\bar{x}^R$, we can simplify the above recursive bound (see proof of Lem.~\ref{lem:constant}) to prove that for any $ \tilde\eta$ satisfying $\frac{1}{\mu R} \leq \tilde\eta \leq \frac{1}{(1 + (P+1) B + M B^2))4L }$
\[
    \E{f(\bar{x}^{R})] - f(x^\star)} \leq \underbrace{10 \norm{x^0 - x^\star}^2}_{\eqdef d}\mu\exp\left(-\frac{\tilde\eta}{2}\mu R\right) + \tilde\eta \rbr[\big]{\underbrace{4MG^2}_{\eqdef c_1}} +  \tilde\eta^2 \underbrace{\left(\frac{18}{\eta_g^2 E^2} \left(\rbr*{E^2 + P^2}G^2 + \sigma^2\right)\right)}_{\eqdef c_2}.
\]
Now, the choice of $\tilde \eta =\min\cbr*{ \frac{\log(\max(1,\mu^2 R d/c_1))}{\mu R}, \frac{1}{(1 + (P+1) B + M B^2))4L }}$ yields the desired rate. 

For the non-convex case, one first exploits the smoothness assumption (Assumption~\ref{ass:smoothness}) (extra smoothness term $L$ in the first term in the convergence guarantee) and the rest of the proof follows essentially in the same steps as the provided analysis. The only difference is that distance to an optimal solution is replaced by functional difference, i.e., $\norm{x^0 - x^\star}^2 \rightarrow f(x^0) - f^\star$. The final convergence bound also relies on Lemma~\ref{lemma:general}. For completeness, we provide the proof below.

We adapt the same notation simplification as for the prior cases, see proof of Lemma~\ref{lem:fedavg-progress}. Since $\cbr{f_{ij}}$ are $L$-smooth then $f$ is also $L$ smooth. Therefore,

\begin{align*}
    \EE{r-1}{f(x + \Delta)}  &\leq f(x) + \EE{r-1}{\inp*{\nabla f(x)}{\Delta}} + \frac{L}{2}\EE{r-1}{\norm*{\Delta}^2} \\
    &= f(x) - \EE{r-1}{\inp*{\nabla f(x)}{\frac{\tilde \eta}{E|\cD|}\sum_{i \in [n] ,e,j} g_{i,e,j}}} + \frac{L\tilde\eta^2}{2}\EE{r-1}{\norm*{\frac{1}{E|\cD|}\sum_{i \in \cS ,e,j}\frac{1}{p_i} g_{i,e,j}}^2} \\
    &\overset{\eqref{eq:triangle}}{\leq} f(x) - \frac{\tilde \eta}{2} \norm*{\nabla f(x)}^2 + \frac{\tilde \eta}{2} \EE{r-1}{\norm*{\frac{1}{E|\cD|}\sum_{i \in [n] ,e,j} g_{i,e,j} - \nabla f_{i\Pi_{i,e,j-1}}(x) }^2}\\
    &\quad  + \frac{L\tilde\eta^2}{2}\EE{r-1}{\norm*{\frac{1}{E|\cD|}\sum_{i \in \cS ,e,j}\frac{1}{p_i} g_{i,e,j}}^2} \\
    &\overset{\eqref{eq:lip-grad} + \eqref{eq:sum_upper}}{\leq} f(x) - \frac{\tilde \eta}{2} \norm*{\nabla f(x)}^2 + \frac{\tilde \eta L^2}{2} \xi + \frac{L\tilde\eta^2}{2}\EE{r-1}{\norm*{\frac{1}{E|\cD|}\sum_{i \in \cS ,e,j}\frac{1}{p_i} g_{i,e,j}}^2}\;.
\end{align*}
We upper-bound the last term using the bound of $\cA_2$ in the proof of Lemma~\ref{lem:fedavg-progress} (note that this proof does not rely on the convexity assumption). Thus, we have
\begin{align*}
    \EE{r-1}{f(x + \Delta)}  &\leq f(x) - \frac{\tilde \eta}{2} \norm*{\nabla f(x)}^2 + \frac{\tilde \eta L^2}{2} \xi + \rbr*{1 + M}\tilde\eta^2 L^3 \xi + \tilde\eta^2 M G^2 L + \tilde\eta^2 \rbr*{1 + MB^2}L  \norm*{\nabla f(x)}^2\;.
\end{align*}
The bound on the step size $\tilde \eta \leq \frac{1}{(1 + (P+1) B + M B^2))4L}$ implies
\begin{align*}
    \EE{r-1}{f(x + \Delta)}  &\leq f(x) - \frac{\tilde \eta}{4} \norm*{\nabla f(x)}^2 + \frac{3\tilde \eta L^2}{4} \xi + \tilde\eta^2 M G^2 L\;.
\end{align*}
Next, we reuse the partial result of Lemma~\ref{lem:fedavg-error-bound} that does not require convexity, i.e., we replace $f(x) - f^\star$ with $\frac{1}{2L}\norm{\nabla f(x)}^2$. Therefore,
\begin{align*}
    \EE{r-1}{f(x + \Delta)}  &\leq f(x) - \frac{\tilde \eta}{8} \norm*{\nabla f(x)}^2 + \frac{9\tilde\eta^3 L}{4\eta_g^2 E^2} \left(\rbr*{E^2 + P^2}G^2 + \sigma^2\right) + \tilde\eta^2 M G^2 L\;.
\end{align*}
By adding $f^\star$ to both sides, reordering, dividing by $\tilde \eta$ and taking full expectation, we obtain
\begin{align*}
    \E{\norm*{\nabla f(x^r)}^2}  &\leq \frac{8}{\tilde \eta}\rbr*{\E{f(x^r) - f^\star} - \E{f(x^{r+1}) - f^\star}} + \tilde\eta \underbrace{8 M G^2 L}_{c_1} + \tilde\eta^2 \underbrace{\frac{18 L}{\eta_g^2 E^2} \left(\rbr*{E^2 + P^2}G^2 + \sigma^2\right)}_{c_2}\;.
\end{align*}
Applying Lemma~\ref{lemma:general} concludes the proof.

\newpage

\section{\fedshufflemvr: Convergence Analysis (Proof of Theorem~\ref{thm:mvr_is_better})}\label{sec:improvement-analysis}

The style of our proof technique is related to the convergence analysis of \textsc{MimeLiteMVR}~\citep{karimireddy2020mime} and non-convex Random Reshuffling~\citep{mishchenko2020random}. By $\EE{r}{\cdot}$, we denote the expectation conditioned on the all history prior to communication round $r$. For the sake of notation, we drop the superscript that represents the communication round $r$ in the proofs. Furthermore, we use superscripts $^+$ and $^-$ to denote $^{r+1}$ and $^{r-1}$, respectively.

Firstly, we analyse a single local epoch. For the ease of presentation, we denote $y_{i,e,0} \eqdef y_{i, e}$ and 
\begin{equation}
 \label{eq:g_ie}
 g_{i,e} \eqdef \frac{1}{|\cD_i|}\sum_{j=1}^{|\cD_i|} \nabla f_{i\Pi_{i, e, j-1}}(y_{i, e, j-1})
\end{equation}
and 
\begin{equation}
 \label{eq:d_ie}
 d_{i,e} \eqdef a\nabla f_i(y_{i, e}) + (1-a)m +  (1-a)\rbr{ \nabla f_i(y_{i, e}) - \nabla f_i(x)}.
\end{equation}
In the first lemma, we investigate how $ g_{i,e}$ update differs from the full local gradient update. 
\begin{lemma}
  \label{lem:rr_var_nc}
  Suppose Assumptions~\ref{ass:smoothness}-\ref{ass:bounded_var} hold with $P_i = 0$ for all $i \in [n]$ and $B = 1$ and only one client is sampled. Then, for $\eta_l \leq \frac{1}{2L}$
  \begin{equation}
     \label{eq:rr_var_nc} 
     \EE{e}{\norm*{g_{i,e} - \nabla f_i(y_{i, e})}^2} \leq 2\eta_l^2 L^2 \frac{\sigma_i^2}{|\cD_i|} + \eta_l^2 L^2 \norm*{d_{i,e}}^2.
  \end{equation}
\end{lemma}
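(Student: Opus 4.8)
The plan is to reduce the claim to a within-epoch drift estimate, exploiting the defining feature of random reshuffling: a full epoch visits each of the $|\cD_i|$ local data points exactly once, so that averaging the per-sample gradients all evaluated at the fixed epoch-start iterate $y_{i,e}$ reproduces the full local gradient,
\[
\frac{1}{|\cD_i|}\sum_{j=1}^{|\cD_i|}\nabla f_{i\Pi_{i,e,j-1}}(y_{i,e}) = \frac{1}{|\cD_i|}\sum_{k=1}^{|\cD_i|}\nabla f_{ik}(y_{i,e}) = \nabla f_i(y_{i,e}).
\]
Subtracting this from the definition \eqref{eq:g_ie} of $g_{i,e}$ keeps the per-sample index matched on both sides,
\[
g_{i,e} - \nabla f_i(y_{i,e}) = \frac{1}{|\cD_i|}\sum_{j=1}^{|\cD_i|}\rbr*{\nabla f_{i\Pi_{i,e,j-1}}(y_{i,e,j-1}) - \nabla f_{i\Pi_{i,e,j-1}}(y_{i,e})}.
\]
Applying Jensen's inequality (in the form of \eqref{eq:sum_upper}) and then $L$-smoothness \eqref{eq:lip-grad} termwise gives
\[
\norm*{g_{i,e} - \nabla f_i(y_{i,e})}^2 \leq \frac{L^2}{|\cD_i|}\sum_{j=1}^{|\cD_i|}\norm*{y_{i,e,j-1} - y_{i,e}}^2,
\]
which converts the entire estimate into a bound on the accumulated local drift within the epoch.

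Next I would bound the drift $y_{i,e,j-1} - y_{i,e} = -\frac{\eta_l}{|\cD_i|}\sum_{c=0}^{j-2} d_{i,e,c}$ via the local update rule, splitting each local direction as $d_{i,e,c} = d_{i,e} + (d_{i,e,c} - d_{i,e})$ around the deterministic epoch-start direction \eqref{eq:d_ie}. The deterministic part $d_{i,e}$ accumulates quadratically in the step count and, after summing over $j$ and dividing by $|\cD_i|$, contributes a multiple of $\eta_l^2 L^2\norm*{d_{i,e}}^2$ with a constant strictly below one. The fluctuation $d_{i,e,c} - d_{i,e}$ separates into a smoothness part, $\nabla f_{i\Pi_{i,e,c}}(y_{i,e,c}) - \nabla f_{i\Pi_{i,e,c}}(y_{i,e})$, which is of order $L^2$ times the very drift being estimated, and a pure stochastic part consisting of fixed-point noises $\nabla f_{i\Pi_{i,e,c}}(y_{i,e}) - \nabla f_i(y_{i,e})$ and $\nabla f_{i\Pi_{i,e,c}}(x) - \nabla f_i(x)$ summed along the reshuffled order. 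Routing these partial sums through the sampling-without-replacement bound of Lemma~\ref{lem:sampling_wo_replacement}, together with $P_i = 0$ in \eqref{eqn:bounded_var}, produces exactly the $\sigma_i^2/|\cD_i|$ contribution, the $1/|\cD_i|$ factor being the reshuffling variance reduction. Collecting the pieces yields the claimed $2\eta_l^2 L^2\frac{\sigma_i^2}{|\cD_i|} + \eta_l^2 L^2\norm*{d_{i,e}}^2$.

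The main obstacle is this drift estimate, which is genuinely self-referential: the directions $d_{i,e,c}$ are evaluated at the iterates $y_{i,e,c}$ whose displacement from $y_{i,e}$ we are trying to control, and the reshuffled gradients along the epoch are mutually dependent. I expect the clean route is to bound $\sum_{j}\norm*{y_{i,e,j-1} - y_{i,e}}^2$ (or its running maximum over $j$), observe that the smoothness-induced part of the fluctuation reappears on the right-hand side with prefactor at most $\eta_l^2 L^2 \leq \tfrac14$, and then absorb it --- which is precisely where the hypothesis $\eta_l \leq \frac{1}{2L}$ is used. The only non-routine ingredient is passing the stochastic fluctuation through Lemma~\ref{lem:sampling_wo_replacement} rather than a naive triangle inequality, since that is what keeps the variance term at the tight order $\sigma_i^2/|\cD_i|$.
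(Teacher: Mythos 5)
Your proposal is correct and follows essentially the same route as the paper's proof: the matched-index decomposition of $g_{i,e} - \nabla f_i(y_{i,e})$ via the reshuffling identity, reduction to $\frac{L^2}{|\cD_i|}\sum_j \norm{y_{i,e,j-1} - y_{i,e}}^2$ by Jensen and smoothness, absorption of the self-referential smoothness part of the drift using $\eta_l \leq \frac{1}{2L}$, and Lemma~\ref{lem:sampling_wo_replacement} with $P_i = 0$ applied to the epoch-start partial sums to obtain the $\sigma_i^2/|\cD_i|$ term. The only cosmetic difference is that you center the fluctuation around $d_{i,e}$ before splitting, whereas the paper splits the partial sums into current-iterate versus epoch-start gradients first; the resulting estimates are the same.
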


\begin{proof}
  By definition,
  \begin{align*}
      \EE{e}{\norm*{g_{i,e} - \nabla f_i(y_{i, e})}^2} &=  \EE{e}{\norm*{\frac{1}{|\cD_i|}\sum_{j=1}^{|\cD_i|} \nabla f_{i\Pi_{i, e, j-1}}(y_{i, e, j-1}) - \nabla f_{i\Pi_{i, e, j-1}}(y_{i, e})}^2} \\
      &\overset{\eqref{eqn:smoothness}, \eqref{eq:sum_upper}}{\leq} \frac{L^2}{|\cD_i|}\EE{e}{\sum_{j=1}^{|\cD_i|} \norm*{ y_{i, e, j-1} - y_{i, e}}^2}.
  \end{align*}
  Let us now analyze $\sum_{j=1}^{|\cD_i|} \norm*{ y_{i, e, j-1} - y_{i, e}}^2$,
  \begin{align*}
      \sum_{j=1}^{|\cD_i|} \norm*{ y_{i, e, j-1} - y_{i, e}}^2 &= \frac{\eta_l^2}{|\cD_i|^2} \sum_{j=1}^{|\cD_i|} \norm*{j(1-a) \rbr{m - \nabla f_i(x)} -  \sum_{k=0}^{j-1}\nabla f_{i\Pi_{i, e, k}}(y_{i, e, k})}^2 \\
      &\overset{\eqref{eq:sum_upper}}{\leq} \frac{2\eta_l^2}{|\cD_i|^2} \sum_{j=1}^{|\cD_i|} \norm*{ \sum_{k=0}^{j-1}\nabla f_{i\Pi_{i, e, k}}(y_{i, e, k}) - \nabla f_{i\Pi_{i, e, k}}(y_{i, e})}^2 \\ &\quad + \frac{2\eta_l^2}{|\cD_i|^2}\sum_{j=1}^{|\cD_i|} \norm*{j (1-a) \rbr{m - \nabla f_i(x)} + \sum_{k=0}^{j-1}\nabla f_{i\Pi_{i, e, k}}(y_{i, e})}^2\\
      &\overset{\eqref{eqn:smoothness}, \eqref{eq:sum_upper}}{\leq} \frac{2\eta_l^2L^2}{|\cD_i|^2} \sum_{j=1}^{|\cD_i|}j\sum_{k=0}^{j-1} \norm*{ y_{i, e, k} - y_{i, e}}^2  \\
      &\quad + \frac{2\eta_l^2}{|\cD_i|^2}\sum_{j=1}^{|\cD_i|} \norm*{j (1-a) \rbr{m - \nabla f_i(x)} + \sum_{k=0}^{j-1}\nabla f_{i\Pi_{i, e, k}}(y_{i, e})}^2 \\
      &\leq \eta_l^2 L^2\sum_{j=1}^{|\cD_i|} \norm*{ y_{i, e, j-1} - y_{i, e}}^2 \\
      &\quad + \frac{2\eta_l^2}{|\cD_i|^2}\sum_{j=1}^{|\cD_i|} \norm*{j (1-a) \rbr{m - \nabla f_i(x)} + \sum_{k=0}^{j-1}\nabla f_{i\Pi_{i, e, k}}(y_{i, e})}^2.
  \end{align*}
  By $\eta_l \leq \frac{1}{2L}$, we have 
  \begin{align*}
      \EE{e}{\sum_{j=1}^{|\cD_i|} \norm*{ y_{i, e, j-1} - y_{i, e}}^2} &\leq \frac{8\eta_l^2}{3|\cD_i|^2}\sum_{j=1}^{|\cD_i|} j^2\EE{e}{\norm*{(1-a) \rbr{m - \nabla f_i(x)} + \frac{1}{j}\sum_{k=0}^{j-1}\nabla f_{i\Pi_{i, e, k}}(y_{i, e})}^2}\\
      &\overset{\eqref{eq:sampling_wo_replacement}, \eqref{eqn:bounded_var}}{\leq} \frac{8\eta_l^2}{3|\cD_i|^2}\sum_{j=1}^{|\cD_i|} \rbr*{\frac{(|\cD_i| - j)j}{(|\cD_i|-1)} \sigma_i^2 + j^2\norm*{d_{i,e}}^2} \\
      &\leq 2\eta_l^2\sigma_i^2 + \eta_l^2 |\cD_i|\norm*{d_{i,e}}^2.
  \end{align*}
  Combining this bound with the previous results concludes the proof.
\end{proof}

Next, we examine the variance of our update in each local epoch $d_{i, e}$.

\begin{lemma}\label{lem:shufflemvr-update-bound}
  For the client update \eqref{eq:mvr_loc_update}, given Assumption~\ref{ass:hessian_sim} and assuming that one client is sampled, the following holds for any $a \in [0,1]$,  where $h \eqdef m - \nabla f(x)$:
\begin{equation}
\label{eq:shufflemvr-update-bound}
    \norm{d_{i,e} - \nabla f(y_{i,e})}^2 \leq 3\norm*{h}^2 + 3\delta^2\norm{y_{i,e} - x}^2 + 3a^2\norm*{\nabla f_i(x) - \nabla f(x)}^2\,.
\end{equation}

\end{lemma}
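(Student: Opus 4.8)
The plan is to first simplify the update direction so the target difference becomes transparent. Collecting the two copies of $\nabla f_i(y_{i,e})$ in \eqref{eq:d_ie}, the update reduces to $d_{i,e} = \nabla f_i(y_{i,e}) + (1-a)(m - \nabla f_i(x))$. Subtracting $\nabla f(y_{i,e})$ and writing $m - \nabla f_i(x) = h - (\nabla f_i(x) - \nabla f(x))$ with $h \eqdef m - \nabla f(x)$, I would obtain
\[
d_{i,e} - \nabla f(y_{i,e}) = \bigl(\nabla f_i(y_{i,e}) - \nabla f(y_{i,e})\bigr) + (1-a)h - (1-a)\bigl(\nabla f_i(x) - \nabla f(x)\bigr).
\]

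The key step is to regroup these terms so that Assumption~\ref{ass:hessian_sim} can be brought to bear. Introducing the centered quantity $G_i(z) \eqdef \nabla f_i(z) - \nabla f(z)$, the right-hand side equals $G_i(y_{i,e}) - (1-a)G_i(x) + (1-a)h = \bigl(G_i(y_{i,e}) - G_i(x)\bigr) + a\,G_i(x) + (1-a)h$. The first of these three terms is where the Hessian similarity enters: expressing the gradient difference as an integral of the Hessian along the segment from $x$ to $y_{i,e}$,
\[
G_i(y_{i,e}) - G_i(x) = \int_0^1 \bigl(\nabla^2 f_i(x_t) - \nabla^2 f(x_t)\bigr)(y_{i,e} - x)\,dt, \qquad x_t \eqdef x + t(y_{i,e} - x),
\]
and bounding the spectral norm of the integrand by $\delta$ through \eqref{eqn:hessian_sim} gives $\norm{G_i(y_{i,e}) - G_i(x)} \leq \delta\,\norm{y_{i,e} - x}$.

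Finally, I would apply the relaxed triangle inequality \eqref{eq:sum_upper} with $\tau = 3$ to the three-term decomposition, which yields
\[
\norm{d_{i,e} - \nabla f(y_{i,e})}^2 \leq 3\norm{G_i(y_{i,e}) - G_i(x)}^2 + 3a^2\norm{\nabla f_i(x) - \nabla f(x)}^2 + 3(1-a)^2\norm{h}^2.
\]
Substituting the Hessian bound into the first term and using $(1-a)^2 \leq 1$ for $a \in [0,1]$ on the last term produces exactly \eqref{eq:shufflemvr-update-bound}. I expect the only real obstacle to be the regrouping in the second paragraph: one must deliberately pair $\nabla f_i(y_{i,e}) - \nabla f(y_{i,e})$ with the $-(1-a)(\nabla f_i(x) - \nabla f(x))$ piece to form the difference $G_i(y_{i,e}) - G_i(x)$ that Hessian similarity controls. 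A naive three-way split of the terms appearing after the first display would instead leave the un-centered quantity $\norm{\nabla f_i(y_{i,e}) - \nabla f(y_{i,e})}^2$, which can only be bounded using $L$ rather than $\delta$ and would defeat the purpose of the lemma.
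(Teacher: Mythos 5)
Your proof is correct and follows essentially the same route as the paper: after simplifying the update to $d_{i,e} = \nabla f_i(y_{i,e}) + (1-a)(m - \nabla f_i(x))$, your three-term regrouping $\bigl(G_i(y_{i,e}) - G_i(x)\bigr) + a\,G_i(x) + (1-a)h$ is exactly the decomposition the paper writes down before applying the relaxed triangle inequality, the Hessian-similarity bound, and $(1-a)^2 \leq 1$. The only difference is that you spell out the integral-of-Hessians argument showing $\norm{G_i(y_{i,e}) - G_i(x)} \leq \delta \norm{y_{i,e} - x}$, a step the paper invokes implicitly by citing \eqref{eqn:hessian_sim} directly.
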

\begin{proof}
  Starting from the client update \eqref{eq:mvr_loc_update}, we can rewrite it as
  \begin{align*}
    d_{i,e} - \nabla f(y_{i, e}) &= (1-a)h \\
    &\quad + \rbr*{\nabla f_i(y_{i,e}) - \nabla f_i(x) - \nabla f(y_{i,e}) + \nabla f(x)} \\
    &\quad + a\rbr*{\nabla f_i(x) - \nabla f(x)}\,.
  \end{align*}
  We can use the relaxed triangle inequality Lemma~\ref{lem:norm-sum} to claim
  \begin{align*}
    \norm{d_{i,e} - \nabla f(y_{i, e})} &= (1-a)^2\norm{h}^2 \\
    &\quad + 3\norm*{\nabla f_i(y_{i,e}) - \nabla f_i(x) - \nabla f(y_{i,e}) + \nabla f(x)}^2 \\
    &\quad + 3a^2\norm*{\nabla f_i(x) - \nabla f(x)}^2\,.\\
    &\overset{\eqref{eqn:hessian_sim}}{\leq} 3 (1-a)^2 \norm*{h}^2 + 3\delta^2\norm{y_{i,e} - x}^2 + 3a^2\norm*{\nabla f_i(x) - \nabla f(x)}^2 \,.
  \end{align*}
 It remains to use that $(1-a)^2 \leq 1$ since $a \in [0,1]$.
\end{proof}

In the following lemma, we introduce a bound that controls the distance moved by a client in each step during the client update.
\begin{lemma}\label{lem:fedshufflemvr-distance-bound}
  For the client update  updates \eqref{eq:mvr_loc_update} with $\eta_l \leq \min\cbr*{ \frac{1}{4 E \delta}, \frac{1}{2L}}$ and given Assumptions~\ref{ass:smoothness}, \ref{ass:bounded_var} and \ref{ass:hessian_sim} with $P_i = 0$ for all $i \in [n]$ and one client is sampled, the following holds
\begin{equation}
\label{eq:fedshufflemvr-distance-bound}
\begin{split}
         \E{\sum_{e=0}^{E-1} \norm*{y_{i,e} - x}^2} &\leq 16E^2\eta_l^2 \sum_{e=0}^{E-1}\E{\norm*{\nabla f(y_{i,e})}^2}  \\ 
         &\quad +  16E^3\eta_l^2\rbr*{3\norm*{h}^2  + 3 a^2\norm*{\nabla f_i(x) - \nabla f(x)}^2  + \eta_l^2 L^2 \frac{\sigma_i^2}{|\cD_i|}}\,. 
\end{split}
\end{equation}
\end{lemma}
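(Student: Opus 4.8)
The plan is to write $y_{i,e}-x$ as a telescoped sum of the per-epoch updates and then bound it through the two preceding lemmas. First I would observe that, since the local learning rate is $\nicefrac{\eta_l}{|\cD_i|}$ and each epoch performs $|\cD_i|$ steps, one full epoch advances the iterate by $y_{i,e+1}-y_{i,e} = -\eta_l\,\bar d_{i,e}$, where $\bar d_{i,e}\eqdef \frac{1}{|\cD_i|}\sum_{j=1}^{|\cD_i|} d_{i,e,j-1}$ is the averaged realized direction over epoch $e$; telescoping then gives $y_{i,e}-x = -\eta_l\sum_{l=0}^{e-1}\bar d_{i,l}$. The crucial algebraic step is to relate $\bar d_{i,e}$ to the epoch-level quantities in \eqref{eq:g_ie} and \eqref{eq:d_ie}. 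Averaging the local update \eqref{eq:mvr_loc_update} over a full permutation, the terms $\nabla f_{i\Pi_{i,e,j-1}}(x)$ average to $\nabla f_i(x)$ and the two gradient-at-$y$ contributions combine into a single $g_{i,e}$, yielding the clean identity $\bar d_{i,e} = d_{i,e} + \rbr{g_{i,e} - \nabla f_i(y_{i,e})}$. In words, the realized direction equals the idealized direction $d_{i,e}$ plus exactly the reshuffling-error term controlled by Lemma~\ref{lem:rr_var_nc}.

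Next I would square, take expectations, and apply \eqref{eq:sum_upper} to obtain $\norm{y_{i,e}-x}^2 \le \eta_l^2\,e\sum_{l=0}^{e-1}\norm{\bar d_{i,l}}^2 \le \eta_l^2 E\sum_{l=0}^{e-1}\norm{\bar d_{i,l}}^2$. Each $\norm{\bar d_{i,l}}^2$ is split (via \eqref{eq:triangle}) into $2\norm{d_{i,l}}^2 + 2\norm{g_{i,l}-\nabla f_i(y_{i,l})}^2$; the second piece is bounded by Lemma~\ref{lem:rr_var_nc}, and the first is further split as $2\norm{d_{i,l}-\nabla f(y_{i,l})}^2 + 2\norm{\nabla f(y_{i,l})}^2$ and bounded by Lemma~\ref{lem:shufflemvr-update-bound}. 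Using $\eta_l\le\nicefrac{1}{2L}$ to keep the factor $\eta_l^2L^2$ below a small constant, this produces a per-epoch bound of $\norm{\bar d_{i,l}}^2$ in terms of $\norm{\nabla f(y_{i,l})}^2$, $\norm{h}^2$, $a^2\norm{\nabla f_i(x)-\nabla f(x)}^2$, $\nicefrac{\sigma_i^2}{|\cD_i|}$, and --- critically --- a feedback term $\delta^2\norm{y_{i,l}-x}^2$.

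Summing over $e=0,\dots,E-1$ and writing $S\eqdef\sum_{e=0}^{E-1}\E{\norm{y_{i,e}-x}^2}$, the double sum $\sum_{e}\sum_{l<e}$ is bounded by $E\sum_l$, giving a self-referential inequality of the shape $S \le 15\,E^2\eta_l^2\delta^2\,S + (\text{the desired right-hand side})$. The main obstacle --- and the reason the hypothesis $\eta_l\le\nicefrac{1}{4E\delta}$ is imposed alongside $\eta_l\le\nicefrac{1}{2L}$ --- is to force the coefficient of $S$ strictly below one so that this term can be absorbed into the left-hand side: the step-size bound gives $E^2\eta_l^2\delta^2\le\nicefrac{1}{16}$, leaving a residual factor $\nicefrac{1}{16}$, and dividing through accounts for the prefactor $16$ appearing in every term of \eqref{eq:fedshufflemvr-distance-bound}. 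The only delicate work is the careful bookkeeping of constants through these nested applications of the relaxed triangle inequality; once the identity $\bar d_{i,e}=d_{i,e}+\rbr{g_{i,e}-\nabla f_i(y_{i,e})}$ is in hand, the remainder is a routine unrolling of the recursion.
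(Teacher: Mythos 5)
Your proposal follows essentially the same route as the paper's proof: the same telescoping of the epoch-averaged directions, the same key identity $\bar d_{i,e}=d_{i,e}+\rbr{g_{i,e}-\nabla f_i(y_{i,e})}$, the same invocation of Lemmas~\ref{lem:rr_var_nc} and~\ref{lem:shufflemvr-update-bound}, and the same absorption of the $\delta^2\norm{y_{i,e}-x}^2$ feedback term via $\eta_l\le\nicefrac{1}{4E\delta}$. The one caveat is the constant bookkeeping you defer: your nested two-way splits give a pre-absorption coefficient of about $5$ (hence $15E^2\eta_l^2\delta^2$ in front of $S$ and final constants near $80$ rather than $16$), whereas the paper applies the relaxed triangle inequality once to the three-term decomposition $d_{i,e}-\nabla f(y_{i,e})+\nabla f(y_{i,e})-(g_{i,e}-\nabla f_i(y_{i,e}))$ (factor $3e$), which is what lands exactly on the stated constants.
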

\begin{proof}
  Starting from the \fedmvr\ update \eqref{eq:mvr_loc_update},
  \begin{align*}
    \sum_{e=0}^{E-1} \E{\norm*{y_{i,e} - x}^2}  &= \eta_l^2 \sum_{e=0}^{E-1} \E{\norm*{\sum_{l=0}^{e-1}\frac{1}{|\cD_i|}\sum_{j=1}^{|\cD_i|}d_{i,e,j-1}}^2} \\
    &= \eta_l^2 \sum_{e=0}^{E-1} \E{\norm*{\sum_{l=0}^{e-1}\rbr*{d_{i,e} - \nabla f(y_{i,e}) + \nabla f(y_{i,e}) - (g_{i,e} - \nabla f_i(y_{i, e}))}}^2} \\
    &\overset{\eqref{eq:sum_upper}}{\leq} 3\eta_l^2\sum_{e=0}^{E-1} \sum_{l=0}^{e-1}e\rbr*{\E{\norm*{d_{i,e} - \nabla f(y_{i,e})}^2} + \E{\norm*{\nabla f(y_{i,e})}^2} + \E{\norm*{ g_{i,e} - \nabla f_i(y_{i, e})}^2}} \\
    &\leq 2E^2\eta_l^2\sum_{e=0}^{E-1}\rbr*{\E{\norm*{d_{i,e} - \nabla f(y_{i,e})}^2} + \E{\norm*{\nabla f(y_{i,e})}^2} + \E{\norm*{ g_{i,e} - \nabla f_i(y_{i, e})}^2}} \\
    &\overset{\eqref{eq:rr_var_nc}}{\leq} 2E^2\eta_l^2\sum_{e=0}^{E-1}\rbr*{\E{\norm*{d_{i,e} - \nabla f(y_{i,e})}^2} + \E{\norm*{\nabla f(y_{i,e})}^2} + 2\eta_l^2 L^2 \frac{\sigma_i^2}{|\cD_i|} + \eta_l^2 L^2 \E{\norm*{d_{i,e}}^2}} \\
    &\overset{\eqref{eq:sum_upper}}{\leq} 4E^2\eta_l^2\sum_{e=0}^{E-1}\rbr*{\E{\norm*{d_{i,e} - \nabla f(y_{i,e})}^2} + \E{\norm*{\nabla f(y_{i,e})}^2} + \eta_l^2 L^2 \frac{\sigma_i^2}{|\cD_i|}},
  \end{align*}
  where the last inequality also uses the step size bound $\eta_l \leq \frac{1}{2L}$.
  By exploiting \eqref{eq:shufflemvr-update-bound}, we can further bound
  \begin{align*}
   \E{\sum_{e=0}^{E-1} \norm*{y_{i,e} - x}^2 } &\leq 12E^2\eta_l^2\delta^2 \sum_{e=0}^{E-1} \E{\norm{y_{i,e} - x}^2} + 4E^2\eta_l^2 \sum_{e=0}^{E-1}\E{\norm*{\nabla f(y_{i,e})}^2} \\
   &\quad +  12E^3\eta_l^2\norm*{h}^2  + 12E^3\eta_l^2 a^2\norm*{\nabla f_i(x) - \nabla f(x)}^2  + 4E^3\eta_l^4 L^2 \frac{\sigma_i^2}{|\cD_i|}\,.
  \end{align*}
  $\eta_l \leq \frac{1}{4E\delta}$ implies that $12E^2\eta_l^2\delta^2 \leq \frac{3}{4}$, therefore
  \begin{align*}
   \E{\sum_{e=0}^{E-1} \norm*{y_{i,e} - x}^2} &\leq 16E^2\eta_l^2 \sum_{e=0}^{E-1}\E{\norm*{\nabla f(y_{i,e})}^2}  \\ 
         &\quad +  16E^3\eta_l^2\rbr*{3\norm*{h}^2  + 3a^2\norm*{\nabla f_i(x) - \nabla f(x)}^2  + \eta_l^2 L^2 \frac{\sigma_i^2}{|\cD_i|}}\,. 
  \end{align*}
\end{proof}

We compute the error of the server momentum $m$ defined as $h \eqdef m - \nabla f(x)$. Its expected norm can be bounded as follows.
\begin{lemma}\label{lem:shufflemvr-momentum-bound}
  For the momentum update \eqref{eq:mvr_mom_update}, given that Assumptions~\ref{ass:smoothness}-\ref{ass:hessian_sim} with $P_i = 0$ for all $i \in [n]$ and $B = 1$ hold and one client is sampled, the following holds for any $\eta_l \leq \cbr*{\frac{1}{4L} \frac{1}{40 \delta E}}$ and $1 \geq a \geq 1152 E^2 \delta^2 \eta_l^2$,
  \begin{equation}
  \label{eq:shufflemvr-momentum-bound}
  \begin{split}
      \E{\norm{h^+}^2} &\leq \left(1-\frac{23a}{24}\right) \norm{h}^2 + 3a^2 \E{\norm*{ \nabla f_i(x) - \nabla f(x)}^2}\\
      &\quad +  16\eta_l^4E^2\delta^2 L^2 \frac{\sigma_i^2}{|\cD_i|} + 8\eta_l^2\delta^2 E \sum_{e=0}^{E-1} \E{\norm*{\nabla f(y_{i,e})}^2} \,.
  \end{split}
  \end{equation}
\end{lemma}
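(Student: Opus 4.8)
The plan is to derive a one-step recursion for the momentum error $h \eqdef m - \nabla f(x)$ by combining the \emph{unbiasedness} of the \mvr momentum estimator with the variance decomposition \eqref{eq:var_decomposition}, and then to close the recursion by feeding in the drift bound of Lemma~\ref{lem:fedshufflemvr-distance-bound}. First I would substitute the momentum update \eqref{eq:mvr_mom_update} into $h^+ = m^+ - \nabla f(x^+)$. Because a single client $i$ is sampled with probability $p_i = w_i$, the aggregation weights $w_i/p_i$ equal $1$ and the sampled gradient is unbiased, $\EE{}{\nabla f_i(\cdot)} = \nabla f(\cdot)$; writing $m = h + \nabla f(x)$ and collecting terms yields $h^+ = (1-a)h + A_i + B_i$, where $A_i \eqdef (\nabla f_i(x^+) - \nabla f_i(x)) - (\nabla f(x^+) - \nabla f(x))$ is a Hessian-similarity difference and $B_i \eqdef a(\nabla f_i(x) - \nabla f(x))$ is a gradient-dissimilarity term. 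Taking the conditional expectation over the freshly sampled client (all earlier randomness fixed) immediately gives $\EE{}{h^+} = (1-a)h$, so the estimator is unbiased up to the contraction factor $(1-a)$.

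Next I would apply \eqref{eq:var_decomposition} to split $\EE{}{\norm{h^+}^2} = (1-a)^2\norm{h}^2 + \EE{}{\norm{A_i + B_i}^2}$, and use the relaxed triangle inequality (Lemma~\ref{lem:norm-sum}) to separate $A_i$ and $B_i$. The dissimilarity piece contributes the $3a^2\E{\norm{\nabla f_i(x) - \nabla f(x)}^2}$ term directly. For the Hessian piece I would invoke Assumption~\ref{ass:hessian_sim}: representing $\nabla f_i(x^+) - \nabla f_i(x)$ and $\nabla f(x^+) - \nabla f(x)$ as integrals of the respective Hessians along the segment $[x, x^+]$ and applying \eqref{eqn:hessian_sim} gives $\norm{A_i}^2 \leq \delta^2 \norm{x^+ - x}^2$.

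The crux is then to control the per-round displacement $\norm{x^+ - x}^2$. With $\eta_g = 1$ and one sampled client this equals $\norm{y_{i,E} - x}^2$, the total local drift of the round, which I would expand over the $E$ epochs (relaxed triangle inequality, picking up a factor $E$), rewrite each epoch step through the momentum-corrected update $d_{i,e}$, and bound via the update-error estimate \eqref{eq:shufflemvr-update-bound} (Lemma~\ref{lem:shufflemvr-update-bound}) together with the random-reshuffling variance bound \eqref{eq:rr_var_nc} (Lemma~\ref{lem:rr_var_nc}). Feeding in Lemma~\ref{lem:fedshufflemvr-distance-bound} then replaces the residual $\sum_e \norm{y_{i,e} - x}^2$ by $\sum_e \E{\norm{\nabla f(y_{i,e})}^2}$ plus a controlled $\norm{h}^2$ contribution, the dissimilarity term, and the noise $\sigma_i^2/|\cD_i|$, producing exactly the three non-contraction terms on the right-hand side of \eqref{eq:shufflemvr-momentum-bound}.

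The main obstacle is the self-referential $\norm{h}^2$ that reappears through $\norm{x^+ - x}^2$: since the local updates use the momentum $m$, the drift bound of Lemma~\ref{lem:fedshufflemvr-distance-bound} carries a $\norm{h}^2$ term, so the Hessian piece $\delta^2\norm{x^+ - x}^2$ feeds a multiple of $E^2\eta_l^2\delta^2\norm{h}^2$ back into the recursion. Closing the argument requires this extra contribution to be dominated by the contraction margin, which is exactly where the step-size bound $\eta_l \leq \frac{1}{40\delta E}$ and the momentum lower bound $a \geq 1152 E^2\delta^2\eta_l^2$ enter: they force the self-referential coefficient to be at most $\frac{a}{24}$, so that $(1-a)^2 + \frac{a}{24} \leq 1 - \frac{23a}{24}$, giving the stated contraction factor. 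The remaining work — tracking the numerical constants through the relaxed triangle inequalities and verifying each step-size threshold — is routine bookkeeping.
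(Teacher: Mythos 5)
Your proposal follows essentially the same route as the paper's proof: the same decomposition of $h^+$ into the contraction $(1-a)h$, a Hessian-similarity difference bounded by $\delta^2\norm{x^+-x}^2$, and an $a^2$-weighted gradient-dissimilarity term; the same variance/independence split; the same use of Lemmas~\ref{lem:rr_var_nc}, \ref{lem:shufflemvr-update-bound} and \ref{lem:fedshufflemvr-distance-bound} to control $\norm{x^+-x}^2$; and the same observation that $a\geq 1152E^2\delta^2\eta_l^2$ makes the self-referential $48\eta_l^2E^2\delta^2\norm{h}^2$ feedback at most $\tfrac{a}{24}\norm{h}^2$, yielding the $1-\tfrac{23a}{24}$ contraction. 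The only (immaterial) discrepancies are the dropped $(1-a)$ factor on the Hessian-difference term and the evaluation point $x$ versus $x^+$ in the dissimilarity term, on which the paper itself is not fully consistent.
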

\begin{proof}
  Starting from the momentum update \eqref{eq:mvr_mom_update},
  \begin{align*}
    h^+ &= (1-a)h  \\
    &\quad + (1-a)\rbr*{ \nabla f_i(x^+) - \nabla f_j(x)) - \nabla f(x^+) + \nabla f(x)} \\
  &\quad + a\rbr*{ \nabla f_i(x^+) - \nabla f(x)}\,.
  \end{align*}
  Now, the first term does not have any information from current round and hence is statistically independent of the rest of the terms. Further, the rest of the terms have mean $0$. Hence, we can separate out the zero mean noise terms from the $h$ and then the relaxed triangle inequality Lemma~\ref{lem:norm-sum} to claim
  \begin{align*}
    \E{\norm{h^+}^2} &= (1-a)^2\E{\norm{h}^2}  \\
    &\quad + 2(1-a)^2\E{\norm*{ \nabla f_i(^+x) - \nabla f_i(x)) - \nabla f(x^+) + \nabla f(x)}^2} \\
  &\quad + 2a^2\E{\norm*{ \nabla f_i(x) - \nabla f(x)}^2} \\
    &\overset{\eqref{eqn:hessian_sim}}{\leq} (1-a)^2 \E{\norm{h}^2}  + 2(1-a)^2\delta^2\E{\norm{x^+ - x}^2} + 2a^2\E{\norm*{ \nabla f_i(x) - \nabla f(x)}^2} \,.
  \end{align*}
 Finally, note that $(1-a)^2 \leq (1-a)\leq 1$ for $a \in [0,1]$. Therefore,
  \begin{align*}
    \E{\norm{h^+}^2} &\leq (1-a) \E{\norm{h}^2}  + 2\delta^2\E{\norm{x^+ - x}^2} + 2a^2 \E{\norm*{ \nabla f_i(x) - \nabla f(x)}^2}\,.
  \end{align*}
   We can continue upper bounding $\E{\norm{x^+ - x}^2}$.
    \begin{align*}
    \E{\norm{x^+ - x}^2} &= \eta_l^2\E{\norm*{\sum_{e=0}^{E-1} \frac{1}{|\cD_i|}\sum_{j-1}^{|\cD_i|} d_{i,e, j-1}}^2}\\
    &= \eta_l^2\E{\norm*{\sum_{e=0}^{E-1} (d_{i,e} - \nabla f(y_{i,e})) - (g_{i,e} - \nabla f_i(y_{i, e}))}^2} \\
    &\overset{\eqref{eq:sum_upper}}{\leq} 2\eta_l^2E \sum_{e=0}^{E-1} \E{\norm*{ d_{i,e} - \nabla f(y_{i,e})}^2 + \norm*{g_{i,e} - \nabla f_i(y_{i, e})}^2} \\
    &\overset{\eqref{eq:rr_var_nc}}{\leq} 2\eta_l^2E \sum_{e=0}^{E-1} \E{\norm*{ d_{i,e} - \nabla f(y_{i,e})}^2 + 2\eta_l^2 L^2 \frac{\sigma_i^2}{|\cD_i|} + \eta_l^2 L^2 \norm*{d_{i,e}}^2} \\
    &\overset{\eqref{eq:sum_upper}}{\leq} 4\eta_l^4E^2 L^2 \frac{\sigma_i^2}{|\cD_i|} + 4\eta_l^2E \sum_{e=0}^{E-1} \E{\norm*{ d_{i,e} - \nabla f(y_{i,e})}^2}  + 4\eta_l^4 L^2 E  \sum_{e=0}^{E-1} \E{\norm*{\nabla f(y_{i,e})}^2} \\
    &\overset{\eqref{eq:shufflemvr-update-bound}}{\leq} 4\eta_l^4E^2 L^2 \frac{\sigma_i^2}{|\cD_i|} + 12\eta_l^2E^2 \norm*{h}^2 + 12\eta_l^2E^2a^2 \norm*{\nabla f_i(x) - \nabla f(x)}^2 \\
    &\quad  + 12\eta_l^2E \delta^2  \sum_{e=0}^{E-1} \E{\norm{y_{i,e} - x}^2}  + 4\eta_l^4 L^2 E  \sum_{e=0}^{E-1} \E{\norm*{\nabla f(y_{i,e})}^2} \\
    &\overset{\eqref{eq:fedshufflemvr-distance-bound}}{\leq} 4\eta_l^4E^2 L^2 \rbr*{1 + 48\eta_l^2E^2\delta^2} \frac{\sigma_i^2}{|\cD_i|} \\
    &\quad + 12\eta_l^2E^2 \rbr*{1 + 48\eta_l^2E^2\delta^2} \norm*{h}^2 + 12\eta_l^2E^2a^2 \rbr*{1 + 48\eta_l^2E^2\delta^2} \norm*{\nabla f_i(x) - \nabla f(x)}^2 \\
    &\quad + 4\eta_l^4 E \rbr*{L^2 + 48\delta^2E^2}  \sum_{e=0}^{E-1} \E{\norm*{\nabla f(y_{i,e})}^2} \\
    &\overset{\eqref{eq:fedshufflemvr-distance-bound}}{\leq} 8\eta_l^4E^2 L^2 \frac{\sigma_i^2}{|\cD_i|} + 24\eta_l^2E^2 \norm*{h}^2 + 24\eta_l^2E^2a^2\norm*{\nabla f_i(x) - \nabla f(x)}^2 \\
    &\quad + 4\eta_l^2 E \sum_{e=0}^{E-1} \E{\norm*{\nabla f(y_{i,e})}^2}\,,
  \end{align*}
  where the last inequality uses the upper bound on the step size $\eta_l$. Plugging this back to previous bound yields
  \begin{align*}
      \E{\norm{h^+}^2} &\leq \rbr*{1- a + 48\eta_l^2 E^2 \delta^2} \E{\norm{h}^2} + 3a^2 \E{\norm*{ \nabla f_i(x) - \nabla f(x)}^2}\\
      &\quad +  16\eta_l^4E^2\delta^2 L^2 \frac{\sigma_i^2}{|\cD_i|} + 8\eta_l^2\delta^2 E \sum_{e=0}^{E-1} \E{\norm*{\nabla f(y_{i,e})}^2} \,.
  \end{align*}
 The last step used the bound on the momentum parameter that $1 \geq a \geq 1152\eta_l^2\delta^2E^2$. Note that $\eta_l \leq \frac{1}{40\delta E}$ ensures that this set is non-empty.
\end{proof}

Now we can compute the overall progress.
\begin{lemma}\label{lem:shufflemvr-step-progress}
  For any client update step with step size $\eta_l \leq \min\cbr*{\frac{1}{4L}, \frac{1}{40 \delta E}}$, $a \geq 1152\eta_l^2\delta^2E^2$ and given that Assumptions~\ref{ass:smoothness}-\ref{ass:hessian_sim} hold with $P_i = 0$ for all $i \in [n]$ and $B = 1$ and one client is sampled with probabilities $\cbr*{w_i}$, we have
  \begin{equation}
  \label{eq:shufflemvr-step-progress}
   \frac{1}{RE}\sum_{r=0}^{R-1}\sum_{e=0}^{E-1}\E{\norm*{\nabla f(y^r_{i^r,e})}^2}  \leq \frac{5\Psi^0 }{\eta_l R E} +  25\eta_l^2 L^2 \sigma^2 + 255 a G^2\,,
  \end{equation}
  where $\Psi^r \eqdef (f(x^r) - f^\star) + \frac{288\eta_l}{23a} E \norm{m^r - \nabla f(x^r)}^2$ and $\sigma^2 \eqdef \frac{1}{|\cD|}\sum_{i=1}^n \sigma_i^2$.
\end{lemma}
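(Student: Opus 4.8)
The plan is to run a Lyapunov (potential) descent argument on $\Psi^r$, exploiting the fact that with a single sampled client $i^r$, $\eta_g = 1$ and $p_{i^r} = w_{i^r}$, the global update is exactly the local trajectory: $x^{r+1} = y^r_{i^r, E}$. Consequently $f(x^{r+1}) - f(x^r)$ telescopes along the virtual epoch iterates $y_{i,0} = x^r, y_{i,1}, \dots, y_{i,E} = x^{r+1}$, and I would first establish a per-round descent on $f$. Recalling (from the computation preceding Lemma~\ref{lem:shufflemvr-momentum-bound}) that one epoch moves the iterate by $y_{i,e+1} - y_{i,e} = -\eta_l\rbr{d_{i,e} + (g_{i,e} - \nabla f_i(y_{i,e}))}$, I apply $L$-smoothness epoch by epoch and split the step direction as $\nabla f(y_{i,e})$ plus the update error $d_{i,e} - \nabla f(y_{i,e})$ plus the random-reshuffling bias $g_{i,e} - \nabla f_i(y_{i,e})$. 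A Young-type inequality on the inner product turns the leading term into a useful $-\tfrac{\eta_l}{2}\norm{\nabla f(y_{i,e})}^2$, while the quadratic smoothness term is dominated using $\eta_l \leq \tfrac{1}{4L}$.

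The next step is to control the two error families. For the update error I invoke Lemma~\ref{lem:shufflemvr-update-bound}, producing $\norm{h}^2$, the drift $\delta^2\norm{y_{i,e} - x}^2$, and the heterogeneity $a^2\norm{\nabla f_i(x) - \nabla f(x)}^2$; for the reshuffling bias I invoke Lemma~\ref{lem:rr_var_nc}, producing $\eta_l^2 L^2\sigma_i^2/|\cD_i|$ and another $\norm{d_{i,e}}^2$. The drift terms $\sum_e\norm{y_{i,e} - x}^2$ are then eliminated using Lemma~\ref{lem:fedshufflemvr-distance-bound}, which feeds a small multiple ($\propto E^2\eta_l^2\delta^2$) of $\sum_e\norm{\nabla f(y_{i,e})}^2$ back into the bound together with further $\norm{h}^2$, heterogeneity and variance contributions. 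At this stage I have a per-round inequality of the schematic form $\E{f(x^{r+1})} \leq f(x^r) - c\,\eta_l\sum_e\E{\norm{\nabla f(y_{i,e})}^2} + A\,\eta_l E\norm{h}^2 + (\text{heterogeneity} + \text{variance})$.

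To close the recursion I add $\tfrac{288\eta_l}{23a}E\,\norm{h^{r+1}}^2$ to both sides and invoke the momentum contraction Lemma~\ref{lem:shufflemvr-momentum-bound}: its factor $(1 - \tfrac{23a}{24})$ yields a $-12\eta_l E\norm{h^r}^2$ decrease designed to absorb the $\norm{h}^2$ error left over from the descent, while the $8\eta_l^2\delta^2 E\sum_e\norm{\nabla f(y_{i,e})}^2$ term it reintroduces is kept subdominant precisely by $a \geq 1152\,\eta_l^2\delta^2 E^2$ and $\eta_l \leq \tfrac{1}{40\delta E}$, so that the net coefficient of $\sum_e\E{\norm{\nabla f(y_{i,e})}^2}$ stays negative of order $\eta_l$. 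This gives $\E{\Psi^{r+1}} \leq \Psi^r - \Theta(\eta_l)\sum_e\E{\norm{\nabla f(y_{i,e})}^2} + \cO(\eta_l^3 L^2 E\sigma^2) + \cO(\eta_l E a G^2)$, where taking expectation over the sampled client with probabilities $\cbr{w_i}$ converts $\sigma_i^2/|\cD_i|$ into $\sigma^2 = \tfrac{1}{|\cD|}\sum_i\sigma_i^2$ and, using gradient similarity with $B = 1$, bounds $\E{\norm{\nabla f_i(x) - \nabla f(x)}^2} = \sum_i w_i\norm{\nabla f_i(x) - \nabla f(x)}^2 \leq G^2$. Telescoping over $r = 0,\dots,R-1$, dropping $\Psi^R \geq 0$, and dividing by $\eta_l RE$ then yields the claimed rate after tracking constants.

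Finally, I expect the main obstacle to be the constant bookkeeping in the Lyapunov step: one must verify that the coefficient of $\sum_e\norm{\nabla f(y_{i,e})}^2$ remains strictly negative after it is reintroduced both by the distance bound (Lemma~\ref{lem:fedshufflemvr-distance-bound}) and by the momentum recursion (Lemma~\ref{lem:shufflemvr-momentum-bound}), which is exactly what forces the specific thresholds on $\eta_l$ and $a$ and the specific weight $\tfrac{288\eta_l}{23a}E$ in $\Psi^r$. The secondary subtlety is that the reshuffling gradient $g_{i,e}$ is biased (its conditional mean is not $\nabla f_i(y_{i,e})$), so the $g_{i,e} - \nabla f_i(y_{i,e})$ inner product cannot be zeroed in expectation and must instead be absorbed through Young's inequality and the second-moment bound of Lemma~\ref{lem:rr_var_nc}.
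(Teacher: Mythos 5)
Your proposal is correct and follows essentially the same route as the paper's proof: the same epoch-wise smoothness descent with the step split into $\nabla f(y_{i,e})$, the update error $d_{i,e}-\nabla f(y_{i,e})$ (Lemma~\ref{lem:shufflemvr-update-bound}), and the reshuffling bias $g_{i,e}-\nabla f_i(y_{i,e})$ (Lemma~\ref{lem:rr_var_nc}); the drift eliminated via Lemma~\ref{lem:fedshufflemvr-distance-bound}; the same Lyapunov weight $\tfrac{288\eta_l}{23a}E$ on $\norm{h}^2$ closed by Lemma~\ref{lem:shufflemvr-momentum-bound}; and the same final expectation over the sampled client, telescoping, and division by $\eta_l R E$. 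You also correctly identify the two genuine subtleties (the sign of the net coefficient on $\sum_e\norm{\nabla f(y_{i,e})}^2$ and the biasedness of $g_{i,e}$), which are exactly where the paper's constant bookkeeping lives.
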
 
\begin{proof}
  The assumption that $f$ is $L$-smooth implies a quadratic upper bound \eqref{eqn:quad-upper}.
  \begin{align*}
    \EE{e}{f(y_{i,e+1})} - f(y_{i,e}) &\leq -\eta_l\EE{e}{\inp*{\nabla f(y_{i,e})}{  \frac{1}{|\cD_i|}\sum_{j=1}^{|\cD_i|} d_{i,e,j-1}}} + \frac{L\eta_l^2}{2}\EE{e}{\norm*{  \frac{1}{|\cD_i|}\sum_{j=1}^{|\cD_i|} d_{i,e,j-1}}^2}\\
    &=-\frac{\eta_l}{2}\norm*{\nabla f(y_{i,e})}^2 -\frac{\eta_l}{2} \rbr*{1 - \eta_l L}\EE{e}{\norm*{  \frac{1}{|\cD_i|}\sum_{j=1}^{|\cD_i|} d_{i,e,j-1}}^2}  \\
    &\quad + \frac{\eta_l}{2}\EE{e}{\norm*{  \frac{1}{|\cD_i|}\sum_{j=1}^{|\cD_i|} d_{i,e,j-1} - \nabla f(y_{i,e})}^2} \\
    &\leq -\frac{\eta_l}{2}\norm*{\nabla f(y_{i,e})}^2  + \EE{e}{\frac{\eta_l}{2}\norm*{(d_{i,e} - \nabla f(y_{i,e})) - (g_{i,e} - \nabla f_i(y_{i, e}))}^2} \\
    &\overset{\eqref{eq:sum_upper}}{\leq} -\frac{\eta_l}{2}\norm*{\nabla f(y_{i,e})}^2  + \eta_l\norm*{d_{i,e} - \nabla f(y_{i,e})}^2  +  \eta_l\EE{e}{\norm*{g_{i,e} - \nabla f_i(y_{i, e})}^2}\\
    &\overset{\eqref{eq:rr_var_nc}}{\leq} -\frac{\eta_l}{2}\norm*{\nabla f(y_{i,e})}^2  + \eta_l\norm*{d_{i,e} - \nabla f(y_{i,e})}^2  +  2\eta_l^3 L^2 \frac{\sigma_i^2}{|\cD_i|} + \eta_l^3 L^2 \norm*{d_{i,e}}^2\\
    &\overset{\eqref{eq:sum_upper}}{\leq} -\frac{\eta_l}{2}\rbr*{1 - 4\eta_l^2L^2}\norm*{\nabla f(y_{i,e})}^2  + 2\eta_l\norm*{d_{i,e} - \nabla f(y_{i,e})}^2  +  2\eta_l^3 L^2 \frac{\sigma_i^2}{|\cD_i|}\\
    &\overset{\eqref{eq:shufflemvr-update-bound}}{\leq} -\frac{\eta_l}{2}\rbr*{1 - 4\eta_l^2L^2}\norm*{\nabla f(y_{i,e})}^2 +  2\eta_l^3 L^2 \frac{\sigma_i^2}{|\cD_i|}\\
    &\quad + 6\eta_l\rbr*{\norm*{h}^2 + \delta^2\norm{y_{i,e} - x}^2 + a^2\norm*{\nabla f_i(x) - \nabla f(x)}^2}\,.
  \end{align*}

  The first equality used the fact that for any $a, b$: $2 \inp{a}{b} = \norm{a-b}^2 - \norm{a}^2 - \norm{b}^2$. The second term in the first equality can be removed since $\eta_l \leq \frac{1}{L}$. 
  
Applying this inequality recursively with full expectation over the current communication round 
\begin{align*}
    \E{f(x^+)} - f(x) &\leq -\frac{\eta_l}{2}\rbr*{1 - 4\eta_l^2L^2}\sum_{e=0}^{E-1}\norm*{\nabla f(y_{i,e})}^2 +  2\eta_l^3 L^2 E\frac{\sigma_i^2}{|\cD_i|} + 6 \eta_l\delta^2\sum_{e=0}^{E-1} \norm{y_{i,e} - x}^2\\
    &\quad + 6E\eta_l\rbr*{\norm*{h}^2 + a^2\norm*{\nabla f_i(x) - \nabla f(x)}^2} \\
    &\overset{\eqref{eq:fedshufflemvr-distance-bound}}{\leq}-\frac{\eta_l}{2}\rbr*{1 - 4\eta_l^2L^2 - 192\eta_l^2\delta^2E^2}\sum_{e=0}^{E-1}\norm*{\nabla f(y_{i,e})}^2 +  2\eta_l^3 L^2 E \rbr*{1 + 48\eta_l^2E^2\delta^2}\frac{\sigma_i^2}{|\cD_i|} \\
    &\quad + 6E\eta_l\rbr*{1 + 192\eta_l^2E^2\delta^2}\rbr*{\norm*{h}^2 + a^2\norm*{\nabla f_i(x) - \nabla f(x)}^2} \\
    &\leq -\frac{\eta_l}{4}\sum_{e=0}^{E-1}\norm*{\nabla f(y_{i,e})}^2 +  4\eta_l^3 L^2 E \frac{\sigma_i^2}{|\cD_i|} + 12E\eta_l\rbr*{\norm*{h}^2 + a^2\norm*{\nabla f_i(x) - \nabla f(x)}^2}\,.
 \end{align*}
  Adding an extra term  $12E\frac{24\eta_l}{23a}\norm{h^+}^2$ term results to
  \begin{align*}
    \E{f(x^+) + 12E\frac{24\eta_l}{23a}\norm{h^+}^2} - f(x) &\overset{\eqref{eq:shufflemvr-momentum-bound}}{\leq} -\frac{\eta_l}{4}\rbr*{1 - 104\frac{\eta_l^2E^2\delta^2}{a}}\sum_{e=0}^{E-1}\norm*{\nabla f(y_{i,e})}^2 +  \rbr*{4 + 208 \frac{\eta_l^2E^2\delta^2}{a} } E \eta_l^3 L^2 \frac{\sigma_i^2}{|\cD_i|} \\
    &\quad + 12E\frac{24\eta_l}{23a}\norm{h}^2 + \rbr*{12\eta_l a^2 + 39\eta_l a}E\norm*{\nabla f_i(x) - \nabla f(x)}^2 \\
    &\leq -\frac{\eta_l}{5}\sum_{e=0}^{E-1}\norm*{\nabla f(y_{i,e})}^2 +  5E \eta_l^3 L^2 \frac{\sigma_i^2}{|\cD_i|} \\
    &\quad + 12E\frac{24\eta_l}{23a}\norm{h}^2 + 51\eta_l a E\norm*{\nabla f_i(x) - \nabla f(x)}^2\,.
   \end{align*}
  
  Taking the full expectation including the client selection step yields
 \begin{align*}
    \frac{5}{\eta_l E}\E{\Psi^+ - \Psi} &\overset{\eqref{eq:heterogeneity}}{\leq} -\frac{1}{E}\sum_{e=0}^{E-1}\E{\norm*{\nabla f(y_{i,e})}^2} +  25\eta_l^2 L^2 \sigma^2 + 255 a G^2\,.
\end{align*}
Applying this inequality recursively with the fact $\Psi \geq 0$ leads to the desired result.
\end{proof}

Equipped with Lemma~\ref{lem:shufflemvr-step-progress}, we have
  \begin{align*}
    \frac{1}{RE}\sum_{r=0}^{R-1}\sum_{e=0}^{E-1}\E{\norm*{\nabla f(y^r_{i^r,e})}^2}  \leq \frac{5\Psi^0 }{\eta_l R E} +  25\eta_l^2 L^2 \sigma^2 + 255 a G^2\,,
  \end{align*}
  where $\Psi^0 \eqdef (f(x^0) - f^\star) + \frac{288\eta_l}{23a}E\norm{m^0 - \nabla f(x^0)}^2$.

Further,  we need to control $\norm*{m^0 - \nabla f(x^0)}^2$. \citet{cutkosky2019momentum} show that by using time-varying step sizes, it is possible to directly control this error. Alternatively, \citet{tran2019hybrid} use a large initial accumulation for the momentum term. For the sake of simplicity, we will follow the latter approach similarly to the work of \citet{karimireddy2020mime}. Extension to the time-varying step size case is straightforward.
Suppose that we run the algorithm for $2R$ communication rounds wherein for the first $R$ rounds, we simply compute 
$
    m^0 = \frac{1}{R}\sum_{t=1}^{R}\nabla f_{i^r}(x^0)\,.
$
With this, we have 
$
\norm{m^0 - \nabla f(x^0)}^2 \leq \frac{G^2}{R}\,.
$
Thus, we have for the first round $r=0$
\[ \Psi^0 = (f(x^0) - f^\star) + \frac{288\eta_l}{23a}E\norm{m^0 - \nabla f(x^0)}^2 \leq (f(x^0) - f^\star) + \frac{288\eta_l EG^2}{23aR} \,.
\]
Together, this gives
\begin{align*}
    \frac{1}{RE}\sum_{r=0}^{R-1}\sum_{e=0}^{E-1}\E{\norm*{\nabla f(y^r_{i^r,e})}^2}  &\leq \frac{5(f(x^{0}) - f^\star)}{\eta_l E R}  + \frac{63G^2}{aR^2}   +  25\eta_l^2 L^2 \sigma^2 + 293760  G^2\,.
  \end{align*}
  The above equation holds for any choice of $\eta \leq \min\rbr*{\frac{1}{4L}, \frac{1}{40 \delta K}}$ and momentum parameter $a \geq 1152 E^2 \delta^2 \eta_l^2$. Set the momentum parameter as 
  \[
    a = \max\rbr*{1152 E^2 \delta^2 \eta_l^2 , \frac{1}{T}}
\]
    With this choice, we can simplify the rate of convergence as 
\[
    \frac{5(f(x^{0}) - f^\star)}{\eta_l E R}  + \frac{318G^2}{R} +   \eta_l^2 \rbr*{25L^2 \sigma^2 + 293760 E^2 \delta^2 G^2}\,.
\]
      Now let us pick 
\[
    \eta_l = \min\cbr*{ \frac{1}{4L}, \frac{1}{40 \delta E}, \rbr*{\frac{f(x^{0}) - f^\star}{E^3R(58752\delta^2 G^2 + 5\sigma^2 \frac{L^2}{E^2})}}^{1/3}}\,.
\]
For this combination of step size $\eta_l$ and $a$, the rate simplifies to
\[
    \frac{318 G^2}{R} + 390 \rbr*{\frac{(f(x^{0}) - f^\star) \rbr*{\delta^2 G^2 + \sigma^2 \frac{L^2}{E^2}}}{R^2}}^{1/3} +  \frac{20 (L + 10 \delta E) (f(x^{0}) - f^\star)}{ER}\,.
\]
  Since $E \geq \frac{L}{\delta}$ then $\frac{L^2}{E^2} \leq \delta^2$ and $L \leq E\delta$, which concludes the proof.

 \subsection{Discussion on Theoretical Assumptions}
 
 Note that in Theorem~\ref{thm:mvr_is_better}, we assume that $P_i = 0$ for all $i \in [n]$ and $B = 1$. These are not necessary assumptions, but we rather introduce these more restrictive requirements to match the assumptions that were used to obtain previous results. 
Furthermore, note that we require only one client to be sampled in each round.
Unfortunately, our results cannot be simply extended to the case with multiple clients sampled in each round. The problem is the aggregation step (line 15 in Algorithm~\ref{alg:FedShuffle_simple}) which involves averaging. For our analysis and all the previous works that show the improvement over non-local methods in the non-convex setting to work, one would need to assume that the average model performs not worse than the average output of the sampled client. Such a property was empirically observed in~\citep{mcmahan2017communication}; thus, is reasonable to assume. Another possibility is to replace the averaging by randomly selecting model from one of the sampled clients. We note that while the second option might be preferable in theory as it does not require an extra assumption, it might affect the client privacy and therefore not applicable in practice. 

\newpage

\begin{figure}[t]
	\centering
    \begin{algorithm}[H]
        \begin{algorithmic}[1]
        \STATE {\bfseries Input:}  initial global model $x^0$, global and local step sizes $\eta_g^r$, $\eta_l^r$,
        proper distribution $\cS$ 
        \FOR{each round $r=0,\dots,R-1$}
            \STATE server broadcasts $x^{r}$ to all clients $i\in \mathcal{S}^r \sim S$
            \FOR{each client $i\in \cS^r$ (in parallel)}
                \STATE initialize local model $y_{i, 0, 0}^r\leftarrow x^{r}$
                \FOR{$e=1,\dots, E_i$}
                    \STATE Sample permutation $\{\Pi^r_{i, e, 0}, \hdots, \Pi^k_{i, e, |\cD_i|-1}\}$ of $\{1, \hdots, |\cD_i| \}$
                    \FOR{$j=1,\dots, |\cD_i|$}
                        \STATE local step size $\eta_{l, i}^r = \nicefrac{\eta_l^r}{c_i}$
                        \STATE update $y_{i, e, j}^r = y_{i, e, j-1}^r - \eta_{l, i}^r \nabla f_{i\Pi^r_{i, e, j-1}}(y_{i, e, j-1}^r)$
                    \ENDFOR
                    \STATE $y_{i, e+1, 0}^r = y_{i, e, |\cD_i|}^r$
                \ENDFOR
                \STATE send $\Delta_i^r = y_{i,E_i, |\cD_i|}^r - x^r$ to server
            \ENDFOR
           \STATE  server computes $\Delta^r = \sum_{i\in \cS^r} \frac{\tw_i}{q^{\cS^r}_i}\Delta_i^r$
            \STATE server updates global model $x^{r+1} = x^{r} - \eta_g^r \Delta^r$
        \ENDFOR
        \end{algorithmic}
        \caption{\fedgen $(c, \tw, q)$}
        \label{alg:FedGen}
    \end{algorithm}
\end{figure}

\section{\fedgen: General Shuffling Method}
\label{sec:fedgen}

In this section, we introduce and analyze \fedgen. \fedgen\ is a class of algorithms parametrized by local and global step sizes $\cbr*{\eta_l^r}_{r=0}^{R-1}$ and $\cbr*{\eta_g^r}_{r=0}^{R-1}$, step size normalization $c = \cbr*{c_i}_{i=1}^n$, where each of its element $c_i$ is the step size normalization for client $i$, aggregation weights $\cbr*{\tw_i}_{i=1}^n$ and the aggregation normalization constants $\cbr*{\cbr*{q_i^{S^r}}_{i \in S, S^r \sim \cS}}_{r=0}^{R-1}$. Note that we allow the aggregation normalization constants to be non-deterministic. To our knowledge, such results are impossible to obtain with any known analysis, despite this being standard practice for \fedavg (where the update of each client is scaled by $w_i/(\sum_{j \in S}w_j)$), e.g., the default way to aggregate in Tensorflow Federated and other frameworks.
We include the pseudocode in Algorithm~\ref{alg:FedGen}. Later in this section, we show that \fedgen\ with its parametrization covers a wide variety of the FL algorithms in the form as they are implemented in practice.

\subsection{Convergence Analysis}

We start with the convergence analysis of \fedgen. We first introduce the function $\hf(x)$, which we show is the true objective optimized by Algorithm~\ref{alg:FedGen}:
\begin{equation}
    \label{eq:wrong_objective}
    \hf(x) \eqdef \sum_{i \in [n]} \hw_i f_i(x),\; \text{where } \hw_i \eqdef \frac{\tw_i |\cD_i|E_i}{W q_i c_i} \text{ with } \frac{1}{q_i} = \EE{\cS}{\frac{1}{q^{\cS}_i} 1_{i\in \cS}} \text{ and } W = \sum_{i \in [n]}  \frac{\tw_i |\cD_i|E_i}{q_i c_i}.
\end{equation}

We denote $\hx$ to be an optimal solution of $\hf$ and $\hfs$ to be its functional value.

Furthermore, \fedgen allows the normalization to depend on the sampled clients (line 16 of Algorithm~\ref{alg:FedGen}, which requires more general notion of variance. For this purpose, we introduce a $n \times n$ matrix $\mH$, where its elements $\mH_{i,j} = \EE{\cS}{\frac{q_i q_j}{q^\cS_i q^\cS_j} 1_{i,j\in \cS}}$ and $h$ is its diagonal with $h_i =  \EE{\cS}{\frac{q_i^2}{\rbr{q^\cS_i}^2} 1_{i\in \cS}}$. We further define vector $s \in \R^n$ to be such vector that it holds 
\[
\mH - ee^\top \preceq {\rm \bf Diag}(h_1 s_2, h_2 s_2,\dots, h_n s_n),
\]
where $e \in \R^n$ is all ones vector. Note that this is not an assumption as such upper-bound always exists due to Gershgorin circle theorem, e.g., for $s_i = n$ for all $i \in [n]$ this holds. Equipped with these extra quantities, we proceed with the convergence guarantees for (strongly-)convex and non-convex functions.

\begin{theorem}\label{thm:fedshuffle_gem-full}
Suppose that the Assumptions~\ref{ass:smoothness}-\ref{ass:bounded_var}
holds. Then, in each of the following cases, there exist weights
$\{v_r\}$ and local step sizes $\eta_l^r \eqdef \eta_l$
such that for any $\eta_g^r \eqdef \eta_g \geq 1$ the output of \fedgen\ (Algorithm~\ref{alg:FedGen})
\begin{equation}\label{eqn:fedgen-output}
    \bar x^R = x^{r} \text{ with probability } \frac{v_r}{\sum_{\tau}v_{\tau}} \text{ for } r\in\{0,\dots,R-1\} \,.
\end{equation}
satisfies
\begin{itemize}
        \item \textbf{Strongly convex:} $\cbr{f_{ij}}$ satisfy \eqref{eq:strong-convexity} for $\mu >0$, $\eta_l \leq \frac{1}{4\beta L \eta_g}$,  $R \geq\frac{4\beta L}{\mu}$ and $c_i \geq E_i |\cD_i|$ then
        \[
\E{\hf(\bar{x}^R) - \hf(\hx)} \leq
\tilde\cO\rbr*{\frac{M_1G^2}{\mu R}  + \frac{(1 + P^2)G^2 + \sigma^2}{\mu^2 R^2 \eta_g^2} + \mu D^2 \exp(-\frac{\mu}{8\beta L} R)}\,,
    \]
    \item \textbf{General convex:} $\cbr{f_{ij}}$ satisfy \eqref{eq:strong-convexity} for $\mu = 0$, $\eta_l \leq \frac{1}{4\beta L \eta_g}$, $R \geq 1$ and $c_i \geq E_i |\cD_i|$ then
    \[
\E{\hf(\bar{x}^R) - \hf(\hx)} \leq
\cO\rbr*{\frac{\sqrt{DM_1}G}{\sqrt{R}} + \frac{D^{2/3}((1 + P^2)G^2 + \sigma^2)^{1/3}}{R^{2/3}\eta_g^{2/3} }+ \frac{L D \beta}{R}}\,,
    \]
    \item \textbf{Non-convex:} $\eta_l \leq \frac{1}{4\beta L \eta_g}$, $R \geq 1$ and $c_i \geq E_i |\cD_i|$ then 
    \[
\E{\norm{\nabla \hf(\bar{x}^R)}^2} \leq \cO\rbr*{\frac{\sqrt{FM_1L}G}{\sqrt{R}} + \frac{F^{2/3}L^{1/3}((1 + P^2)G^2 + \sigma^2)^{1/3}}{R^{2/3}\eta_g^{2/3} }+ \frac{L F \beta}{R}}\,,
    \]
    \end{itemize}
    where $\beta \eqdef 1 + M_2 + (1+P)B + M_1 B^2$, $P^2 \eqdef \max_{i \in [n]}\frac{P_i^2}{3|\cD_i|E_i^2}$, $\sigma^2 \eqdef \sum_{i \in [n]}\hw_i\frac{\sigma^2_i}{3|\cD_i|E_i^2}$, $M_1 \eqdef \max_{i \in [n]} \cbr*{h_i s_i \hw_i}$, $M_2 \eqdef \rbr*{\sum_{i \in [n]} E_i |\cD_i|} \max_{i \in [n]} \cbr*{\frac{\tw_i}{W q_i c_i}}$, $D \eqdef \norm{x^0 - x^\star}^2$, and $F \eqdef f(x^0) - f^\star$.
\end{theorem}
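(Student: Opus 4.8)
The plan is to follow the same two-stage template used to prove Theorem~\ref{thm:fedavg-full}, but carried out at the level of the general algorithm \fedgen, with the crucial preliminary step of identifying the \emph{true} objective being minimized. First I would verify that, to first order, the aggregated server update is an unbiased estimate of a scaled gradient of $\hf$ defined in \eqref{eq:wrong_objective}: client $i$ running $E_i$ epochs with step size $\eta_l/c_i$ produces a drift $\Delta_i \approx -(\eta_l E_i|\cD_i|/c_i)\nabla f_i(x)$, and the bias correction $1/q_i = \EE{\cS}{(1/q_i^{\cS})1_{i\in\cS}}$ gives $\EE{\cS}{(\tw_i/q_i^{\cS})1_{i\in\cS}} = \tw_i/q_i$. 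Hence the expected aggregated update is $-\eta_l W\sum_i \hw_i \nabla f_i(x) = -\eta_l W\,\nabla \hf(x)$, which is why the guarantees are stated for $\hf$ rather than $f$; the \fedshuffle\ parameters ($\tw_i = w_i$, $q_i^{\cS} = p_i$, $c_i = E_i|\cD_i|$) give $q_i = 1$ and $\hw_i = w_i$, so $\hf = f$ and Theorem~\ref{thm:fedavg-full} is recovered.

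Next I would prove a one-round progress lemma analogous to Lemma~\ref{lem:fedavg-progress}. The inner-product term is handled exactly as before by applying the perturbed strong convexity Lemma~\ref{lem:magic} termwise (with $h = f_{ij}$, $x = y_{i,e,j-1}$, $z = x$, $y = \hx$), producing the descent term $-\tilde\eta(\hf(x)-\hfs)$, the contraction $-\tfrac{\mu}{4}\tilde\eta\norm{x-\hx}^2$, and a drift term $2L\tilde\eta\xi$. The second-moment term is where the generalization bites: the standard variance bound Lemma~\ref{LEM:UPPERV} assumes the deterministic normalization $1/p_i$, whereas \fedgen\ normalizes by the set-dependent $q_i^{\cS}$. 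I would therefore replace Lemma~\ref{LEM:UPPERV} by its analogue built on the matrix $\mH$ and diagonal $h$: after the variance decomposition one writes the second moment as $e^\top((\mH - ee^\top)\circ \mA^\top\mA)e$ and invokes $\mH - ee^\top \preceq \Diag{h\circ s}$ to obtain a bound of the form $\sum_i h_i s_i \hw_i^2\norm{\cdot}^2$, which is exactly what the sampling constant $M_1 = \max_i\{h_i s_i \hw_i\}$ records. The residual deterministic part of the update contributes the extra factor captured by $M_2$, and combining with Assumption~\ref{ass:grad_sim} (in the relaxed form \eqref{eq:heterogeneity-convex}) and smoothness yields a bound of the same shape as Lemma~\ref{lem:fedavg-progress}, with $M$ replaced by $M_1$ and $\beta$ enlarged by $M_2$.

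I would then reprove the bounded-drift Lemma~\ref{lem:fedavg-error-bound} for per-client epoch counts $E_i$ and normalization $c_i$. As before, $\xi$ is split via the relaxed triangle inequality Lemma~\ref{lem:norm-sum} into a noise part, controlled by $L$-smoothness into a self-referential term $\propto \eta_l^2 L^2 E_i^2\,\xi$, and a bias part; the variance reduction from reshuffling enters by applying the without-replacement bound \eqref{eq:sampling_wo_replacement} of Lemma~\ref{lem:sampling_wo_replacement} to the partial sums within each epoch, producing the $\frac{n-k}{k(n-1)}$-type shrinkage responsible for the $\sigma^2$ term being divided by $|\cD_i|E_i^2$. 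The condition $c_i \geq E_i|\cD_i|$ is precisely what keeps the effective per-client step size small enough that the self-referential coefficient is below $1$, so the recursion for $\xi$ closes and is absorbed into $\tfrac{9}{10}\tilde\eta(\hf(x)-\hfs)$ plus an additive noise term.

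Finally I would add the two lemmas, move the suboptimality gap to the left-hand side, and unroll the recursion: for $\mu>0$ apply the linear-rate Lemma~\ref{lem:constant} with weights $v_r = (1-\tfrac{\mu\tilde\eta}{2})^{1-r}$, and for $\mu=0$ and the non-convex case apply the sub-linear Lemma~\ref{lemma:general}, using for non-convexity the descent-lemma variant (as in the proof of Theorem~\ref{thm:fedavg-full}) that replaces $\norm{x^0-\hx}^2$ by $\hf(x^0)-\hfs$ and introduces the extra $L$ factor. I expect the main obstacle to be the generalized variance bound with the set-dependent normalization $q_i^{\cS}$: showing that the object to control is $e^\top((\mH-ee^\top)\circ\mA^\top\mA)e$ and that the correction $1/q_i = \EE{\cS}{(1/q_i^{\cS})1_{i\in\cS}}$ makes the estimator unbiased requires careful bookkeeping of the cross terms, and it is here that the two new constants $M_1$ and $M_2$ must be teased apart from the single constant $M$ used in the special case.
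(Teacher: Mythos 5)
Your proposal matches the paper's own proof essentially step for step: the paper proves a generalized one-round progress lemma (Lemma~\ref{lem:fedavg-progress_gen}) using the perturbed strong convexity Lemma~\ref{lem:magic} and the generalized variance bound Lemma~\ref{LEM:UPPERV_gen} built on $e^\top((\mH-ee^\top)\circ\mA^\top\mA)e$ (yielding $M_1$ from the sampling variance and $M_2$ from the deterministic mean part), then a generalized drift bound (Lemma~\ref{lem:fedavg-error-bound_gen}) using the without-replacement Lemma~\ref{lem:sampling_wo_replacement} and the normalization $c_i\geq E_i|\cD_i|$, and finally unrolls via Lemmas~\ref{lem:constant} and~\ref{lemma:general} with the descent-lemma variant for the non-convex case. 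The only minor imprecision is attributing the closure of the self-referential drift recursion to $c_i\geq E_i|\cD_i|$ (that comes from the step-size bound; the condition on $c_i$ instead normalizes the accumulated per-client sums so the $G^2$ and $\sigma^2$ terms do not scale with $E_i|\cD_i|$), which does not affect the correctness of the plan.
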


The above rate is a generalization of results presented in Section~\ref{sec:convergence} and we refer the reader to this section for the detailed discussion.

\subsection{Special Cases}
\label{sec:comparison}
In this section, we show how \fedgen captures not only our proposed \fedshuffle, but also both \fedavg\ and \fednova\ with heterogeneous data, arbitrary client sampling, random reshuffling, non-identical local steps, stateless clients and server and local step sizes. To the best of our knowledge, our work is the first to provide such comprehensive analysis. 

We start with the \fedavg\ algorithm. Algorithm~\ref{alg:FedAvg} contains a detailed pseudocode of how it is usually implemented in practice. It is easy to verify that \fedgen\ covers this implementation with the following selection of parameters
\[
\hw = w,\quad c =\max_{i \in [n]}\cbr*{E_i |\cD_i| } e \quad \text{and} \quad q^{S^r}_i = \frac{c}{n}\sum_{j \in S^r} w_j.
\]

Unfortunately, it is not guaranteed that $\hf = f$. For instance, when all clients participate in each round, each client runs $E$ local epochs and weight of each client is proportional to its dataset size, i.e. $w_i = \nicefrac{|\cD_i|}{|\cD|}$, then $\hw_i = \nicefrac{|\cD_i|^2}{\sum_{j \in [n]} |\cD_j|^2}$, which means that the objective that we end up optimizing favours clients with larger amount of data. This inconsistency is partially removed when client sampling is introduced, since the clients with the larger number of data points are in average normalized with larger numbers, i.e. $q_i$ grows. The inconsistency is only fully removed when only one client is sampled uniformly at random, which does not reflect standard FL systems where a reasonably large number of clients is selected to participate in each round. 

\fednova\ tackles the objective inconsistency issue by re-weighting the local updates during the aggregation step. For the same example as before with the full participation, \fednova\ uses the same parameters as \fedavg with a difference that $\hw = \nicefrac{e}{n}$ that leads to $\hf = f$. Apart from full participation, \citet{wang2020tackling} analyze a client sampling scheme with one client per communication round sampled with probability $\nicefrac{|\cD_i|}{|\cD|}$.  

In our work, apart from providing a general theory for shuffling methods in FL, we introduce \fedshuffle that is motivated by insights obtained from our general theory provided in Theorem~\ref{thm:fedshuffle_gem-full}. \fedshuffle\ preserves the original objective aggregation weights, i.e., $\hw = w$, it uses unbiased normalization weights during the aggregation step and it sets $c_i = E_i |\cD_i|$. Such parameters choice guarantees that  $\hf = f$. 

\begin{remark}[\fedshuffle is better than \fednova]
\label{rem:fedshuffle_is_better}
\fedshuffle preserves the original objective aggregation weights, i.e., $\hw = w$, and it uses unbiased normalization weights during the aggregation step. In addition, contrary to \fedavg and \fednova, \fedshuffle\ uses $c_i = E_i |\cD_i|$ (\fedavg and \fednova require $c_i = \max_{j \in [n]}\cbr*{E_j |\cD_j| }$), which implies that it allows larger local step sizes than both \fedavg and \fednova, but it does not introduce any inconsistency as it still holds that $\hf = f$ as for \fednova. Differently from \fednova, \fedshuffle\ does not degrade the local progress made by clients by diminishing their contribution via decreased weights in the aggregation step. Still, it achieves the objective consistency by balancing the clients' progress at each step while preserving the worst-case convergence rate that can't be further improved~\cite{woodworth2020local, arjevani2015communication}.
\end{remark}

\subsection{Proof of Theorem~\ref{thm:fedshuffle_gem-full}}

 By $\EE{r}{\cdot}$, we denote the expectation conditioned on the all history prior to communication round $r$.

\begin{lemma}\textbf{\em (one round progress)}\label{lem:fedavg-progress_gen}
Suppose Assumptions \ref{ass:strong-convexity} -- \ref{ass:bounded_var} hold. For any constant step sizes $\eta_l^r \eqdef \eta_l$ and $\eta_l^r \eqdef \eta_l$  satisfying
$\eta_l \leq \frac{1}{(1 + M_2 + M_1 B^2)4L\eta_g}$ and effective step size
$\tilde\eta \eqdef W \eta_g \eta_l$, the updates of \fedshufflemvr satisfy
\[
    \E{\norm{x^{r} - \hx}^2 }\leq \left(1 - \frac{\mu
      \tilde\eta}{2}\right) \E {\norm{x^{r-1} - \hx}^2} - \tilde\eta \EE{r-1}{\hf(x^{r-1}) - \hfs}  \\
      +3L \tilde\eta\xi^r + 2\tilde\eta^2 M_1G^2\,,
\]
where $\xi_{r}$ is the drift caused by the local updates on the
clients defined to be
\[
    \xi^{r} \eqdef \frac{1}{W}\sum_{i=1}^n \sum_{e=1}^{E_i} \sum_{j=1}^{|\cD_i|} \frac{\tw_i}{q_i c_i}
    \EE{r-1}{\norm{y_{i,e, j-1}^r - x^{r-1}}^2}\,
\]
$M_1 \eqdef \max_{i \in [n]} \cbr*{h_i s_i \hw_i}$ and $M_2 \eqdef \rbr*{\sum_{i \in [n]} E_i |\cD_i|} \max_{i \in [n]} \cbr*{\frac{\tw_i}{W q_i c_i}}$.
\end{lemma}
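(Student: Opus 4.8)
The plan is to mirror the one-round analysis of \fedshuffle (Lemma~\ref{lem:fedavg-progress}), generalizing each ingredient to account for the per-client epoch counts $E_i$, step-size normalizers $c_i$, aggregation weights $\tw_i$, and the sample-dependent normalizers $q_i^{\cS}$. Writing the full server update as $\Delta = -\frac{\tilde\eta}{W}\sum_{i\in\cS}\frac{\tw_i}{q_i^{\cS}c_i}\sum_{e,j}\nabla f_{i\Pi_{i,e,j-1}}(y_{i,e,j-1})$ and expanding $\EE{r-1}{\norm{x+\Delta-\hx}^2}=\norm{x-\hx}^2+\cA_1+\cA_2$ with $\cA_1$ the cross term and $\cA_2=\EE{r-1}{\norm{\Delta}^2}$, I would treat the local trajectories $y_{i,e,j}$ as defined for every $i\in[n]$ and take expectations over the sampling $\cS$ and the reshuffling permutations separately.

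For $\cA_1$ I would take the expectation over $\cS$ first, using the unbiasedness identity $\EE{\cS}{\frac{1}{q_i^{\cS}}1_{i\in\cS}}=\frac1{q_i}$ from \eqref{eq:wrong_objective}, so each client contributes with weight $\frac{\tw_i}{q_ic_i}$. Then I apply Lemma~\ref{lem:magic} to every summand with $h=f_{i\Pi_{i,e,j-1}}$, perturbation point $y_{i,e,j-1}$, $z=x$, $y=\hx$, and average over the permutation, using that a full epoch visits each index exactly once so that $\frac1{|\cD_i|}\sum_j\nabla f_{i\Pi}(\cdot)$ collapses in expectation to $\nabla f_i$ and the function-value terms to $f_i$. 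The two identities $\hw_i=\frac{\tw_i|\cD_i|E_i}{Wq_ic_i}$ and $\sum_i\frac{\tw_iE_i|\cD_i|}{q_ic_i}=W$ are exactly what converts the weighted sum into $-2\tilde\eta(\hf(x)-\hfs)$, the strong-convexity term into $-\frac{\mu\tilde\eta}{2}\norm{x-\hx}^2$, and the smoothness remainder into $+2L\tilde\eta\,\xi^r$ for the $\xi^r$ in the statement.

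For $\cA_2$ I would apply the variance decomposition \eqref{eq:var_decomposition} over $\cS$ around the conditional mean $\sum_i\frac{u_i}{q_i}$ with $u_i=\frac{\tw_i}{c_i}\sum_{e,j}\nabla f_{i\Pi}(y)$, splitting $\cA_2$ into a sampling-variance term and a mean term. The sampling-variance term needs the generalization of Lemma~\ref{LEM:UPPERV} to the sample-dependent normalization: writing the estimator as $\sum_{i\in\cS}\frac{q_i}{q_i^\cS}b_i$ with $b_i=u_i/q_i$, expanding the square and using $\EE{\cS}{\frac{q_iq_j}{q_i^\cS q_j^\cS}1_{i,j\in\cS}}=\mH_{ij}$ together with $\mH-ee^\top\preceq\Diag{h\circ s}$ yields $\sum_i\frac{h_is_i}{q_i^2}\norm{u_i}^2$. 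In both the variance and mean terms I then split $\nabla f_{i\Pi}(y)=(\nabla f_{i\Pi}(y)-\nabla f_{i\Pi}(x))+\nabla f_{i\Pi}(x)$ via Lemma~\ref{lem:norm-sum}, use $L$-smoothness \eqref{eq:lip-grad} on the first piece (recovering multiples of $\xi^r$) and the fact that the permutation sum of $\nabla f_{i\Pi}(x)$ is the deterministic $E_i|\cD_i|\nabla f_i(x)$ on the second; collecting the gradient pieces through the $\hf$-version of \eqref{eq:heterogeneity} and then \eqref{eqn:smoothness} produces the $2\tilde\eta^2 M_1G^2$ term and a $\norm{\nabla\hf}^2$ coefficient of $1+M_1B^2$.

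The two constants enter through genuinely different mechanisms, which is the subtlety I would flag. The constant $M_1=\max_i\{h_is_i\hw_i\}$ emerges from the sampling-variance term after rewriting $\frac{h_is_i}{q_i^2}\frac{\tw_i^2(E_i|\cD_i|)^2}{c_i^2}=h_is_i\hw_i^2W^2$ and bounding $h_is_i\hw_i^2\le M_1\hw_i$; whereas $M_2=(\sum_iE_i|\cD_i|)\max_i\{\frac{\tw_i}{Wq_ic_i}\}$ appears only because the per-client aggregation weights $\frac{\tw_i}{q_ic_i}$ are non-uniform, so the \emph{unweighted} relaxed triangle inequality \eqref{eq:sum_upper} applied to the mean-term noise costs a factor $\sum_iE_i|\cD_i|$ and leaves the non-uniform weight to be maxed out into $M_2$. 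The main obstacle is precisely establishing this $\mH$-based variance inequality (the non-deterministic analogue of Lemma~\ref{LEM:UPPERV}) and keeping $M_1$ and $M_2$ cleanly separated through the relaxed-triangle bookkeeping; everything else is the same combination as in Lemma~\ref{lem:fedavg-progress}. Adding the $\cA_1$ and $\cA_2$ bounds and invoking $\eta_l\le\frac{1}{(1+M_2+M_1B^2)4L\eta_g}$ absorbs the $\tilde\eta^2L$-weighted $\xi^r$ and $(\hf-\hfs)$ coefficients, leaving the claimed inequality with drift coefficient $3L\tilde\eta$ and noise term $2\tilde\eta^2M_1G^2$.
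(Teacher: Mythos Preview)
Your proposal is correct and follows essentially the same route as the paper's own proof: the same $\cA_1/\cA_2$ split, Lemma~\ref{lem:magic} on the cross term after replacing $q_i^{\cS}$ by $q_i$ via unbiasedness, the variance decomposition plus the $\mH$-based analogue of Lemma~\ref{LEM:UPPERV} (which the paper states as Lemma~\ref{LEM:UPPERV_gen}) on the squared term, and the same add--and--subtract $\nabla f_{i\Pi}(x)$ bookkeeping that produces $M_1$ from the sampling variance and $M_2$ from the non-uniform weights in the mean term. Your identification of the general variance bound as the key new ingredient is exactly on point.
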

\begin{proof}
For a better readability of the proofs in one round progress, we drop the superscript that represents the current completed communication round $r-1$.

By the definition in Algorithm~\ref{alg:FedGen}, the update $\Delta$ can be written as 
\[
    \Delta = - \eta_g \sum_{i \in \cS} \frac{w_i}{q_i} \Delta_i = -\frac{\tilde\eta}{W} \sum_{i \in \cS} \sum_{e=1}^{E_i} \sum_{j=1}^{|\cD_i|} \frac{\tw_i}{q^\cS_i c_i}\nabla f_{i\Pi_{i,e,j-1}}(y_{i,e,j-1})\,.
\]
We adopt the convention that summation $\sum_{i \in \cM ,e,j}$ ($\cM$ is either $[n]$ or $\cS$) refers to the summations $\sum_{i \in \cM} \sum_{e=1}^{E_i} \sum_{j=1}^{|\cD_i|}$ unless otherwise stated. Furthermore, we denote $g_{i,e,j} \eqdef \nabla f_{i\Pi_{i,e,j-1}}(y_{i,e,j-1})$. Using above, we proceed as
\begin{align*}
    \EE{r-1}{\norm{x + \Delta - \hx}^2} &= \norm{x - \hx}^2 \underbrace{-2\EE{r-1}{\frac{\tilde\eta}{W}\sum_{i \in \cS ,e,j}\frac{\tw_i}{q^\cS_i c_i}\inp{g_{i,e,j}}{x - \hx}}}_{\cA_1} + \underbrace{\tilde\eta^2\EE{r-1}{\norm*{\frac{1}{W}\sum_{i \in \cS ,e,j}\frac{\tw_i}{q^\cS_i c_i} g_{i,e,j}}^2}}_{\cA_2}\,.
\end{align*}
To bound the term $\cA_1$, we apply Lemma~\ref{lem:magic} to each term of the summation with $h = f_{ij}$, $x = y_{i,e,j-1}$, $y = \hx$, and $z = x$. Therefore, 
\begin{align*}
    \cA_1 &= -\EE{r-1}{2\frac{\tilde\eta}{W} \sum_{i \in \cS ,e,j} \frac{\tw_i}{q^\cS_i c_i} \inp{g_{i,e,j}}{x - \hx}} \\
    &\leq \EE{r-1}{2\frac{\tilde\eta}{W}\sum_{i \in \cS ,e,j} \frac{\tw_i}{q^\cS_i c_i} \rbr*{f_{i\Pi_{i,e,j-1}}(\hx) - f_{i\Pi_{i,e,j-1}}(x) + L \norm{y_{i,e,j-1} - x}^2 - \frac{\mu}{4}\norm{x - \hx}^2}}\\
    &= -2\tilde\eta\rbr*{\hf(x) - \hf^\star + \frac{\mu}{4}\norm{x - \hx}^2} + 2L \tilde\eta\xi\,.
\end{align*}
    For the second term $\cA_2$, we have
    \begin{align*}
        \cA_2 &= \tilde\eta^2\EE{r-1}{\norm*{\frac{1}{W}\sum_{i \in \cS , e, j} \frac{\tw_i}{q^\cS_i c_i} g_{i, e, j}}^2}\\
        &\overset{\eqref{eq:var_decomposition}}{\leq} \tilde\eta^2 \EE{r-1}{\norm*{\frac{1}{W}\sum_{i \in \cS ,e,j} \frac{w_i}{q^\cS_i c_i} g_{i,e,j} - \frac{1}{W}\sum_{i \in [n] ,e,j} \frac{\tw_i}{q_i c_i} g_{i,e,j}}^2 + \norm*{\frac{1}{W}\sum_{i \in [n] ,e,j} \frac{\tw_i}{q_i c_i}  g_{i,e,j}}^2} \\
        &\overset{\eqref{eq:var_arbitrary_gen}}{\leq} \frac{\tilde\eta^2}{W^2} \EE{r-1}{\sum_{i \in [n]} \frac{h_i s_i\tw_i^2}{q_i^2 c_i^2} \norm*{ \sum_{e, j} g_{i,e,j}}^2 + \norm*{\sum_{i \in [n] ,e,j} \frac{\tw_i }{q_i c_i}  g_{i,e,j}}^2} \\
        &\overset{\eqref{eq:sum_upper}}{\leq} \frac{2\tilde\eta^2}{W^2}\EE{r-1}{\sum_{i \in [n]}\frac{h_i s_i\tw_i^2}{q_i^2 c_i^2}\norm*{ \sum_{e, j} g_{i,e,j} - \nabla f_{i\Pi_{i,e,j-1}}(x) }^2} + 2\frac{\tilde\eta^2}{W^2}\sum_{i \in [n]}\frac{h_is_i\tw_i^2 E_i^2 |\cD_i|^2}{q_i^2 c_i^2}\norm*{\nabla f_{i}(x)}^2 \\
        &\quad + \frac{2\tilde\eta^2}{W^2}\EE{r-1}{\norm*{\sum_{i \in [n] ,e,j}\frac{\tw_i }{q_i c_i} \cbr{g_{i,e,j} - \nabla f_{i\Pi_{i,e,j-1}}(x)}}^2 } + 2\tilde\eta^2 \norm*{\nabla \hf(x)}^2\\
        &\overset{\eqref{eq:sum_upper}}{\leq} \frac{2\tilde\eta^2}{W^2}\EE{r-1}{\sum_{i \in [n] ,e,j}\frac{h_i s_i\tw_i^2 E_i |\cD_i|}{q_i^2 c_i^2}\norm*{ g_{i,e,j} - \nabla f_{i\Pi_{i,e,j-1}}(x) }^2} + 2\frac{\tilde\eta^2}{W^2}\sum_{i \in [n]}\frac{h_is_i\tw_i^2 E_i^2 |\cD_i|^2}{q_i^2 c_i^2}\norm*{\nabla f_{i}(x)}^2 \\
        &\quad + \frac{2\tilde\eta^2}{W^2}\sum_{i \in [n] ,e,j}\frac{\tw_i^2 \rbr*{\sum_{i \in [n]} E_i|\cD_i|}}{q_i^2 c_i^2}\EE{r-1}{\norm*{g_{i,e,j} - \nabla f_{i\Pi_{i,e,j-1}}(x)}^2 } + 2\tilde\eta^2 \norm*{\nabla \hf(x)}^2\\
        &\overset{\eqref{eq:lip-grad}}{\leq} 2 M_1\tilde\eta^2 L^2 \xi + 2\tilde\eta^2 M_1 \sum_{i \in [n]}\hw_i\norm*{\nabla f_{i}(x)}^2  + 2 \tilde\eta^2 L^2 \xi M_2  + 2\tilde\eta^2 \norm*{\nabla \hf(x)}^2\\
        &\overset{\eqref{eq:heterogeneity}}{\leq} 2 \rbr*{M_2  + M_1}\tilde\eta^2 L^2 \xi + 2\tilde\eta^2 M_1 G^2 + 2\tilde\eta^2 \rbr*{1 + M_1B^2}  \norm*{\nabla \hf(x)}^2\\
        &\overset{\eqref{eqn:smoothness}}{\leq} 2 \rbr*{M_2 + M_1}\tilde\eta^2 L^2 \xi + 2\tilde\eta^2 M_1 G^2  + 4L\tilde\eta^2 \rbr*{1 + M_1 B^2}  (\hf(x) - \hf^\star).
    \end{align*}
  By plugging back the bounds on $\cA_1$ and $\cA_2$,
    \begin{multline*}
        \EE{r-1}{\norm{x + \Delta - \hx}^2} \leq \left(1 - \frac{\mu\tilde\eta}{2}\right)\norm{x - \hx}^2 - (2\tilde\eta - 4L\tilde\eta^2(M_1 B^2 + 1))(\hf(x) - \hf^\star)  \\
      +(1 + (M_1 + M_2)\tilde\eta L)2L \tilde\eta\xi + 2\tilde\eta^2 M_1 G^2\,.
    \end{multline*}
    The lemma now follows by observing that $4L\tilde\eta(M_1 B^2 + M_2 + 1) \leq 1$ and that $B \geq 1$.
\end{proof}

\begin{lemma}\textbf{\em (bounded drift)}\label{lem:fedavg-error-bound_gen}
Suppose Assumptions \ref{ass:smoothness} -- \ref{ass:bounded_var} hold. Then the updates of \fedshufflemvr
for any step size satisfying $\eta_l \leq \frac{1}{(1  + M_2 + (P+1) B + M_1 B^2))4L\eta_g}$ and $c_i \geq E_i|\cD_i|$ for all $i \in [n]$ have bounded drift:
\[
    3L\tilde\eta \xi_{r} \leq  \frac{9}{10} \tilde\eta(\hf(x^r) - \hf^\star) + 9\frac{\tilde\eta^3}{\eta_g^2} \left(\rbr*{1 + P^2}G^2 + \sigma^2\right)\,, 
\]
where $P^2 \eqdef \max_{i \in [n]}\frac{P_i^2}{3|\cD_i|E_i^2}$, $\sigma^2 \eqdef \sum_{i \in [n]} \hw_i \frac{\sigma^2_i}{3|\cD_i|E_i^2}$ and $M_1 \eqdef \max_{i \in [n]} \cbr*{h_i s_i \hw_i}$.
\end{lemma}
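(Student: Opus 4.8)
The plan is to follow the \fedshuffle\ bounded-drift argument (Lemma~\ref{lem:fedavg-error-bound}) almost verbatim, but carrying the heterogeneous weights $\tfrac{\tw_i}{q_i c_i}$, the per-client epoch counts $E_i$, and the step-size normalizations $c_i$ through every estimate. As before I drop the round superscript and write $g_{i,e,j}\eqdef\nabla f_{i\Pi_{i,e,j-1}}(y_{i,e,j-1})$. Since each client now takes steps of size $\eta_l/c_i$, the local iterate satisfies $y_{i,e,j-1}-x=-\tfrac{\eta_l}{c_i}\bigl(\sum_{l=1}^{e-1}\sum_{c=1}^{|\cD_i|}g_{i,l,c}+\sum_{c=1}^{j-1}g_{i,e,c}\bigr)$, so plugging this into the definition of $\xi^r$ gives a weighted sum $\tfrac{\eta_l^2}{W}\sum_i\tfrac{\tw_i}{q_ic_i^3}\sum_{e,j}\EE{r-1}{\norm{\cdots}^2}$. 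The one bookkeeping identity I will use repeatedly is $\tfrac{\tw_i}{q_ic_i}=\tfrac{\hw_i W}{E_i|\cD_i|}$, read off from \eqref{eq:wrong_objective}; it is what converts sums over the $E_i|\cD_i|$ local steps of client $i$ into $\hw_i$-weighted quantities and thereby makes the effective objective $\hf$ rather than $f$ appear on the right-hand side.

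First I would split each squared norm by the relaxed triangle inequality \eqref{eq:triangle} into a \emph{trajectory-deviation} part, where each $g_{i,e,c}$ is centered at $\nabla f_{i\Pi_{i,e,c-1}}(x)$, and a \emph{deterministic} part evaluated at the anchor $x$. For the deviation part I apply \eqref{eq:sum_upper} (the number of summands is at most $E_i|\cD_i|$) followed by $L$-smoothness \eqref{eq:lip-grad}, bounding it by a multiple of $\eta_l^2 L^2 E_i^2\,\xi$; here the condition $c_i\ge E_i|\cD_i|$ is exactly what prevents the $E_i$ factors from accumulating, so this term is self-referential and small. For the deterministic part I collapse the completed epochs into copies of $|\cD_i|\nabla f_i(x)$, then use the variance decomposition \eqref{eq:var_decomposition} and the without-replacement bound \eqref{eq:sampling_wo_replacement} on the partial epoch, and finally the bounded-variance assumption \eqref{eqn:bounded_var} to produce terms in $\sigma_i^2$, $P_i^2\norm{\nabla f_i(x)}^2$, and $\norm{\nabla f_i(x)}^2$. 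Because the \fedgen\ definitions of $P^2$ and $\sigma^2$ carry the extra $E_i^2$ in their denominators, the $G^2$ coefficient here collapses from the $E^2$ of \fedshuffle\ to the claimed $(1+P^2)$.

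Having reached an inequality of the schematic form $\xi\le 2\eta_l^2L^2E_i^2\,\xi + (\text{similarity term}) + 2\eta_l^2\bigl((1+P^2)G^2+\sigma^2\bigr)$, I would convert $\sum_i\hw_i\norm{\nabla f_i(x)}^2$ using the convex gradient-similarity bound \eqref{eq:heterogeneity-convex} and the smoothness--convexity relation \eqref{eqn:smoothness}, both read off for the reweighted objective $\hf$ with weights $\hw_i$; this turns the similarity term into a multiple of $\hf(x)-\hfs$. The step-size hypothesis $\eta_l\le\bigl((1+M_2+(P+1)B+M_1B^2)4L\eta_g\bigr)^{-1}$ guarantees $2\eta_l^2L^2E_i^2\le\tfrac18$ and keeps the coefficient of $\hf(x)-\hfs$ small enough, so I can absorb the self-referential $\xi$ on the left and then multiply through by $3L\tilde\eta$. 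Substituting $\eta_l=\tilde\eta/(W\eta_g)$ (so that $3L\tilde\eta\cdot\eta_l^2=3L\tilde\eta^3/(W^2\eta_g^2)$ and the surviving $W^2$ is cancelled by the normalization already present in $\xi$) yields precisely $\tfrac{9}{10}\tilde\eta(\hf(x^r)-\hfs)+9\tfrac{\tilde\eta^3}{\eta_g^2}\bigl((1+P^2)G^2+\sigma^2\bigr)$.

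The main obstacle is the simultaneous bookkeeping of the three heterogeneity sources --- the weights $\tfrac{\tw_i}{q_ic_i}$, the counts $E_i$, and the normalizations $c_i$ --- so that (i) the $E_i$-dependence introduced by the number of local steps is exactly cancelled by $c_i\ge E_i|\cD_i|$, leaving the $G^2$ coefficient at $1$ rather than $E^2$, and (ii) every residual weighted sum reassembles into the $\hw$-weighted objective $\hf$ via the identity $\tfrac{\tw_i}{q_ic_i}=\tfrac{\hw_iW}{E_i|\cD_i|}$. The conceptual subtlety, as opposed to the mechanical one, is that the gradient-similarity and smoothness properties must be applied to $\hf$ and the induced weights $\hw_i$ rather than to the nominal objective $f$; once that reinterpretation is fixed, the remaining estimates are routine generalizations of the \fedshuffle\ computation.
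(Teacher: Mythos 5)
Your proposal takes essentially the same route as the paper's proof: the same expansion of $\xi$ with weights $\tfrac{\tw_i}{q_i c_i^3}$, the same split via the relaxed triangle inequality into a trajectory-deviation term and a term anchored at $x$, the same use of \eqref{eq:var_decomposition}, \eqref{eq:sampling_wo_replacement} and \eqref{eqn:bounded_var} on the partial epoch, and the same (correctly identified) reinterpretation of the similarity assumptions for the reweighted objective $\hf$ with weights $\hw_i$. The one flaw is internal to your write-up: you record the recursion as $\xi\le 2\eta_l^2L^2E_i^2\,\xi+\cdots$ and then claim the step-size hypothesis yields $2\eta_l^2L^2E_i^2\le\tfrac18$, but the hypothesis of this lemma carries no $1/E_i$ factor (unlike the \fedshuffle-specific Lemma~\ref{lem:fedavg-error-bound}, whose bound $\eta_l\le\tfrac{1}{(\cdots)4LE\eta_g}$ does), so that inequality would not follow as stated. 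The correct contraction factor is $2\eta_l^2L^2$ with no $E_i^2$, precisely because the count $(e-1)|\cD_i|+j\le E_i|\cD_i|\le c_i$ cancels one power of $c_i$ when \eqref{eq:sum_upper} is applied and the outer sum over the $E_i|\cD_i|\le c_i$ local steps cancels the second, leaving exactly the weight $\tfrac{\tw_i}{q_ic_i}$ that defines $\xi$; this is how the paper arrives at $2\eta_l^2L^2\xi$. Since you explicitly name this cancellation as the role of $c_i\ge E_i|\cD_i|$ in the very same sentence, I read the $E_i^2$ as a leftover from the non-general lemma rather than a missing idea, but as written the displayed recursion and the claimed consequence of the step-size bound contradict each other, and the version with $E_i^2$ would not close under the stated hypothesis.
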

\begin{proof}
We adopt the same convention as for the previous proof, i.e., dropping superscripts, simplifying sum notation and having $g_{i,e,j} \eqdef \nabla f_{i\Pi_{i,e,j-1}}(y_{i,e,j-1})$. Therefore,
\begin{align*}
\xi &=  \frac{1}{W} \sum_{i \in [n], i, j} \frac{\tw_i}{q_i c_i} \EE{r-1}{\norm{y_{i,e, j-1}^r - x^{r-1}}^2}\\
    &= \frac{\eta_l^2}{W} \sum_{i \in [n], e, j} \frac{\tw_i}{q_i c_i^2} \EE{r-1}{\norm*{\sum_{l=0}^{e-1} \sum_{c=1}^{\cD_i} g_{i, l, c} + \sum_{c=1}^j g_{i, e, c}}^2}\\
    &\overset{\eqref{eq:triangle}}{\leq} \frac{2\eta_l^2}{W} \sum_{i \in [n], e, j} \frac{\tw_i}{q_i c_i^3} \EE{r-1}{\norm*{\sum_{l=0}^{e-1} \sum_{c=1}^{\cD_i} g_{i, l, c} - (e-1) |\cD_i| \nabla f_i (x) + \sum_{c=1}^j g_{i, e, c} - \nabla f_{i\Pi_{i,e,c-1}}(x)}^2} \\
    &\quad + \frac{2\eta_l^2}{W} \sum_{i \in [n], e, j} \frac{\tw_i}{q_i c_i^3} \EE{r-1}{\norm*{(e-1) |\cD_i| \nabla f_i (x) + \sum_{c=1}^j \nabla f_{i\Pi_{i,e,c-1}}(x)}^2}
\end{align*}
Next,  we upper bound each term separately. For the first term, we have
\begin{align*}
    &  \frac{2\eta_l^2}{W} \sum_{i \in [n], e, j} \frac{\tw_i}{q_i c_i^3} \EE{r-1}{\norm*{\sum_{l=0}^{e-1} \sum_{c=1}^{\cD_i} g_{i, l, c} - (e-1) |\cD_i| \nabla f_i (x) + \sum_{c=1}^j g_{i, e, c} - \nabla f_{i\Pi_{i,e,c-1}}(x)}^2} \\
     &\overset{\eqref{eq:sum_upper}}{\leq} \frac{2\eta_l^2}{W} \sum_{i \in [n], e, j} \frac{\tw_i ((e-1) |\cD_i| + j)}{q_i c_i^3} \EE{r-1}{\rbr*{\sum_{l=0}^{e-1} \sum_{c=1}^{\cD_i}\norm*{ g_{i, l, c} - \nabla f_{i\Pi_{i,e,j-1}}(x)}^2 + \sum_{c=1}^j \norm*{g_{i, e, c} - \nabla f_{i\Pi_{i,e,c-1}}(x)}^2}} \\
     &\overset{\eqref{eq:lip-grad}}{\leq} 2\eta_l^2 L^2 \xi \\
\end{align*}
For the second term,
\begin{align*}
    & \frac{2\eta_l^2}{W} \sum_{i \in [n], e, j} \frac{\tw_i}{q_i c_i^3} \EE{r-1}{\norm*{(e-1) |\cD_i| \nabla f_i (x) + \sum_{c=1}^j \nabla f_{i\Pi_{i,e,c-1}}(x)}^2} \\
    &\overset{\eqref{eq:var_decomposition}}{\leq} \frac{2\eta_l^2}{W} \sum_{i \in [n], e, j} \frac{\tw_i}{q_i c_i^3} \EE{r-1}{\norm*{((e-1) |\cD_i| + j) \nabla f_i (x)}^2 + \norm*{\sum_{c=1}^j \nabla f_{i\Pi_{i,e,c-1}}(x) - j\nabla f_i (x)}^2}\\
    &\overset{\eqref{eq:sampling_wo_replacement}}{\leq} \frac{2\eta_l^2}{W} \sum_{i \in [n], e, j} \frac{\tw_i}{q_i c_i^3} \rbr*{((e-1) |\cD_i| + j)^2\norm*{\nabla f_i (x)}^2 + \frac{j(|\cD_i| - j)}{(|\cD_i| - 1)} \frac{1}{|\cD_i|}\sum_{c=1}^{\cD_i} \norm*{\nabla f_{ic}(x) - \nabla f_i (x)}^2}\\
    &\overset{\eqref{eqn:bounded_var}}{\leq} \frac{2\eta_l^2}{W} \sum_{i \in [n], e, j} \frac{\tw_i}{q_i c_i^3} \rbr*{((e-1) |\cD_i| + j)^2\norm*{\nabla f_i (x)}^2 + \frac{j(|\cD_i| - j)}{(|\cD_i| - 1)}(\sigma^2 + P^2 \norm*{\nabla f_i (x)}^2)}\\
    &\leq 2\eta_l^2 \sum_{i \in [n]} \frac{|\cD_i|^2 E_i^2}{c_i^2} \hw_i \norm*{\nabla f_i (x)}^2 + \frac{|\cD_i + 1|)}{6 c_i^2} \hw_i(\sigma_i^2 + P_i^2 \norm*{\nabla f_i (x)}^2)\\
    &\leq 2\eta_l^2 \sum_{i \in [n]} \rbr*{1 + \frac{P_i^2}{3|\cD_i|E_i^2}} \hw_i \norm*{\nabla f_i (x)}^2 + \hw_i \frac{\sigma^2_i}{3|\cD_i|E_i^2}\\
     &\overset{\eqref{eq:heterogeneity-convex}}{\leq} 4LB^2 \eta_l^2\rbr*{1 + P^2} (\hf(x) - \hf^\star) + 2\eta_l^2 \rbr*{\rbr*{1 + P^2}G^2 + \sigma^2}.
\end{align*}
Combining the upper bounds
\begin{align*}
\xi &\leq  2\eta_l^2 L^2 \xi +  4LB^2 \eta_l^2\rbr*{1 + P} (\hf(x) - \hf^\star) + 2\eta_l^2 \rbr*{\rbr*{1 + P^2}G^2 + \sigma^2}.
\end{align*}
Since $2\eta_l^2 L^2 \leq \frac{1}{8}$ and $4LB^2 \eta_l^2\rbr*{1 + P^2} \leq \frac{1}{4L}$, therefore
\begin{align*}
3L\tilde\eta \xi &\leq  \frac{9}{10} \tilde\eta(\hf(x) - \hf^\star) + 9\tilde\eta \eta_l^2 \left(\rbr*{1 + P^2}G^2 + \sigma^2\right),
\end{align*}
which concludes the proof.
\end{proof}

Adding the statements of the above Lemmas \ref{lem:fedavg-progress_gen} and \ref{lem:fedavg-error-bound_gen}, we get
\begin{align*}
    \E{\norm{x^{r} - \hx}^2 }&\leq \left(1 - \frac{\mu\tilde\eta}{2}\right) \E {\norm{x^{r-1} - \hx}^2} - \frac{\tilde\eta}{10} \EE{r-1}{\hf(x^{r-1}) - \hf^\star} + 2\tilde\eta^2 M_1G^2 + \frac{9\tilde\eta^3}{\eta_g^2} \left(\rbr*{1 + P^2}G^2 + \sigma^2\right) \\
      &= \left(1 - \frac{\mu\tilde\eta}{2}\right) \E {\norm{x^{r-1} - \hx}^2} - \frac{\tilde\eta}{10} \EE{r-1}{f(x^{r-1}) - \hf^\star} + 2\tilde\eta^2 \rbr*{M_1G^2 + \frac{9\tilde\eta}{2\eta_g^2} \left(\rbr*{1 + P^2}G^2 + \sigma^2\right)}\,,
\end{align*}
Moving the $(f(x^{r-1}) - f(x^\star))$ term and dividing both sides by $\frac{\tilde\eta}{10}$, we get the following bound for any $\tilde\eta \leq \frac{1}{(1 + M_2 + (P+1) B + M_1 B^2))4L }$
\[
    \EE{r-1}{f(x^{r-1}) - f^\star} \leq  \frac{10}{\tilde\eta}\left(1 - \frac{\mu\tilde\eta}{2}\right) \E {\norm{x^{r-1} - x^\star}^2} + 20\tilde\eta \rbr*{M_1G^2 + \frac{9\tilde\eta}{2\eta_g^2} \left(\rbr*{1 + P^2}G^2 + \sigma^2\right)}\,.
\]
If $\mu = 0$ (weakly-convex), we can directly apply Lemma~\ref{lemma:general}. Otherwise, by averaging using weights $v_r = (1  - \frac{\mu \tilde\eta}{2})^{1-r}$ and using the same weights to pick output $\bar{x}^R$, we can simplify the above recursive bound (see proof of Lemma~\ref{lem:constant}) to prove that for any $ \tilde\eta$ satisfying $\frac{1}{\mu R} \leq \tilde\eta \leq \frac{1}{(1 + M_2 + (P+1) B + M_1 B^2))4L }$
\[
    \E{f(\bar{x}^{R})] - f(x^\star)} \leq \underbrace{10 \norm{x^0 - x^\star}^2}_{\eqdef d}\mu\exp(-\frac{\tilde\eta}{2}\mu R) + \tilde\eta \rbr[\big]{\underbrace{4M_1G^2}_{\eqdef c_1}} +  \tilde\eta^2 \cbr*{\underbrace{\frac{18}{\eta_g^2} \left(\rbr*{1 + P^2}G^2 + \sigma^2\right)}_{\eqdef c_2}}
\]
Now, the choice of $\tilde \eta =\min\cbr*{ \frac{\log(\max(1,\mu^2 R d/c_1))}{\mu R}, \frac{1}{(1 + M_2 + (P+1) B + M_1 B^2))4L }}$ yields the desired rate. 

For the non-convex case, one first exploits the smoothness assumption (Assumption~\ref{ass:smoothness}) (extra smoothness term $L$ in the first term in the convergence guarantee) and the rest of the proof follows essentially in the same steps as the provided analysis. The only difference is that distance to an optimal solution is replaced by functional difference, i.e., $\norm{x^0 - \hx}^2 \rightarrow \hf(x^0) - \hf^\star$. The final convergence bound also relies on Lemma~\ref{lemma:general}. For completeness, we provide the proof below.

We adapt the same notation simplification as for the prior cases, see proof of Lemma~\ref{lem:fedavg-progress_gen}. Since $\cbr{f_{ij}}$ are $L$-smooth then $f$ is also $L$ smooth. Therefore,

\begin{align*}
    \EE{r-1}{\hf(x + \Delta)}  &\leq \hf(x) + \EE{r-1}{\inp*{\nabla \hf(x)}{\Delta}} + \frac{L}{2}\EE{r-1}{\norm*{\Delta}^2} \\
    &= \hf(x) - \EE{r-1}{\inp*{\nabla \hf(x)}{\frac{\tilde \eta}{W}\sum_{i \in [n] ,e,j}\frac{\tw_i}{q_i c_i} g_{i,e,j}}} + \frac{L\tilde\eta^2}{2}\EE{r-1}{\norm*{\frac{1}{W}\sum_{i \in \cS ,e,j}\frac{\tw_i}{q^\cS_i c_i} g_{i,e,j}}^2} \\
    &\overset{\eqref{eq:triangle}}{\leq} \hf(x) - \frac{\tilde \eta}{2} \norm*{\nabla \hf(x)}^2 + \frac{\tilde \eta}{2} \EE{r-1}{\norm*{\frac{1}{W}\sum_{i \in [n] ,e,j}\frac{\tw_i}{q_i c_i} \rbr*{g_{i,e,j} - \nabla f_{i\Pi_{i,e,j-1}}(x)} }^2}\\
    &\quad  + \frac{L\tilde\eta^2}{2}\EE{r-1}{\norm*{\frac{1}{W}\sum_{i \in \cS ,e,j}\frac{\tw_i}{q^\cS_i c_i} g_{i,e,j}}^2} \\
    &\overset{\eqref{eq:lip-grad} + \eqref{eq:sum_upper}}{\leq} \hf(x) - \frac{\tilde \eta}{2} \norm*{\nabla \hf(x)}^2 + \frac{\tilde \eta L^2}{2} M_2\xi + \frac{L\tilde\eta^2}{2}\EE{r-1}{\norm*{\frac{1}{W}\sum_{i \in \cS ,e,j}\frac{\tw_i}{q^\cS_i c_i} g_{i,e,j}}^2}\;.
\end{align*}
We upper-bound the last term using the bound of $\cA_2$ in the proof of Lemma~\ref{lem:fedavg-progress} (note that this proof does not rely on the convexity assumption). Thus, we have
\begin{align*}
    \EE{r-1}{\hf(x + \Delta)}  &\leq \hf(x) - \frac{\tilde \eta}{2} \norm*{\nabla \hf(x)}^2 + \frac{\tilde \eta L^2}{2} M_2\xi + \rbr*{M_1 + M_2}\tilde\eta^2 L^3 \xi + \tilde\eta^2 M_1 G^2 L + \tilde\eta^2 \rbr*{1 + M_1B^2}L  \norm*{\nabla \hf(x)}^2\;.
\end{align*}
The bound on the step size $\tilde \eta \leq \frac{1}{(1 + M_2 + (P+1) B + M_1 B^2))4L}$ implies
\begin{align*}
    \EE{r-1}{f(x + \Delta)}  &\leq f(x) - \frac{\tilde \eta}{4} \norm*{\nabla f(x)}^2 + \frac{3\tilde \eta L^2}{4} \xi + \tilde\eta^2 M_1 G^2 L\;.
\end{align*}
Next, we reuse the partial result of Lemma~\ref{lem:fedavg-error-bound} that does not require convexity, i.e., we replace $\hf(x) - \hf^\star$ with $\frac{1}{2L}\norm{\nabla \hf(x)}^2$. Therefore,
\begin{align*}
    \EE{r-1}{\hf(x + \Delta)}  &\leq \hf(x) - \frac{\tilde \eta}{8} \norm*{\nabla \hf(x)}^2 + \frac{9\tilde\eta^3 L}{4\eta_g^2} \left(\rbr*{1 + P^2}G^2 + \sigma^2\right) + \tilde\eta^2 M_1 G^2 L\;.
\end{align*}
By adding $\hf^\star$ to both sides, reordering, dividing by $\tilde \eta$ and taking full expectation, we obtain
\begin{align*}
    \E{\norm*{\nabla \hf(x^r)}^2}  &\leq \frac{8}{\tilde \eta}\rbr*{\E{\hf(x^r) - \hf^\star} - \E{\hf(x^{r+1}) - \hf^\star}} + \tilde\eta \underbrace{8 M_1 G^2 L}_{c_1} + \tilde\eta^2 \underbrace{\frac{18 L}{\eta_g^2} \left(\rbr*{1 + P^2}G^2 + \sigma^2\right)}_{c_2}\;.
\end{align*}
Applying Lemma~\ref{lemma:general} concludes the proof.

\subsection{General Variance Bound}
\label{sec:gen_bound}

In this section, we provide a more general version of Lemma~\ref{LEM:UPPERV}. We recall the definition of the indicator function used in the proof of the aforementioned lemma, where $1_{i\in \cS} = 1$ if $i\in \cS$ and $1_{i\in \cS} = 0$ otherwise and, likewise, $1_{i,j\in \cS} = 1$ if $i,j\in \cS$ and  $1_{i,j\in \cS} = 0$ otherwise. Further, let $\mH$ be $n \times n$ matrix, where $\mH_{i,j} = \EE{\cS}{\frac{q_i q_j}{q^\cS_i q^\cS_j} 1_{i,j\in \cS}}$ and $h$ is its diagonal with $h_i =  \EE{\cS}{\frac{q_i^2}{\rbr{q^\cS_i}^2} 1_{i\in \cS}}$. We are ready to proceed with the lemma.

\begin{lemma}
\label{LEM:UPPERV_gen}
Let $\zeta_1,\zeta_2,\dots,\zeta_n$ be vectors in $\mathbb{R}^d$ and $w_1,w_2,\dots,w_n$ be non-negative real numbers such that $\sum_{i=1}^n w_i=1$. Define $\Tilde{\zeta} \eqdef \sum_{i=1}^n \frac{w_i}{q_i} \zeta_i$. Let $\cS$ be a proper sampling. If $s\in\mathbb{R}^n$ is  such that
        \begin{equation}\label{eq:ESO_gen}
            \mH - ee^\top \preceq {\rm \bf Diag}(h_1 s_2, h_2 s_2,\dots, h_n s_n),
        \end{equation}
        then
        \begin{equation}\label{eq:var_arbitrary_gen}
            \E{ \norm*{ \sum \limits_{i\in \cS} \frac{w_i\zeta_i}{g^\cS_i} - \Tilde{\zeta} }^2 } \leq \sum \limits_{i=1}^n h_is_i \norm*{\frac{w_i\zeta_i}{q_i}}^2,
        \end{equation}
        where the expectation is taken over $\cS$  and $e$ is the vector of all ones in $\mathbb{R}^n$. 
\end{lemma}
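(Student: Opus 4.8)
The plan is to follow the proof of Lemma~\ref{LEM:UPPERV} almost verbatim, replacing the deterministic normalization $p_i$ by the random normalization $q^\cS_i$, the averaged inverse probability by $q_i$, and the matrix $\mP - pp^\top$ by $\mH - ee^\top$. First I would verify that the estimator $X \eqdef \sum_{i\in\cS} \frac{w_i\zeta_i}{q^\cS_i}$ is unbiased for $\Tilde{\zeta}$. Writing $X = \sum_{i=1}^n \frac{w_i\zeta_i}{q^\cS_i} 1_{i\in\cS}$ and taking the expectation over $\cS$, linearity gives $\E{X} = \sum_{i=1}^n w_i\zeta_i \,\EE{\cS}{\frac{1}{q^\cS_i}1_{i\in\cS}} = \sum_{i=1}^n \frac{w_i}{q_i}\zeta_i = \Tilde{\zeta}$, where I invoke the definition $\frac{1}{q_i} = \EE{\cS}{\frac{1}{q^\cS_i}1_{i\in\cS}}$ from~\eqref{eq:wrong_objective}. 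This is precisely the point at which the non-deterministic aggregation constant enters and explains why the factor $1/q_i$ appears in $\Tilde{\zeta}$.

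Next I would apply the variance decomposition~\eqref{eq:var_decomposition} with $y = 0$, so that $\E{\norm{X - \Tilde{\zeta}}^2} = \E{\norm{X}^2} - \norm{\Tilde{\zeta}}^2$, and compute each term. Setting $a_i \eqdef \frac{w_i\zeta_i}{q_i}$ and $\boldsymbol{A} = [a_1,\dots,a_n] \in \mathbb{R}^{d\times n}$, expanding the square and using $\EE{\cS}{\frac{1}{q^\cS_i q^\cS_j} 1_{i,j\in\cS}} = \frac{\mH_{ij}}{q_iq_j}$, which is exactly the definition of $\mH$, yields $\E{\norm{X}^2} = \sum_{i,j} \mH_{ij}\, a_i^\top a_j$. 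Likewise $\norm{\Tilde{\zeta}}^2 = \sum_{i,j} a_i^\top a_j = \sum_{i,j}(ee^\top)_{ij}\, a_i^\top a_j$. Subtracting, the variance becomes $\sum_{i,j}(\mH - ee^\top)_{ij}\,a_i^\top a_j = e^\top\bigl((\mH - ee^\top)\circ \boldsymbol{A}^\top\boldsymbol{A}\bigr)e$, matching the structure of the final line of~\eqref{eq:variance_of_X}.

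The final and most delicate step is to pass from the matrix inequality~\eqref{eq:ESO_gen} to the scalar bound. Since $\boldsymbol{A}^\top\boldsymbol{A} \succeq 0$ and $\mathbf{Diag}(h_1 s_1,\dots,h_n s_n) - (\mH - ee^\top) \succeq 0$ by assumption, the Schur product theorem guarantees that the Hadamard product of these two positive semidefinite matrices is again positive semidefinite, so $e^\top\bigl((\mathbf{Diag}(h_1 s_1,\dots,h_n s_n) - (\mH - ee^\top))\circ \boldsymbol{A}^\top\boldsymbol{A}\bigr)e \geq 0$. Rearranging, and noting that $\mathbf{Diag}(h_1 s_1,\dots,h_n s_n)\circ \boldsymbol{A}^\top\boldsymbol{A}$ is the diagonal matrix with entries $h_i s_i\norm{a_i}^2$, gives $e^\top\bigl((\mH - ee^\top)\circ \boldsymbol{A}^\top\boldsymbol{A}\bigr)e \leq \sum_{i=1}^n h_i s_i \norm{a_i}^2 = \sum_{i=1}^n h_i s_i \norm{\frac{w_i\zeta_i}{q_i}}^2$, which is the claimed~\eqref{eq:var_arbitrary_gen}. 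The only genuinely new ingredient relative to Lemma~\ref{LEM:UPPERV} is the computation of $\E{\norm{X}^2}$ through the second-moment matrix $\mH$; everything else transcribes the earlier argument, so I expect no serious obstacle beyond carefully bookkeeping the random normalizers $q^\cS_i$ and their expectations $q_i$.
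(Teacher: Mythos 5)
Your proposal is correct and follows essentially the same route as the paper's own proof: unbiasedness via $\E{\frac{1}{q^\cS_i}1_{i\in\cS}}=\frac{1}{q_i}$, expansion of the second moment through $\mH$ to get $e^\top((\mH-ee^\top)\circ \boldsymbol{A}^\top\boldsymbol{A})e$, and then the Hadamard-product bound from \eqref{eq:ESO_gen}. The only difference is that you explicitly cite the Schur product theorem to justify the final inequality, which the paper leaves implicit.
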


\begin{proof}
 Let us first compute the mean of $ X \eqdef \sum_{i\in \cS}\frac{w_i \zeta_i}{q^\cS_i}$:
    \begin{align*}
            \E{X} 
            = \E{ \sum_{i\in \cS}\frac{w_i \zeta_i}{q^\cS_i} } 
            = \E{ \sum_{i=1}^n \frac{w_i \zeta_i}{q^\cS_i} 1_{i\in \cS} } 
            = \sum_{i=1}^n w_i \zeta_i \E{\frac{1}{q^\cS_i} 1_{i\in \cS}}  
            = \sum_{i=1}^n \frac{w_i}{q_i} \zeta_i
            = \Tilde{\zeta}.
    \end{align*}
    Let $\boldsymbol{A}=[a_1,\dots,a_n]\in \mathbb{R}^{d\times n}$, where $a_i=\frac{w_i\zeta_i}{q_i}$. We now write the variance of $X$ in a form which will be convenient to establish a bound:
    \begin{equation}\label{eq:variance_of_X_gen}
        \begin{split}
            \E{\norm{X - \E{X}}^2}
            &= \E{\norm{X}^2} - \norm{ \E{X} }^2 \\
            &=\E{\norm*{\sum_{i\in \cS}\frac{w_i \zeta_i}{q^\cS_i}}^2} - \norm{\Tilde{\zeta}}^2 \\
            &=\E{ \sum_{i,j} \frac{w_i\zeta_i^\top}{q^\cS_i}\frac{w_j\zeta_j}{q^\cS_j}  1_{i,j\in \cS} } - \norm{\Tilde{\zeta}}^2 \\
            &= \sum_{i,j} \mH_{ij} \frac{w_i\zeta_i^\top}{q_i}\frac{w_j\zeta_j}{q_j} - \sum_{i,j} \frac{w_i\zeta_i^\top}{q_i}\frac{w_j\zeta_j}{q_j} \\
            &= \sum_{i,j} (\mH_{ij}-1)a_i^\top a_j \\
            &= e^\top ((\mH -ee^\top) \circ \boldsymbol{A}^\top\boldsymbol{A})e.
        \end{split}
    \end{equation}
    
    Since, by assumption, we have $\mH -ee^\top \preceq \textbf{Diag}(h\circ s)$, we can further bound
    \begin{align*}
        e^\top ((\mH -ee^\top) \circ \boldsymbol{A}^\top\boldsymbol{A})e \leq e^\top (\textbf{Diag}(h\circ s)\circ \boldsymbol{A}^\top\boldsymbol{A}) e = \sum_{i=1}^n h_is_i\norm{a_i}^2.
    \end{align*}
\end{proof}

\clearpage
\section{Experimental Setup and Extra Experiments}
\label{sec:setup}

\begin{table}[t]
\centering
\caption{Baselines.}
\begin{tabular}{|c|c|c|c|}
\hline
 & \multirow{2}{*}{Local Steps/ Epochs} & W/ or W/O Replacement  & \multirow{2}{*}{Pseudocode} \\ 
 &  & Gradient Sampling$^a$&  \\ \hline 
\fedavgrr          & Epochs                       & W/O                                                                        &     Algorithm~\ref{alg:FedAvg}       \\ \hline
\multirow{2}{*}{\fedmin}           & \multirow{2}{*}{Steps}                        & \multirow{2}{*}{W/}                                                                       &     \citet{reddi2020adaptive}   \\
 & & & Algorithm 1 with $K = K_{\text{min}}^b$\\ \hline
\multirow{2}{*}{\fedmean}          & \multirow{2}{*}{Steps}                        & \multirow{2}{*}{W/}                                                                       &       \citet{reddi2020adaptive}  \\
& & & Algorithm 1 with $K = K_{\text{mean}}^c$ \\ \hline
\fednova          & Epochs                       & W/                                                                       &    \citet{wang2020tackling}         \\ \hline
\multirow{2}{*}{\fednovarr}       & \multirow{2}{*}{Epochs}                       & \multirow{2}{*}{W/O}  &    Algorithm~\ref{alg:FedGen} \\
& & & with \fednova aggregation       \\ \hline
\fedshuffle       & Epochs                       & W/O                                                                       &           Algorithm~\ref{alg:FedShuffle_simple}  \\ \hline
\end{tabular} \\
\raggedright
{\footnotesize $^a$ For real-world datasets, as it is commonly implemented in practice, all methods use without replacement sampling, i.e., random reshuffling.} \\
{\footnotesize $^b$ $K = K_{\text{min}}$ corresponds to the minimal number of steps across clients for \fedavgrr.}
\\
{\footnotesize $^c$ $K = K_{\text{mean}}$ corresponds to the average number of steps across clients for \fedavgrr, i.e.,  \fedmean and \fedavgrr perform the same total computation.} 
\end{table}

As mentioned in the previous section, we compare three algorithms -- \fedavg, \fednova, and our \fedshuffle. For \fedavg, we include an extra baseline that fixes objective inconsistency by running the same number of local steps per each client. To avoid extra computational burden on clients, i.e., to avoid stragglers, we select the number of local steps to be the minimal number of steps across clients. We refer to this baseline as \fedmin. Another \fedavg-type baseline that we compare to is \fedmean. Similarly to \fedmin, \fedmean\ runs the same number of local steps per each device with a difference that the number of steps is selected such that the total computation performed by \fedmean\ is equal to \fedavg\ with local epochs. We include this theoretical baseline to see the effect of heterogeneity in local steps and whether it is beneficial to run full epochs. We consider three extensions: (i) random reshuffling for \fedavg, and \fednova, since these methods were not originally analysed using biased random reshuffling but rather with unbiased stochastic gradients obtained by sampling with replacement, (ii) global momentum.
% and (iii) importance sampling.
Our implementation of \fedavg and \fednova follows \citep{reddi2020adaptive} and \citep{wang2020tackling}, respectively. In all of the experiments, the reported performance is average with one standard deviation over 3 runs with fixed seeds across methods for a fair comparison.

\textbf{Hyperparameters.} The server learning rate for all methods is kept at its default value of $1$. For all momentum methods, we pick momentum $0.9$. The rest of the parameters for the simple quadratic objective are selected based on the theoretical values for all the methods. For the other experiments (Shakespeare and CIFAR100), the local learning rate is tuned by searching over a grid $\cbr{0.1, 0.01, 0.001}$ and we use a standard learning schedule, where the step size is decreased by $10$ at $50\%$ and $75\%$ of all communication rounds. We note that \fedshuffle uses different local step sizes for each client $\eta_{l,i}^r = \nicefrac{\eta_l^r}{|\cD_i|}$.
For a fair comparison, we take the local learning rate for \fedshuffle to be the local learning rate for the client with the largest number of local steps as this follows directly from the theory. The global momentum as defined in \eqref{eq:mvr_loc_update} and \eqref{eq:mvr_mom_update} is only used in this form for the toy experiment. For the real-word experiments, we ignore the last term and we approximate $\nabla f_{i}(x^r) \approx \frac{1}{E|\cD_i|\eta_{l,i}^r} \Delta_i^r$ to avoid extra communication and computation. Note that for both \fednova\ and \fedshuffle, the estimator of $\nabla f(x^r)$ is proportional to $\Delta^r$ and, therefore, the server can directly update the global momentum using only this update. For \fedavg , this is not the case and it is closely related to the objective inconsistency issue. Despite this fact, we provide \fedavg\ with the proper momentum estimator that improves its performance as it reduces the objective inconsistency. 
Lastly, for neural network experiments we allow each method to benefit from the weight decay $\cbr{0, 1e-4}$ and the gradient clipping $\cbr{5, \infty}$, i.e., we do a grid search over all the combinations of step sizes, weight decays and gradient clippings. For the toy experiment, the batch size is $1$ and for neural nets, we select batch size to be $32$.

\begin{figure}[t]
    \centering
    \subfigure[Shakespeare w/ LSTM]{
        \includegraphics[width=0.235\textwidth]{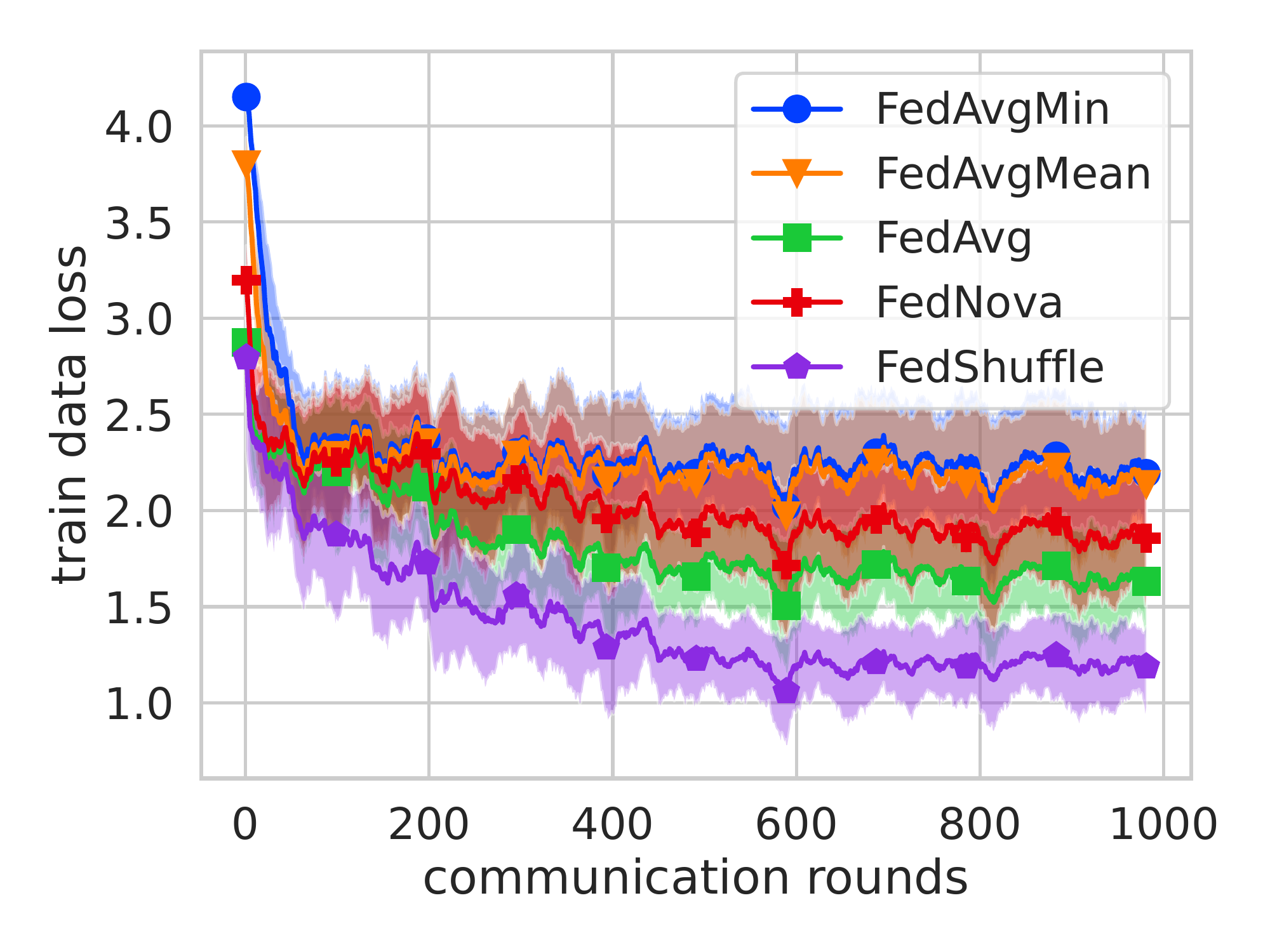}
        \includegraphics[width=0.235\textwidth]{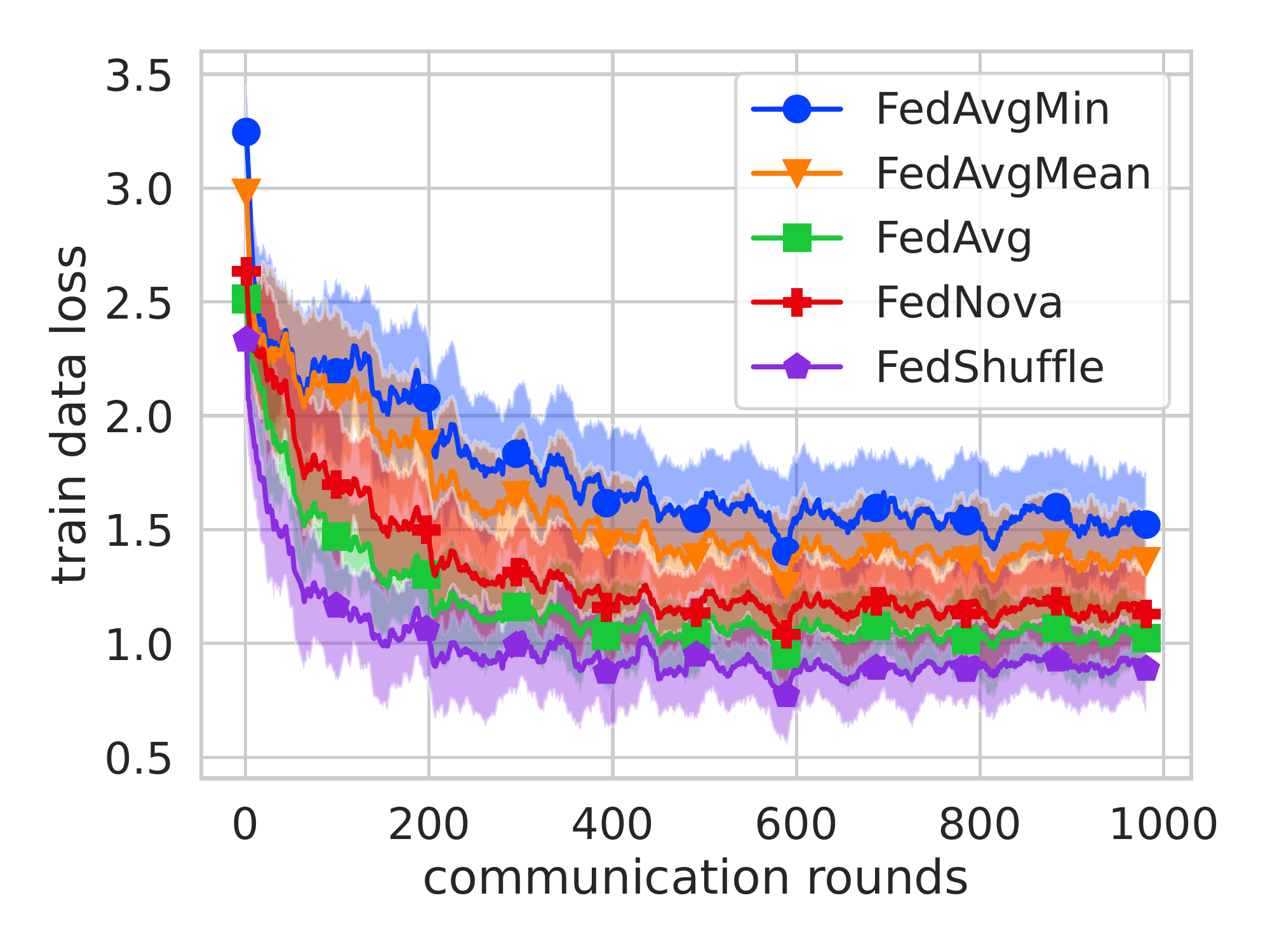}
        }
    \hfill
    \subfigure[CIFAR100 (TFF Split) w/ ResNet18]{
        \includegraphics[width=0.235\textwidth]{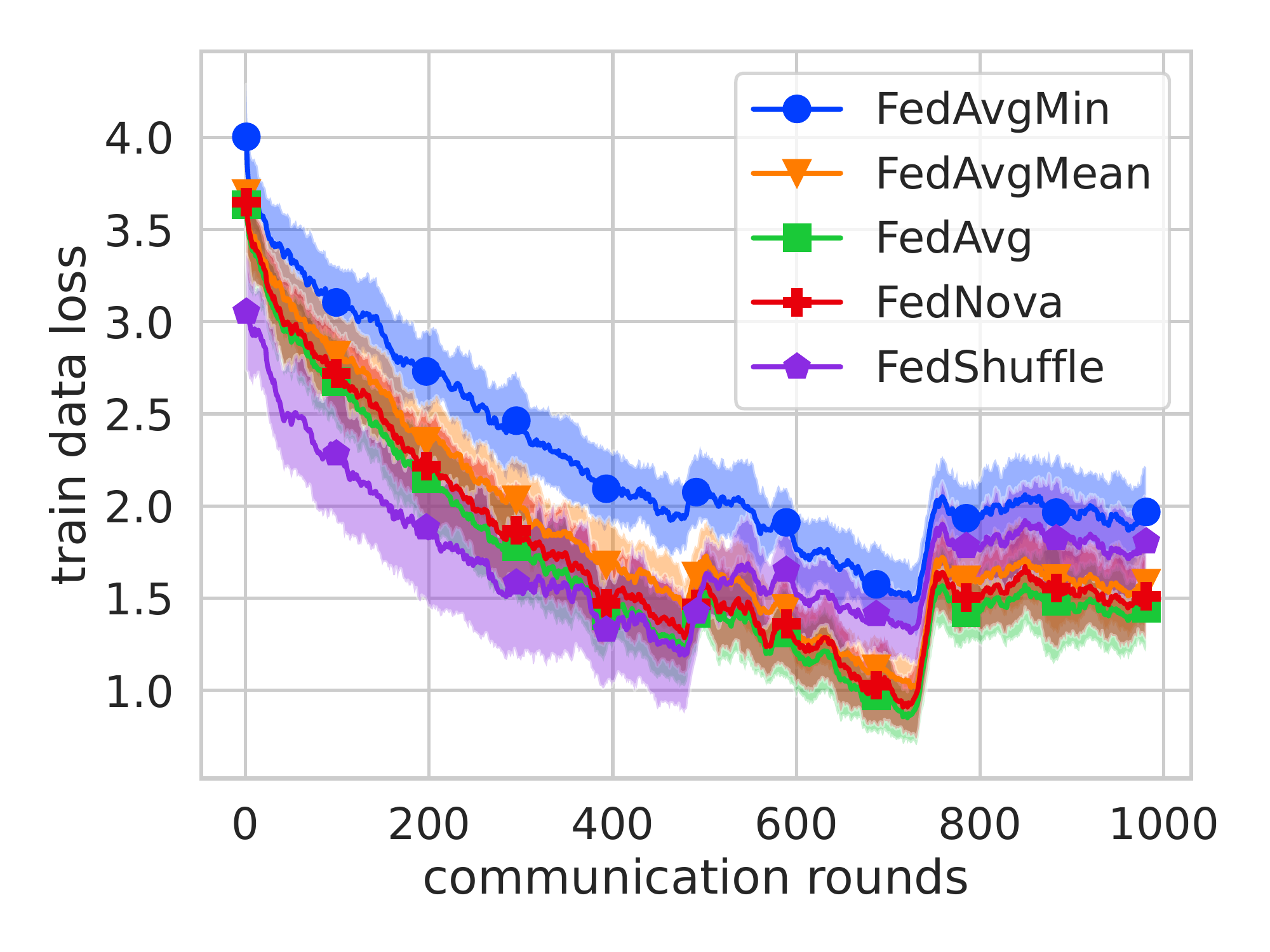} 
        \includegraphics[width=0.235\textwidth]{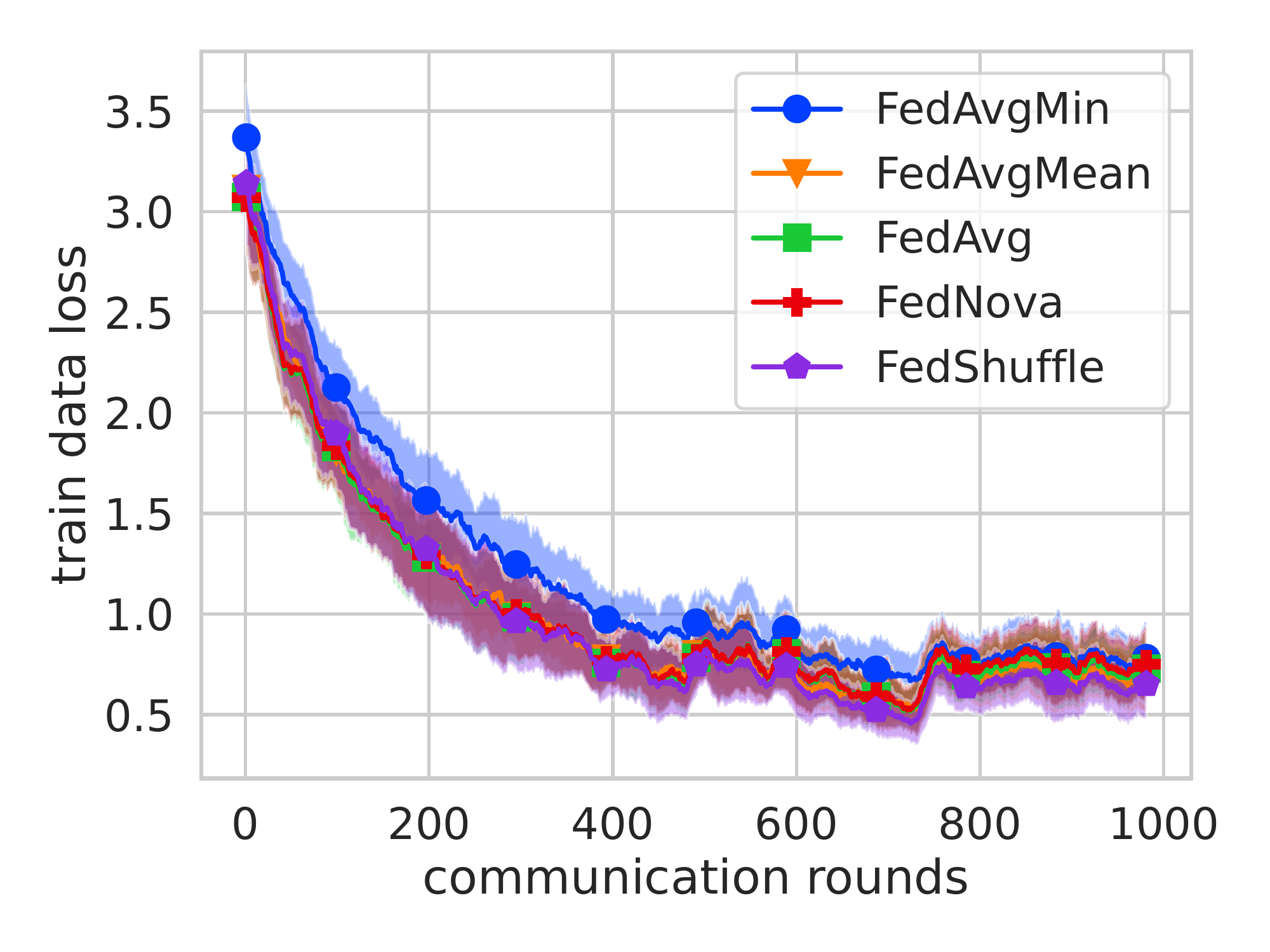}
        }
    \caption{Comparison of the moving average with slide 20 of the train data loss on \fedmin, \fedavg, \fednova, \fedshuffle on real-world datasets. 
    Partial participation: in each round 16 client is sampled uniformly at random.
    All methods use random reshuffling. For Shakespeare, number of local epochs is $2$ and for CIFAR100, it is 2 to 5 sampled uniformly at random at each communication round for each client. Left: Plain methods.
    Right: Global momentum $0.9$. \textit{Note that the displayed loss is only for the train data, and it does not include the l2 penalty; therefore, its informative value is limited}.
    }
    \label{fig:neural_nets_data_loss}
\end{figure}
\begin{figure}[t]
    \centering
    \includegraphics[width=0.4\textwidth]{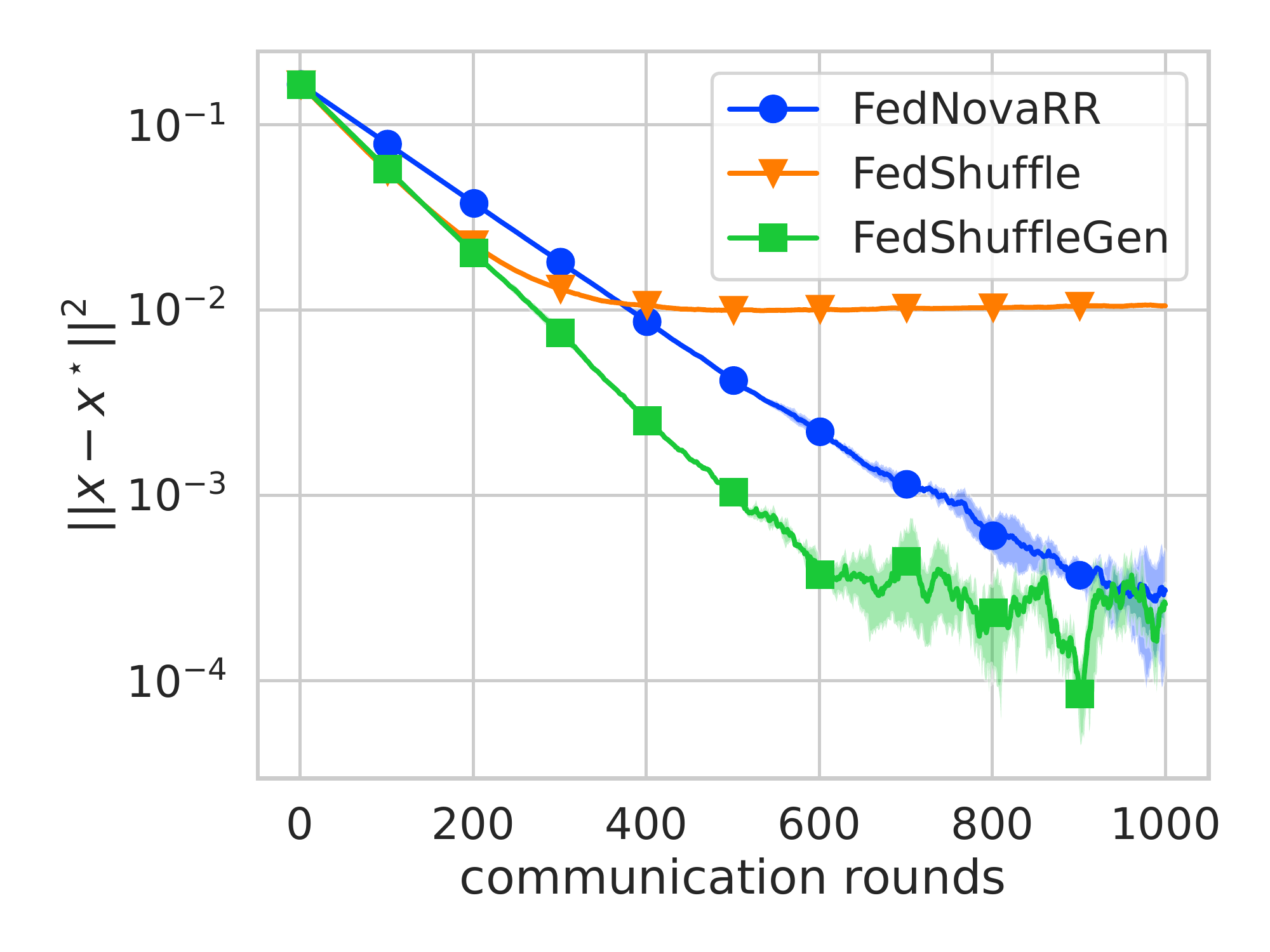}
    \caption{Quadratic objective as defined in \eqref{eq:quad_obj}. Each client runs one local epoch. A comparison of \fednova\ with reshuffling (\fednova\textsc{RR}) and \fedshuffle and \fedshuffle with \fednova-like aggregation weights to remove objective inconsistency caused by unfinished local rounds (\fedgen) with full participation.
    }
    \label{fig:toy_experiments_mix}
\end{figure}

\textbf{Datasets and models.} We run experiments on three datasets with three corresponding models. First, we consider a simple quadratic objective 
\begin{equation}
    \label{eq:quad_obj}
    \min_{x \in \R^6} \frac{1}{12} \sum_{i=1}^6 \norm{x - e_i}^2,
\end{equation}
where $e_i$ is the canonical basis vector with $1$ at $i$-th coordinate and $0$ elsewhere. We partition data into 3 clients, where the first client owns the first data point, the second is assigned $e_2$ and $e_3$ and the rest belongs to the third client. 

For the second task, we evaluate a next character prediction model on the Shakespeare dataset~\citep{caldas2018leaf}. This dataset consists of 715 users (characters of Shakespeare plays), where each example corresponds to a contiguous set of lines spoken by the character in a given play. For the model, an input character is first transformed into an 8-dimensional vector using an embedding layer, and this is followed by an LSTM~\citep{hochreiter1997long} with two hidden layers and 512-dimensional hidden state.

Last, we consider an image classification task on the CIFAR100 dataset~\citep{krizhevsky2009learning} with a ResNet18 model~\citep{he2016deep}, where we replace batch normalization layers with group normalization following ~\citep{hsieh2020noniid}. The data is partitioned across 500 clients, where each client owns 100 data points using a hierarchical Latent Dirichlet Allocation (LDA) process~\citep{li2006pachinko}.  We use equivalent splits to those provided in  TensorFlow Federated datasets~\citep{tffdatasets}.

Our implementation is in PyTorch~\citep{NEURIPS2019_9015}.

For the first experiment, we only consider performance on the train set (i.e., training loss), while for the real-world datasets we report performance on the corresponding validation datasets (validation accuracy). 

\textbf{Baselines.} For the experimental evaluation, we compare three methods — FedAvg, FedNova, and our FedShuffle—
with different extensions such as random reshuffling or momentum

\subsection{Hybrid approach to tackle system challenges}
% {\color{red}
As discussed in Section~\ref{sec:extension}, \fedgen allows us to run and analyze hybrid approaches of mixing step size scaling with update scaling to overcome the objective inconsistency caused by system challenges. In this example, we assume that each client does not finish the last iteration of the local training. To fix this issue, we employ \fedshuffle and \fednova-like aggregation that fixes objective inconsistency (we refer to this method as \fedgen since it is its special case); see \eqref{eq:wrong_objective} for the selection of weights. We use the quadratic objective as introduced above with full participation, and each client runs one local epoch. As it can be seen from Figure~\ref{fig:toy_experiments_mix}, as expected, \fedshuffle suffers from objective inconsistency since some clients do not finish the predefined number of local steps. However, our \fedgen based technique fixes objective consistency caused by system challenges while providing superior performance when compared to the plain \fednova method with reshuffling (\fednova\textsc{RR}).

\end{document}